\documentclass[a4paper,fleqn]{cas-sc}

\usepackage[authoryear,longnamesfirst]{natbib}

\usepackage{soul}
\usepackage{url}
\usepackage[utf8]{inputenc}
\usepackage[small]{caption}
\usepackage{graphicx}
\usepackage{amsmath}
\usepackage{amssymb}
\usepackage{amsthm}
\usepackage{booktabs}
\usepackage{algorithm}
\usepackage{aliascnt}
\usepackage{xspace}
\usepackage{cleveref}
\usepackage{xcolor}
\usepackage{esvect}
\usepackage[switch]{lineno}
\usepackage[appendix=inline,bibliography=common]{apxproof}
\usepackage{tikz}
\usetikzlibrary{arrows.meta}
\usepackage{multirow}

\usepackage[noend]{algpseudocode}

\allowdisplaybreaks
\newcommand{\FARBE}[1]{#1}
\newcommand{\ol}[1]{\overline{#1}}

\renewcommand{\leq}{\leqslant}

\renewcommand{\geq}{\geqslant}

\newcommand{\cspR}{\ensuremath{\mathit{CR}(\R)}}

\newcommand{\cspOf}[1]{\ensuremath{\mathit{CR}(#1)}}

\newcommand{\cspMIof}[2]{\ensuremath{\mathit{CR}^{#2}(#1)}} %
\newcommand{\solutionsR}{\ensuremath{\solutionsOfhelp{\cspR}}}
\newcommand{\solutionsOf}[1]{\ensuremath{\mathit{Sol}(#1)}}
\newcommand{\solutionsOfhelp}[1]{\ensuremath{\mathit{Sol}(#1)}}

\newcommand{\OCFsolutionsOf}[1]{\ensuremath{\mathit{Sol_{OCF}}(#1)}}

\newcommand{\OCFsolutionsRnMI}[1]{\ensuremath{\OCFsolutionsOf{\cspMIof{\R_n}{n-1}}}}

\newcommand{\induzierteOCF}[1]{\ensuremath{\kappa_{\!#1}}} %
\newcommand{\syntheticKB}[1]{\texttt{kb\_synth<\(n\)>\_c<\(2n\!\!-\!\!1\)>.pl}}

\newcommand{\cL}{\ensuremath{\mathcal L}}
\newcommand{\R}{\ensuremath{\mathcal R}}

\newcommand{\condAB}{(B | A)}

\newcommand{\notA}{\overline{A}}
\newcommand{\AnotB}{A \overline{B}}
\newcommand{\notB}{\overline{B}}
\newcommand{\condL}{\mbox{$(\cL \mid \cL)$}}

\newcommand{\naturals}{\mathbb{N}}

\newcommand{\Mod}{\mbox{\it Mod}\,}

\newcommand{\kappaiminus}[1]{\eta_{#1}}

\newcommand{\kappaiminusVektor}{\ensuremath{\vv{\eta}}}

\makeatletter
\ifx\centernot\undefined
\newcommand*{\centernot}{%
	\mathpalette\@centernot
}
\fi
\def\@centernot#1#2{%
	\mathrel{%
		\rlap{%
			\settowidth\dimen@{$\m@th#1{#2}$}%
			\kern.5\dimen@
			\settowidth\dimen@{$\m@th#1=$}%
			\kern-.5\dimen@
			$\m@th#1\not$%
		}%
		{#2}%
	}%
}
\makeatother
\DeclareRobustCommand\nmableitSymb{\mathrel|\mkern-.5mu\joinrel\sim} %
\newcommand{\nmableit}{\ensuremath{\mbox{$\,\nmableitSymb\,$}}} %
\newcommand{\notnmableit}{\ensuremath{\mbox{$\,\centernot\nmableitSymb\,$}}} %

\newcommand{\beweisendezeichen}%
{\penalty50\hspace*{0pt plus 1fil}\parfillskip=0pt\mbox{$\Box$}}

\newcommand{\fussnoteOhneMarkierung}[1]%
{%
\footnote{#1}%
\addtocounter{footnote}{-1}%
}

\newcommand{\satzCL}[2]{\ensuremath{(#1|#2)}}

\newlength{\abstand}
\newcommand{\symbolForInfSkept}{\ensuremath{\mathit{sk}}}

\newcommand{\symbolInd}{\ensuremath{\mathit{\mathit{O}}}}

\newcommand{\kbableitModMinName}[3]{\ensuremath{\nmableit^{\!\!\!\!#2,#3}_{\!#1}}}

\newcommand{\indinf}[3]{\ensuremath{\kbableitModMinName{\R}{\symbolForInfSkept}{\symbolInd}}}

\newcommand{\kb}{\ensuremath{\R}}

\newcommand{\perpendicular}%
{\begin{sideways}$\models$\end{sideways}}

\newcommand{\verify}{\ensuremath{\mathit{v}}}
\newcommand{\falsif}{\ensuremath{\mathit{f}}}
\newcommand{\neutra}{\ensuremath{\mathit{-}}}

\newcommand{\condLabel}[1]{\ensuremath{#1\!\!:}}

\newcommand{\druckeWeltVier}[4]{\ensuremath{#1\,#2\,#3\,#4}}

\newcommand{\omegaAvier}[4]{\ensuremath{\druckeWeltVier{#1}{#2}{#3}{#4}}}
\newcommand{\omegaBvier}[4]{\ensuremath{\druckeWeltVier{#1}{#2}{#3}{\ol{#4}}}}
\newcommand{\omegaCvier}[4]{\ensuremath{\druckeWeltVier{#1}{#2}{\ol{#3}}{#4}}}
\newcommand{\omegaDvier}[4]{\ensuremath{\druckeWeltVier{#1}{#2}{\ol{#3}}{\ol{#4}}}}
\newcommand{\omegaEvier}[4]{\ensuremath{\druckeWeltVier{#1}{\ol{#2}}{#3}{#4}}}
\newcommand{\omegaFvier}[4]{\ensuremath{\druckeWeltVier{#1}{\ol{#2}}{#3}{\ol{#4}}}}
\newcommand{\omegaGvier}[4]{\ensuremath{\druckeWeltVier{#1}{\ol{#2}}{\ol{#3}}{#4}}}
\newcommand{\omegaHvier}[4]{\ensuremath{\druckeWeltVier{#1}{\ol{#2}}{\ol{#3}}{\ol{#4}}}}

\newcommand{\omegaIvier}[4]{\ensuremath{\druckeWeltVier{\ol{#1}}{#2}{#3}{#4}}}
\newcommand{\omegaJvier}[4]{\ensuremath{\druckeWeltVier{\ol{#1}}{#2}{#3}{\ol{#4}}}}
\newcommand{\omegaKvier}[4]{\ensuremath{\druckeWeltVier{\ol{#1}}{#2}{\ol{#3}}{#4}}}
\newcommand{\omegaLvier}[4]{\ensuremath{\druckeWeltVier{\ol{#1}}{#2}{\ol{#3}}{\ol{#4}}}}
\newcommand{\omegaMvier}[4]{\ensuremath{\druckeWeltVier{\ol{#1}}{\ol{#2}}{#3}{#4}}}
\newcommand{\omegaNvier}[4]{\ensuremath{\druckeWeltVier{\ol{#1}}{\ol{#2}}{#3}{\ol{#4}}}}
\newcommand{\omegaOvier}[4]{\ensuremath{\druckeWeltVier{\ol{#1}}{\ol{#2}}{\ol{#3}}{#4}}}
\newcommand{\omegaPvier}[4]{\ensuremath{\druckeWeltVier{\ol{#1}}{\ol{#2}}{\ol{#3}}{\ol{#4}}}}

\newcommand{\omegaAbpfw}{\ensuremath{\omegaAvier{b}{p}{f}{w}}}
\newcommand{\omegaBbpfw}{\ensuremath{\omegaBvier{b}{p}{f}{w}}}
\newcommand{\omegaCbpfw}{\ensuremath{\omegaCvier{b}{p}{f}{w}}}
\newcommand{\omegaDbpfw}{\ensuremath{\omegaDvier{b}{p}{f}{w}}}
\newcommand{\omegaEbpfw}{\ensuremath{\omegaEvier{b}{p}{f}{w}}}
\newcommand{\omegaFbpfw}{\ensuremath{\omegaFvier{b}{p}{f}{w}}}
\newcommand{\omegaGbpfw}{\ensuremath{\omegaGvier{b}{p}{f}{w}}}
\newcommand{\omegaHbpfw}{\ensuremath{\omegaHvier{b}{p}{f}{w}}}

\newcommand{\omegaIbpfw}{\ensuremath{\omegaIvier{b}{p}{f}{w}}}
\newcommand{\omegaJbpfw}{\ensuremath{\omegaJvier{b}{p}{f}{w}}}
\newcommand{\omegaKbpfw}{\ensuremath{\omegaKvier{b}{p}{f}{w}}}
\newcommand{\omegaLbpfw}{\ensuremath{\omegaLvier{b}{p}{f}{w}}}
\newcommand{\omegaMbpfw}{\ensuremath{\omegaMvier{b}{p}{f}{w}}}
\newcommand{\omegaNbpfw}{\ensuremath{\omegaNvier{b}{p}{f}{w}}}
\newcommand{\omegaObpfw}{\ensuremath{\omegaOvier{b}{p}{f}{w}}}
\newcommand{\omegaPbpfw}{\ensuremath{\omegaPvier{b}{p}{f}{w}}}

\newlength{\spalteAbst}
\newlength{\spalteAbstGr}
\newlength{\spalteAbstGGr}
\newlength{\zeileAbst}
\newcommand{\ALvariableE}{\ensuremath{a}}
\newcommand{\ALvariableZ}{\ensuremath{b}}
\newcommand{\ALvariableD}{\ensuremath{c}}
\newcommand{\ALvariableV}{\ensuremath{d}}
\newcommand{\ALvariableF}{\ensuremath{e}}

\newlength{\AbstandZwischenLiteralen}
\newcommand{\setzeAbstandZwischenLiteralen}{\hspace{\AbstandZwischenLiteralen}}

\newcommand{\omegaFuenf}[1]{%
    \IfEqCase{#1}{%
        {01}{\ensuremath{\omegaAafuenf{\ALvariableE}{\ALvariableZ}{\ALvariableD}{\ALvariableV}{\ALvariableF}}}%
        {02}{\ensuremath{\omegaAbfuenf{\ALvariableE}{\ALvariableZ}{\ALvariableD}{\ALvariableV}{\ALvariableF}}}%
        {03}{\ensuremath{\omegaBafuenf{\ALvariableE}{\ALvariableZ}{\ALvariableD}{\ALvariableV}{\ALvariableF}}}%
        {04}{\ensuremath{\omegaBbfuenf{\ALvariableE}{\ALvariableZ}{\ALvariableD}{\ALvariableV}{\ALvariableF}}}%
        {05}{\ensuremath{\omegaCafuenf{\ALvariableE}{\ALvariableZ}{\ALvariableD}{\ALvariableV}{\ALvariableF}}}%
        {06}{\ensuremath{\omegaCbfuenf{\ALvariableE}{\ALvariableZ}{\ALvariableD}{\ALvariableV}{\ALvariableF}}}%
        {07}{\ensuremath{\omegaDafuenf{\ALvariableE}{\ALvariableZ}{\ALvariableD}{\ALvariableV}{\ALvariableF}}}%
        {08}{\ensuremath{\omegaDbfuenf{\ALvariableE}{\ALvariableZ}{\ALvariableD}{\ALvariableV}{\ALvariableF}}}%
        {09}{\ensuremath{\omegaEafuenf{\ALvariableE}{\ALvariableZ}{\ALvariableD}{\ALvariableV}{\ALvariableF}}}%
        {10}{\ensuremath{\omegaEbfuenf{\ALvariableE}{\ALvariableZ}{\ALvariableD}{\ALvariableV}{\ALvariableF}}}%
        {11}{\ensuremath{\omegaFafuenf{\ALvariableE}{\ALvariableZ}{\ALvariableD}{\ALvariableV}{\ALvariableF}}}%
        {12}{\ensuremath{\omegaFbfuenf{\ALvariableE}{\ALvariableZ}{\ALvariableD}{\ALvariableV}{\ALvariableF}}}%
        {13}{\ensuremath{\omegaGafuenf{\ALvariableE}{\ALvariableZ}{\ALvariableD}{\ALvariableV}{\ALvariableF}}}%
        {14}{\ensuremath{\omegaGbfuenf{\ALvariableE}{\ALvariableZ}{\ALvariableD}{\ALvariableV}{\ALvariableF}}}%
        {15}{\ensuremath{\omegaHafuenf{\ALvariableE}{\ALvariableZ}{\ALvariableD}{\ALvariableV}{\ALvariableF}}}%
        {16}{\ensuremath{\omegaHbfuenf{\ALvariableE}{\ALvariableZ}{\ALvariableD}{\ALvariableV}{\ALvariableF}}}%
        {17}{\ensuremath{\omegaIafuenf{\ALvariableE}{\ALvariableZ}{\ALvariableD}{\ALvariableV}{\ALvariableF}}}%
        {18}{\ensuremath{\omegaIbfuenf{\ALvariableE}{\ALvariableZ}{\ALvariableD}{\ALvariableV}{\ALvariableF}}}%
        {19}{\ensuremath{\omegaJafuenf{\ALvariableE}{\ALvariableZ}{\ALvariableD}{\ALvariableV}{\ALvariableF}}}%
        {20}{\ensuremath{\omegaJbfuenf{\ALvariableE}{\ALvariableZ}{\ALvariableD}{\ALvariableV}{\ALvariableF}}}%
        {21}{\ensuremath{\omegaKafuenf{\ALvariableE}{\ALvariableZ}{\ALvariableD}{\ALvariableV}{\ALvariableF}}}%
        {22}{\ensuremath{\omegaKbfuenf{\ALvariableE}{\ALvariableZ}{\ALvariableD}{\ALvariableV}{\ALvariableF}}}%
        {23}{\ensuremath{\omegaLafuenf{\ALvariableE}{\ALvariableZ}{\ALvariableD}{\ALvariableV}{\ALvariableF}}}%
        {24}{\ensuremath{\omegaLbfuenf{\ALvariableE}{\ALvariableZ}{\ALvariableD}{\ALvariableV}{\ALvariableF}}}%
        {25}{\ensuremath{\omegaMafuenf{\ALvariableE}{\ALvariableZ}{\ALvariableD}{\ALvariableV}{\ALvariableF}}}%
        {26}{\ensuremath{\omegaMbfuenf{\ALvariableE}{\ALvariableZ}{\ALvariableD}{\ALvariableV}{\ALvariableF}}}%
        {27}{\ensuremath{\omegaNafuenf{\ALvariableE}{\ALvariableZ}{\ALvariableD}{\ALvariableV}{\ALvariableF}}}%
        {28}{\ensuremath{\omegaNbfuenf{\ALvariableE}{\ALvariableZ}{\ALvariableD}{\ALvariableV}{\ALvariableF}}}%
        {29}{\ensuremath{\omegaOafuenf{\ALvariableE}{\ALvariableZ}{\ALvariableD}{\ALvariableV}{\ALvariableF}}}%
        {30}{\ensuremath{\omegaObfuenf{\ALvariableE}{\ALvariableZ}{\ALvariableD}{\ALvariableV}{\ALvariableF}}}%
        {31}{\ensuremath{\omegaPafuenf{\ALvariableE}{\ALvariableZ}{\ALvariableD}{\ALvariableV}{\ALvariableF}}}%
        {32}{\ensuremath{\omegaPbfuenf{\ALvariableE}{\ALvariableZ}{\ALvariableD}{\ALvariableV}{\ALvariableF}}}%
    }[\PackageError{omegaFuenf}{Undefined option to omegaFuenf: #1}{}]%
}

\newcommand{\druckeWeltFuenf}[5]{\ensuremath{\mathit{#1\setzeAbstandZwischenLiteralen#2\setzeAbstandZwischenLiteralen#3\setzeAbstandZwischenLiteralen#4\setzeAbstandZwischenLiteralen#5}}} %

\newcommand{\omegaAafuenf}[5]{\ensuremath{\druckeWeltFuenf{#1}{#2}{#3}{#4}{#5}}}
\newcommand{\omegaAbfuenf}[5]{\ensuremath{\druckeWeltFuenf{#1}{#2}{#3}{#4}{\ol{#5}}}}
\newcommand{\omegaBafuenf}[5]{\ensuremath{\druckeWeltFuenf{#1}{#2}{#3}{\ol{#4}}{#5}}}
\newcommand{\omegaBbfuenf}[5]{\ensuremath{\druckeWeltFuenf{#1}{#2}{#3}{\ol{#4}}{\ol{#5}}}}
\newcommand{\omegaCafuenf}[5]{\ensuremath{\druckeWeltFuenf{#1}{#2}{\ol{#3}}{#4}{#5}}}
\newcommand{\omegaCbfuenf}[5]{\ensuremath{\druckeWeltFuenf{#1}{#2}{\ol{#3}}{#4}{\ol{#5}}}}
\newcommand{\omegaDafuenf}[5]{\ensuremath{\druckeWeltFuenf{#1}{#2}{\ol{#3}}{\ol{#4}}{#5}}}
\newcommand{\omegaDbfuenf}[5]{\ensuremath{\druckeWeltFuenf{#1}{#2}{\ol{#3}}{\ol{#4}}{\ol{#5}}}}

\newcommand{\omegaEafuenf}[5]{\ensuremath{\druckeWeltFuenf{#1}{\ol{#2}}{#3}{#4}{#5}}}
\newcommand{\omegaEbfuenf}[5]{\ensuremath{\druckeWeltFuenf{#1}{\ol{#2}}{#3}{#4}{\ol{#5}}}}
\newcommand{\omegaFafuenf}[5]{\ensuremath{\druckeWeltFuenf{#1}{\ol{#2}}{#3}{\ol{#4}}{#5}}}
\newcommand{\omegaFbfuenf}[5]{\ensuremath{\druckeWeltFuenf{#1}{\ol{#2}}{#3}{\ol{#4}}{\ol{#5}}}}
\newcommand{\omegaGafuenf}[5]{\ensuremath{\druckeWeltFuenf{#1}{\ol{#2}}{\ol{#3}}{#4}{#5}}}
\newcommand{\omegaGbfuenf}[5]{\ensuremath{\druckeWeltFuenf{#1}{\ol{#2}}{\ol{#3}}{#4}{\ol{#5}}}}
\newcommand{\omegaHafuenf}[5]{\ensuremath{\druckeWeltFuenf{#1}{\ol{#2}}{\ol{#3}}{\ol{#4}}{#5}}}
\newcommand{\omegaHbfuenf}[5]{\ensuremath{\druckeWeltFuenf{#1}{\ol{#2}}{\ol{#3}}{\ol{#4}}{\ol{#5}}}}

\newcommand{\omegaIafuenf}[5]{\ensuremath{\druckeWeltFuenf{\ol{#1}}{#2}{#3}{#4}{#5}}}
\newcommand{\omegaIbfuenf}[5]{\ensuremath{\druckeWeltFuenf{\ol{#1}}{#2}{#3}{#4}{\ol{#5}}}}
\newcommand{\omegaJafuenf}[5]{\ensuremath{\druckeWeltFuenf{\ol{#1}}{#2}{#3}{\ol{#4}}{#5}}}
\newcommand{\omegaJbfuenf}[5]{\ensuremath{\druckeWeltFuenf{\ol{#1}}{#2}{#3}{\ol{#4}}{\ol{#5}}}}
\newcommand{\omegaKafuenf}[5]{\ensuremath{\druckeWeltFuenf{\ol{#1}}{#2}{\ol{#3}}{#4}{#5}}}
\newcommand{\omegaKbfuenf}[5]{\ensuremath{\druckeWeltFuenf{\ol{#1}}{#2}{\ol{#3}}{#4}{\ol{#5}}}}
\newcommand{\omegaLafuenf}[5]{\ensuremath{\druckeWeltFuenf{\ol{#1}}{#2}{\ol{#3}}{\ol{#4}}{#5}}}
\newcommand{\omegaLbfuenf}[5]{\ensuremath{\druckeWeltFuenf{\ol{#1}}{#2}{\ol{#3}}{\ol{#4}}{\ol{#5}}}}

\newcommand{\omegaMafuenf}[5]{\ensuremath{\druckeWeltFuenf{\ol{#1}}{\ol{#2}}{#3}{#4}{#5}}}
\newcommand{\omegaMbfuenf}[5]{\ensuremath{\druckeWeltFuenf{\ol{#1}}{\ol{#2}}{#3}{#4}{\ol{#5}}}}
\newcommand{\omegaNafuenf}[5]{\ensuremath{\druckeWeltFuenf{\ol{#1}}{\ol{#2}}{#3}{\ol{#4}}{#5}}}
\newcommand{\omegaNbfuenf}[5]{\ensuremath{\druckeWeltFuenf{\ol{#1}}{\ol{#2}}{#3}{\ol{#4}}{\ol{#5}}}}
\newcommand{\omegaOafuenf}[5]{\ensuremath{\druckeWeltFuenf{\ol{#1}}{\ol{#2}}{\ol{#3}}{#4}{#5}}}
\newcommand{\omegaObfuenf}[5]{\ensuremath{\druckeWeltFuenf{\ol{#1}}{\ol{#2}}{\ol{#3}}{#4}{\ol{#5}}}}
\newcommand{\omegaPafuenf}[5]{\ensuremath{\druckeWeltFuenf{\ol{#1}}{\ol{#2}}{\ol{#3}}{\ol{#4}}{#5}}}
\newcommand{\omegaPbfuenf}[5]{\ensuremath{\druckeWeltFuenf{\ol{#1}}{\ol{#2}}{\ol{#3}}{\ol{#4}}{\ol{#5}}}}

\newcommand{\iiopnames}{inductive inference operators\xspace}   %

\newcommand{\iiopP}{\ensuremath{\nmableit^{\!p}}}   %

\newcommand{\iiopZ}{\ensuremath{\nmableit^{\!z}}}   %
\newcommand{\iiopSKc}{\ensuremath{\nmableit^{\!c}}}   %

\newcommand{\symbolsystemWableitung}{\ensuremath{\sf{w}}}
\newcommand{\iiopW}{\ensuremath{\nmableit^{\!\symbolsystemWableitung}}}   %

\newcommand{\symbolLexInfableitung}{\ensuremath{\mathit{lx}}}
\newcommand{\iiopL}{\ensuremath{\nmableit^{\!\symbolLexInfableitung}}}   %

\newcommand{\BB}{belief base\xspace}   %
\newcommand{\BBs}{belief bases\xspace}   %
\newcommand{\NFallKonditionale}[1]{\ensuremath{\mathit{NFC}(#1)}}

\newcommand{\closureMatrixName}[1]{\ensuremath{\mathit{CM}}} %

\newcommand{\listL}[1]{\ensuremath{\lbrack\mathcal{L}_{\Sigma}\rbrack}} %
\newcommand{\listNFC}[1]{\ensuremath{\lbrack\NFallKonditionale{\Sigma}\rbrack}} %

\makeatletter
\newcommand{\preceqdotLOKAL}{\mathrel{\mathpalette\LOKALpr@ceqd@t\relax}}
\newcommand{\LOKALpr@ceqd@t}[2]{%
  \begingroup
  \sbox\z@{$#1\prec$}\sbox\tw@{$#1\preccurlyeq$}%
  \dimen@=0.5\dimexpr\ht\tw@-\ht\z@\relax
  {\preccurlyeq}%
  \mkern-4mu
  \raisebox{\dimen@}{$\m@th#1\cdot$}%
  \endgroup
}
\makeatother

\makeatletter
\newcommand{\preceqprecdotLOKAL}{\mathrel{\mathpalette\LOKALprpr@ceqd@t\relax}}
\newcommand{\LOKALprpr@ceqd@t}[2]{%
  \begingroup
  \sbox\z@{$#1\prec$}\sbox\tw@{$#1\preccurlyeq$}%
  \dimen@=0.5\dimexpr\ht\tw@-\ht\z@\relax
  {\preccurlyeq}%
  \mkern-9mu
  {\prec}%
  \mkern-4mu
  \raisebox{\dimen@}{$\m@th#1\cdot$}%
  \endgroup
}
\makeatother

\newcommand{\displayTransformationRule}[3]%
{#1 &  #2 & #3}

\newcommand{\displayTRrule}[3]%
{#1 &  #2 & #3}

\newcommand{\symbolSSVker}{\ensuremath{\textbf{R1}}\xspace}  %
\newcommand{\symbolSSFker}{\ensuremath{\textbf{R2}}\xspace}  %
\newcommand{\symbolELMker}{\ensuremath{\textbf{R3}}\xspace}  %
\newcommand{\symbolTRIVIAL}{\textrm{\textbf{R4}}\xspace}  %
\newcommand{\symbolSUBSETS}{\textrm{\textbf{R5}}\xspace}  %
\newcommand{\symbolCIRCLE}{\textrm{\textbf{R6}}\xspace}  %

\newcommand{\refTRIVIAL}{\ensuremath{\symbolTRIVIAL}\xspace}    %
\newcommand{\refSUBSETS}{\ensuremath{\symbolSUBSETS}\xspace}    %
\newcommand{\refCIRCLE}{\ensuremath{\symbolCIRCLE}\xspace}    %

\newcommand{\refSSVker}{\ensuremath{\symbolSSVker}\xspace}    %
\newcommand{\refSSFker}{\ensuremath{\symbolSSFker}\xspace}    %
\newcommand{\refELMker}{\ensuremath{\symbolELMker}\xspace}    %

\newcommand{\nameSSV}{\ensuremath{\textit{subset-V}}\xspace}  %
\newcommand{\nameSSF}{\ensuremath{\textit{subset-F}}\xspace}  %
\newcommand{\nameELM}{\ensuremath{\textit{element}}\xspace}  %
\newcommand{\nameTRIVIAL}{\ensuremath{\textit{trivial}}\xspace}  %
\newcommand{\nameSUBSETS}{\ensuremath{\textit{subsets}}\xspace}  %
\newcommand{\nameCIRCLE}{\ensuremath{\textit{circle}}\xspace}  %

\newcommand{\nsrPaarKER}[3]{\ensuremath{\langle #1,\; #2 \rangle_{#3}}}

{
     
    \begin{enumerate}
}
{
    \end{enumerate}
}

{

    \begin{enumerate}
}
{
    \end{enumerate}
}

\newcommand\satzCLab[4]%
{\ifthenelse{\equal{#1}{no}}{\ensuremath{\satzCL{\{#3\}}{\{#4\}}}}{\ensuremath{\hbox{\ensuremath{#1_{#2}\!\!:\!\,\satzCL{\{#3\}}{\{#4\}}}}}}}

\newcommand{\nfcab}[2][no]{%
    \IfEqCase{#2}{%
        {01}{\ensuremath{\satzCLab{#1}{#2}{3}{3,2}}}%
        {02}{\ensuremath{\satzCLab{#1}{#2}{2}{3,2}}}%
        {03}{\ensuremath{\satzCLab{#1}{#2}{3}{3,0}}}%
        {04}{\ensuremath{\satzCLab{#1}{#2}{0}{3,0}}}%
        {05}{\ensuremath{\satzCLab{#1}{#2}{2}{2,1}}}%
        {06}{\ensuremath{\satzCLab{#1}{#2}{2}{2,0}}}%
        {07}{\ensuremath{\satzCLab{#1}{#2}{0}{2,0}}}%
        {08}{\ensuremath{\satzCLab{#1}{#2}{3}{3,2,1}}}%
        {09}{\ensuremath{\satzCLab{#1}{#2}{2}{3,2,1}}}%
        {12}{\ensuremath{\cb{\satzCLab{#1}{#2}{3}{3,2,0}}}}%
        {13}{\ensuremath{\cb{\satzCLab{#1}{#2}{2}{3,2,0}}}}%
        {14}{\ensuremath{\cb{\satzCLab{#1}{#2}{0}{3,2,0}}}}%
        {18}{\ensuremath{\cb{\satzCLab{#1}{#2}{2}{2,1,0}}}}%
        {19}{\ensuremath{\cb{\satzCLab{#1}{#2}{0}{2,1,0}}}}%
        {10}{\ensuremath{\cb{\satzCLab{#1}{#2}{3,2}{3,2,1}}}}%
        {11}{\ensuremath{\cb{\satzCLab{#1}{#2}{2,1}{3,2,1}}}}%
        {15}{\ensuremath{\cb{\satzCLab{#1}{#2}{3,2}{3,2,0}}}}%
        {16}{\ensuremath{\cb{\satzCLab{#1}{#2}{3,0}{3,2,0}}}}%
        {17}{\ensuremath{\cb{\satzCLab{#1}{#2}{2,0}{3,2,0}}}}%
        {20}{\ensuremath{\satzCLab{#1}{#2}{2,1}{2,1,0}}}%
        {21}{\ensuremath{\satzCLab{#1}{#2}{2,0}{2,1,0}}}%
        {22}{\ensuremath{\satzCLab{#1}{#2}{3}{3,2,1,0}}}%
        {23}{\ensuremath{\satzCLab{#1}{#2}{2}{3,2,1,0}}}%
        {24}{\ensuremath{\satzCLab{#1}{#2}{0}{3,2,1,0}}}%
        {25}{\ensuremath{\satzCLab{#1}{#2}{3,2}{3,2,1,0}}}%
        {26}{\ensuremath{\satzCLab{#1}{#2}{3,0}{3,2,1,0}}}%
        {27}{\ensuremath{\satzCLab{#1}{#2}{2,1}{3,2,1,0}}}%
        {28}{\ensuremath{\satzCLab{#1}{#2}{2,0}{3,2,1,0}}}%
        {29}{\ensuremath{\satzCLab{#1}{#2}{3,2,1}{3,2,1,0}}}%
        {30}{\ensuremath{\satzCLab{#1}{#2}{3,2,0}{3,2,1,0}}}%
        {31}{\ensuremath{\satzCLab{#1}{#2}{2,1,0}{3,2,1,0}}}%
        {32}{\ensuremath{\satzCLab{#1}{#2}{3}{3,1}}}%
        {33}{\ensuremath{\satzCLab{#1}{#2}{1}{3,1}}}%
        {34}{\ensuremath{\satzCLab{#1}{#2}{1}{2,1}}}%
        {35}{\ensuremath{\satzCLab{#1}{#2}{1}{1,0}}}%
        {36}{\ensuremath{\satzCLab{#1}{#2}{0}{1,0}}}%
        {37}{\ensuremath{\satzCLab{#1}{#2}{1}{3,2,1}}}%
        {39}{\ensuremath{\cb{\satzCLab{#1}{#2}{3}{3,1,0}}}}%
        {40}{\ensuremath{\cb{\satzCLab{#1}{#2}{1}{3,1,0}}}}%
        {41}{\ensuremath{\cb{\satzCLab{#1}{#2}{0}{3,1,0}}}}%
        {45}{\ensuremath{\cb{\satzCLab{#1}{#2}{1}{2,1,0}}}}%
        {38}{\ensuremath{\cb{\satzCLab{#1}{#2}{3,1}{3,2,1}}}}%
        {42}{\ensuremath{\cb{\satzCLab{#1}{#2}{3,1}{3,1,0}}}}%
        {43}{\ensuremath{\cb{\satzCLab{#1}{#2}{3,0}{3,1,0}}}}%
        {44}{\ensuremath{\cb{\satzCLab{#1}{#2}{1,0}{3,1,0}}}}%
        {46}{\ensuremath{\satzCLab{#1}{#2}{1,0}{2,1,0}}}%
        {47}{\ensuremath{\satzCLab{#1}{#2}{1}{3,2,1,0}}}%
        {48}{\ensuremath{\satzCLab{#1}{#2}{3,1}{3,2,1,0}}}%
        {49}{\ensuremath{\satzCLab{#1}{#2}{1,0}{3,2,1,0}}}%
        {50}{\ensuremath{\satzCLab{#1}{#2}{3,1,0}{3,2,1,0}}}%
    }[\PackageError{nfcab}{Undefined option to nfcab: #1}{}]%
}%

\newcommand{\nfcabChar}[2][no]{%
    \IfEqCase{#2}{%
        {01}{\ensuremath{\satzCLab{#1}{#2}{ab}{ab,a\ol{b}}}}%
        {02}{\ensuremath{\satzCLab{#1}{#2}{a\ol{b}}{ab,a\ol{b}}}}%
        {03}{\ensuremath{\satzCLab{#1}{#2}{ab}{ab,\ol{a}\ol{b}}}}%
        {04}{\ensuremath{\satzCLab{#1}{#2}{\ol{a}\ol{b}}{ab,\ol{a}\ol{b}}}}%
        {05}{\ensuremath{\satzCLab{#1}{#2}{a\ol{b}}{a\ol{b},\ol{a}b}}}%
        {06}{\ensuremath{\satzCLab{#1}{#2}{a\ol{b}}{a\ol{b},\ol{a}\ol{b}}}}%
        {07}{\ensuremath{\satzCLab{#1}{#2}{\ol{a}\ol{b}}{a\ol{b},\ol{a}\ol{b}}}}%
        {08}{\ensuremath{\satzCLab{#1}{#2}{ab}{ab,a\ol{b},\ol{a}b}}}%
        {09}{\ensuremath{\satzCLab{#1}{#2}{a\ol{b}}{ab,a\ol{b},\ol{a}b}}}%
        {12}{\ensuremath{\satzCLab{#1}{#2}{ab}{ab,a\ol{b},\ol{a}\ol{b}}}}%
        {13}{\ensuremath{\satzCLab{#1}{#2}{a\ol{b}}{ab,a\ol{b},\ol{a}\ol{b}}}}%
        {14}{\ensuremath{\satzCLab{#1}{#2}{\ol{a}\ol{b}}{ab,a\ol{b},\ol{a}\ol{b}}}}%
        {18}{\ensuremath{\satzCLab{#1}{#2}{a\ol{b}}{a\ol{b},\ol{a}b,\ol{a}\ol{b}}}}%
        {19}{\ensuremath{\satzCLab{#1}{#2}{\ol{a}\ol{b}}{a\ol{b},\ol{a}b,\ol{a}\ol{b}}}}%
        {10}{\ensuremath{\satzCLab{#1}{#2}{ab,a\ol{b}}{ab,a\ol{b},\ol{a}b}}}%
        {11}{\ensuremath{\satzCLab{#1}{#2}{a\ol{b},\ol{a}b}{ab,a\ol{b},\ol{a}b}}}%
        {15}{\ensuremath{\satzCLab{#1}{#2}{ab,a\ol{b}}{ab,a\ol{b},\ol{a}\ol{b}}}}%
        {16}{\ensuremath{\satzCLab{#1}{#2}{ab,\ol{a}\ol{b}}{ab,a\ol{b},\ol{a}\ol{b}}}}%
        {17}{\ensuremath{\satzCLab{#1}{#2}{a\ol{b},\ol{a}\ol{b}}{ab,a\ol{b},\ol{a}\ol{b}}}}%
        {20}{\ensuremath{\satzCLab{#1}{#2}{a\ol{b},\ol{a}b}{a\ol{b},\ol{a}b,\ol{a}\ol{b}}}}%
        {21}{\ensuremath{\satzCLab{#1}{#2}{a\ol{b},\ol{a}\ol{b}}{a\ol{b},\ol{a}b,\ol{a}\ol{b}}}}%
        {22}{\ensuremath{\satzCLab{#1}{#2}{ab}{ab,a\ol{b},\ol{a}b,\ol{a}\ol{b}}}}%
        {23}{\ensuremath{\satzCLab{#1}{#2}{a\ol{b}}{ab,a\ol{b},\ol{a}b,\ol{a}\ol{b}}}}%
        {24}{\ensuremath{\satzCLab{#1}{#2}{\ol{a}\ol{b}}{ab,a\ol{b},\ol{a}b,\ol{a}\ol{b}}}}%
        {25}{\ensuremath{\satzCLab{#1}{#2}{ab,a\ol{b}}{ab,a\ol{b},\ol{a}b,\ol{a}\ol{b}}}}%
        {26}{\ensuremath{\satzCLab{#1}{#2}{ab,\ol{a}\ol{b}}{ab,a\ol{b},\ol{a}b,\ol{a}\ol{b}}}}%
        {27}{\ensuremath{\satzCLab{#1}{#2}{a\ol{b},\ol{a}b}{ab,a\ol{b},\ol{a}b,\ol{a}\ol{b}}}}%
        {28}{\ensuremath{\satzCLab{#1}{#2}{a\ol{b},\ol{a}\ol{b}}{ab,a\ol{b},\ol{a}b,\ol{a}\ol{b}}}}%
        {29}{\ensuremath{\satzCLab{#1}{#2}{ab,a\ol{b},\ol{a}b}{ab,a\ol{b},\ol{a}b,\ol{a}\ol{b}}}}%
        {30}{\ensuremath{\satzCLab{#1}{#2}{ab,a\ol{b},\ol{a}\ol{b}}{ab,a\ol{b},\ol{a}b,\ol{a}\ol{b}}}}%
        {31}{\ensuremath{\satzCLab{#1}{#2}{a\ol{b},\ol{a}b,\ol{a}\ol{b}}{ab,a\ol{b},\ol{a}b,\ol{a}\ol{b}}}}%
        {32}{\ensuremath{\satzCLab{#1}{#2}{ab}{ab,\ol{a}b}}}%
        {33}{\ensuremath{\satzCLab{#1}{#2}{\ol{a}b}{ab,\ol{a}b}}}%
        {34}{\ensuremath{\satzCLab{#1}{#2}{\ol{a}b}{a\ol{b},\ol{a}b}}}%
        {35}{\ensuremath{\satzCLab{#1}{#2}{\ol{a}b}{\ol{a}b,\ol{a}\ol{b}}}}%
        {36}{\ensuremath{\satzCLab{#1}{#2}{\ol{a}\ol{b}}{\ol{a}b,\ol{a}\ol{b}}}}%
        {37}{\ensuremath{\satzCLab{#1}{#2}{\ol{a}b}{ab,a\ol{b},\ol{a}b}}}%
        {39}{\ensuremath{\satzCLab{#1}{#2}{ab}{ab,\ol{a}b,\ol{a}\ol{b}}}}%
        {40}{\ensuremath{\satzCLab{#1}{#2}{\ol{a}b}{ab,\ol{a}b,\ol{a}\ol{b}}}}%
        {41}{\ensuremath{\satzCLab{#1}{#2}{\ol{a}\ol{b}}{ab,\ol{a}b,\ol{a}\ol{b}}}}%
        {45}{\ensuremath{\satzCLab{#1}{#2}{\ol{a}b}{a\ol{b},\ol{a}b,\ol{a}\ol{b}}}}%
        {38}{\ensuremath{\satzCLab{#1}{#2}{ab,\ol{a}b}{ab,a\ol{b},\ol{a}b}}}%
        {42}{\ensuremath{\satzCLab{#1}{#2}{ab,\ol{a}b}{ab,\ol{a}b,\ol{a}\ol{b}}}}%
        {43}{\ensuremath{\satzCLab{#1}{#2}{ab,\ol{a}\ol{b}}{ab,\ol{a}b,\ol{a}\ol{b}}}}%
        {44}{\ensuremath{\satzCLab{#1}{#2}{\ol{a}b,\ol{a}\ol{b}}{ab,\ol{a}b,\ol{a}\ol{b}}}}%
        {46}{\ensuremath{\satzCLab{#1}{#2}{\ol{a}b,\ol{a}\ol{b}}{a\ol{b},\ol{a}b,\ol{a}\ol{b}}}}%
        {47}{\ensuremath{\satzCLab{#1}{#2}{\ol{a}b}{ab,a\ol{b},\ol{a}b,\ol{a}\ol{b}}}}%
        {48}{\ensuremath{\satzCLab{#1}{#2}{ab,\ol{a}b}{ab,a\ol{b},\ol{a}b,\ol{a}\ol{b}}}}%
        {49}{\ensuremath{\satzCLab{#1}{#2}{\ol{a}b,\ol{a}\ol{b}}{ab,a\ol{b},\ol{a}b,\ol{a}\ol{b}}}}%
        {50}{\ensuremath{\satzCLab{#1}{#2}{ab,\ol{a}b,\ol{a}\ol{b}}{ab,a\ol{b},\ol{a}b,\ol{a}\ol{b}}}}%
    }[\PackageError{nfcab}{Undefined option to nfcab: #1}{}]%
}%

\newcommand{\nfcabNeu}[2][no]{%
    \IfEqCase{#2}{%
        {01}{\ensuremath{\satzCLab{#1}{#2}{3}{3,2}}}%
        {02}{\ensuremath{\satzCLab{#1}{#2}{3}{3,1}}}%
        {03}{\ensuremath{\satzCLab{#1}{#2}{2}{3,2}}}%
        {04}{\ensuremath{\satzCLab{#1}{#2}{1}{3,1}}}%
        {05}{\ensuremath{\satzCLab{#1}{#2}{3}{3,0}}}%
        {06}{\ensuremath{\satzCLab{#1}{#2}{0}{3,0}}}%
        {07}{\ensuremath{\satzCLab{#1}{#2}{2}{2,1}}}%
        {08}{\ensuremath{\satzCLab{#1}{#2}{1}{2,1}}}%
        {09}{\ensuremath{\satzCLab{#1}{#2}{2}{2,0}}}%
        {10}{\ensuremath{\satzCLab{#1}{#2}{1}{1,0}}}%
        {11}{\ensuremath{\satzCLab{#1}{#2}{0}{2,0}}}%
        {12}{\ensuremath{\satzCLab{#1}{#2}{0}{1,0}}}%
        {13}{\ensuremath{\satzCLab{#1}{#2}{3}{3,2,1}}}%
        {14}{\ensuremath{\satzCLab{#1}{#2}{2}{3,2,1}}}%
        {15}{\ensuremath{\satzCLab{#1}{#2}{1}{3,2,1}}}%
        {16}{\ensuremath{\satzCLab{#1}{#2}{3,2}{3,2,1}}}%
        {17}{\ensuremath{\satzCLab{#1}{#2}{3,1}{3,2,1}}}%
        {18}{\ensuremath{\satzCLab{#1}{#2}{2,1}{3,2,1}}}%
        {19}{\ensuremath{\satzCLab{#1}{#2}{3}{3,2,0}}}%
        {20}{\ensuremath{\satzCLab{#1}{#2}{3}{3,1,0}}}%
        {21}{\ensuremath{\satzCLab{#1}{#2}{2}{3,2,0}}}%
        {22}{\ensuremath{\satzCLab{#1}{#2}{1}{3,1,0}}}%
        {23}{\ensuremath{\satzCLab{#1}{#2}{0}{3,2,0}}}%
        {24}{\ensuremath{\satzCLab{#1}{#2}{0}{3,1,0}}}%
        {25}{\ensuremath{\satzCLab{#1}{#2}{3,2}{3,2,0}}}%
        {26}{\ensuremath{\satzCLab{#1}{#2}{3,1}{3,1,0}}}%
        {27}{\ensuremath{\satzCLab{#1}{#2}{3,0}{3,2,0}}}%
        {28}{\ensuremath{\satzCLab{#1}{#2}{3,0}{3,1,0}}}%
        {29}{\ensuremath{\satzCLab{#1}{#2}{2,0}{3,2,0}}}%
        {30}{\ensuremath{\satzCLab{#1}{#2}{1,0}{3,1,0}}}%
        {31}{\ensuremath{\satzCLab{#1}{#2}{2}{2,1,0}}}%
        {32}{\ensuremath{\satzCLab{#1}{#2}{1}{2,1,0}}}%
        {33}{\ensuremath{\satzCLab{#1}{#2}{0}{2,1,0}}}%
        {34}{\ensuremath{\satzCLab{#1}{#2}{2,1}{2,1,0}}}%
        {35}{\ensuremath{\satzCLab{#1}{#2}{2,0}{2,1,0}}}%
        {36}{\ensuremath{\satzCLab{#1}{#2}{1,0}{2,1,0}}}%
        {37}{\ensuremath{\satzCLab{#1}{#2}{3}{3,2,1,0}}}%
        {38}{\ensuremath{\satzCLab{#1}{#2}{2}{3,2,1,0}}}%
        {39}{\ensuremath{\satzCLab{#1}{#2}{1}{3,2,1,0}}}%
        {40}{\ensuremath{\satzCLab{#1}{#2}{0}{3,2,1,0}}}%
        {41}{\ensuremath{\satzCLab{#1}{#2}{3,2}{3,2,1,0}}}%
        {42}{\ensuremath{\satzCLab{#1}{#2}{3,1}{3,2,1,0}}}%
        {43}{\ensuremath{\satzCLab{#1}{#2}{3,0}{3,2,1,0}}}%
        {44}{\ensuremath{\satzCLab{#1}{#2}{2,1}{3,2,1,0}}}%
        {45}{\ensuremath{\satzCLab{#1}{#2}{2,0}{3,2,1,0}}}%
        {46}{\ensuremath{\satzCLab{#1}{#2}{1,0}{3,2,1,0}}}%
        {47}{\ensuremath{\satzCLab{#1}{#2}{3,2,1}{3,2,1,0}}}%
        {48}{\ensuremath{\satzCLab{#1}{#2}{3,2,0}{3,2,1,0}}}%
        {49}{\ensuremath{\satzCLab{#1}{#2}{3,1,0}{3,2,1,0}}}%
        {50}{\ensuremath{\satzCLab{#1}{#2}{2,1,0}{3,2,1,0}}}%
    }[\PackageError{nfcab}{Undefined option to nfcab: #1}{}]%
}%

\newcommand{\nfcabCharNeu}[2][no]{%
    \IfEqCase{#2}{%
        {01}{\ensuremath{\satzCLab{#1}{#2}{ab}{ab,a\ol{b}}}}%
        {02}{\ensuremath{\satzCLab{#1}{#2}{ab}{ab,\ol{a}b}}}%
        {03}{\ensuremath{\satzCLab{#1}{#2}{a\ol{b}}{ab,a\ol{b}}}}%
        {04}{\ensuremath{\satzCLab{#1}{#2}{\ol{a}b}{ab,\ol{a}b}}}%
        {05}{\ensuremath{\satzCLab{#1}{#2}{ab}{ab,\ol{a}\ol{b}}}}%
        {06}{\ensuremath{\satzCLab{#1}{#2}{\ol{a}\ol{b}}{ab,\ol{a}\ol{b}}}}%
        {07}{\ensuremath{\satzCLab{#1}{#2}{a\ol{b}}{a\ol{b},\ol{a}b}}}%
        {08}{\ensuremath{\satzCLab{#1}{#2}{\ol{a}b}{a\ol{b},\ol{a}b}}}%
        {09}{\ensuremath{\satzCLab{#1}{#2}{a\ol{b}}{a\ol{b},\ol{a}\ol{b}}}}%
        {10}{\ensuremath{\satzCLab{#1}{#2}{\ol{a}b}{\ol{a}b,\ol{a}\ol{b}}}}%
        {11}{\ensuremath{\satzCLab{#1}{#2}{\ol{a}\ol{b}}{a\ol{b},\ol{a}\ol{b}}}}%
        {12}{\ensuremath{\satzCLab{#1}{#2}{\ol{a}\ol{b}}{\ol{a}b,\ol{a}\ol{b}}}}%
        {13}{\ensuremath{\satzCLab{#1}{#2}{ab}{ab,a\ol{b},\ol{a}b}}}%
        {14}{\ensuremath{\satzCLab{#1}{#2}{a\ol{b}}{ab,a\ol{b},\ol{a}b}}}%
        {15}{\ensuremath{\satzCLab{#1}{#2}{\ol{a}b}{ab,a\ol{b},\ol{a}b}}}%
        {16}{\ensuremath{\satzCLab{#1}{#2}{ab,a\ol{b}}{ab,a\ol{b},\ol{a}b}}}%
        {17}{\ensuremath{\satzCLab{#1}{#2}{ab,\ol{a}b}{ab,a\ol{b},\ol{a}b}}}%
        {18}{\ensuremath{\satzCLab{#1}{#2}{a\ol{b},\ol{a}b}{ab,a\ol{b},\ol{a}b}}}%
        {19}{\ensuremath{\satzCLab{#1}{#2}{ab}{ab,a\ol{b},\ol{a}\ol{b}}}}%
        {20}{\ensuremath{\satzCLab{#1}{#2}{ab}{ab,\ol{a}b,\ol{a}\ol{b}}}}%
        {21}{\ensuremath{\satzCLab{#1}{#2}{a\ol{b}}{ab,a\ol{b},\ol{a}\ol{b}}}}%
        {22}{\ensuremath{\satzCLab{#1}{#2}{\ol{a}b}{ab,\ol{a}b,\ol{a}\ol{b}}}}%
        {23}{\ensuremath{\satzCLab{#1}{#2}{\ol{a}\ol{b}}{ab,a\ol{b},\ol{a}\ol{b}}}}%
        {24}{\ensuremath{\satzCLab{#1}{#2}{\ol{a}\ol{b}}{ab,\ol{a}b,\ol{a}\ol{b}}}}%
        {25}{\ensuremath{\satzCLab{#1}{#2}{ab,a\ol{b}}{ab,a\ol{b},\ol{a}\ol{b}}}}%
        {26}{\ensuremath{\satzCLab{#1}{#2}{ab,\ol{a}b}{ab,\ol{a}b,\ol{a}\ol{b}}}}%
        {27}{\ensuremath{\satzCLab{#1}{#2}{ab,\ol{a}\ol{b}}{ab,a\ol{b},\ol{a}\ol{b}}}}%
        {28}{\ensuremath{\satzCLab{#1}{#2}{ab,\ol{a}\ol{b}}{ab,\ol{a}b,\ol{a}\ol{b}}}}%
        {29}{\ensuremath{\satzCLab{#1}{#2}{a\ol{b},\ol{a}\ol{b}}{ab,a\ol{b},\ol{a}\ol{b}}}}%
        {30}{\ensuremath{\satzCLab{#1}{#2}{\ol{a}b,\ol{a}\ol{b}}{ab,\ol{a}b,\ol{a}\ol{b}}}}%
        {31}{\ensuremath{\satzCLab{#1}{#2}{a\ol{b}}{a\ol{b},\ol{a}b,\ol{a}\ol{b}}}}%
        {32}{\ensuremath{\satzCLab{#1}{#2}{\ol{a}b}{a\ol{b},\ol{a}b,\ol{a}\ol{b}}}}%
        {33}{\ensuremath{\satzCLab{#1}{#2}{\ol{a}\ol{b}}{a\ol{b},\ol{a}b,\ol{a}\ol{b}}}}%
        {34}{\ensuremath{\satzCLab{#1}{#2}{a\ol{b},\ol{a}b}{a\ol{b},\ol{a}b,\ol{a}\ol{b}}}}%
        {35}{\ensuremath{\satzCLab{#1}{#2}{a\ol{b},\ol{a}\ol{b}}{a\ol{b},\ol{a}b,\ol{a}\ol{b}}}}%
        {36}{\ensuremath{\satzCLab{#1}{#2}{\ol{a}b,\ol{a}\ol{b}}{a\ol{b},\ol{a}b,\ol{a}\ol{b}}}}%
        {37}{\ensuremath{\satzCLab{#1}{#2}{ab}{ab,a\ol{b},\ol{a}b,\ol{a}\ol{b}}}}%
        {38}{\ensuremath{\satzCLab{#1}{#2}{a\ol{b}}{ab,a\ol{b},\ol{a}b,\ol{a}\ol{b}}}}%
        {39}{\ensuremath{\satzCLab{#1}{#2}{\ol{a}b}{ab,a\ol{b},\ol{a}b,\ol{a}\ol{b}}}}%
        {40}{\ensuremath{\satzCLab{#1}{#2}{\ol{a}\ol{b}}{ab,a\ol{b},\ol{a}b,\ol{a}\ol{b}}}}%
        {41}{\ensuremath{\satzCLab{#1}{#2}{ab,a\ol{b}}{ab,a\ol{b},\ol{a}b,\ol{a}\ol{b}}}}%
        {42}{\ensuremath{\satzCLab{#1}{#2}{ab,\ol{a}b}{ab,a\ol{b},\ol{a}b,\ol{a}\ol{b}}}}%
        {43}{\ensuremath{\satzCLab{#1}{#2}{ab,\ol{a}\ol{b}}{ab,a\ol{b},\ol{a}b,\ol{a}\ol{b}}}}%
        {44}{\ensuremath{\satzCLab{#1}{#2}{a\ol{b},\ol{a}b}{ab,a\ol{b},\ol{a}b,\ol{a}\ol{b}}}}%
        {45}{\ensuremath{\satzCLab{#1}{#2}{a\ol{b},\ol{a}\ol{b}}{ab,a\ol{b},\ol{a}b,\ol{a}\ol{b}}}}%
        {46}{\ensuremath{\satzCLab{#1}{#2}{\ol{a}b,\ol{a}\ol{b}}{ab,a\ol{b},\ol{a}b,\ol{a}\ol{b}}}}%
        {47}{\ensuremath{\satzCLab{#1}{#2}{ab,a\ol{b},\ol{a}b}{ab,a\ol{b},\ol{a}b,\ol{a}\ol{b}}}}%
        {48}{\ensuremath{\satzCLab{#1}{#2}{ab,a\ol{b},\ol{a}\ol{b}}{ab,a\ol{b},\ol{a}b,\ol{a}\ol{b}}}}%
        {49}{\ensuremath{\satzCLab{#1}{#2}{ab,\ol{a}b,\ol{a}\ol{b}}{ab,a\ol{b},\ol{a}b,\ol{a}\ol{b}}}}%
        {50}{\ensuremath{\satzCLab{#1}{#2}{a\ol{b},\ol{a}b,\ol{a}\ol{b}}{ab,a\ol{b},\ol{a}b,\ol{a}\ol{b}}}}%
    }[\PackageError{nfcab}{Undefined option to nfcab: #1}{}]%
}%

\newcommand{\paarKlNr}[2]{\ensuremath{#1.#2}}

\newcommand{\nfcabNeuKlNr}[2][no]{%
  \IfEqCase{#2}{%
        {01}{\ensuremath{\satzCLab{#1}{\paarKlNr{01}{1}}{3}{3,2}}}%
        {02}{\ensuremath{\satzCLab{#1}{\paarKlNr{01}{2}}{3}{3,1}}}%
        {03}{\ensuremath{\satzCLab{#1}{\paarKlNr{02}{1}}{2}{3,2}}}%
        {04}{\ensuremath{\satzCLab{#1}{\paarKlNr{02}{2}}{1}{3,1}}}%
        {05}{\ensuremath{\satzCLab{#1}{\paarKlNr{03}{1}}{3}{3,0}}}%
        {06}{\ensuremath{\satzCLab{#1}{\paarKlNr{04}{1}}{0}{3,0}}}%
        {07}{\ensuremath{\satzCLab{#1}{\paarKlNr{05}{1}}{2}{2,1}}}%
        {08}{\ensuremath{\satzCLab{#1}{\paarKlNr{05}{2}}{1}{2,1}}}%
        {09}{\ensuremath{\satzCLab{#1}{\paarKlNr{06}{1}}{2}{2,0}}}%
        {10}{\ensuremath{\satzCLab{#1}{\paarKlNr{06}{2}}{1}{1,0}}}%
        {11}{\ensuremath{\satzCLab{#1}{\paarKlNr{07}{1}}{0}{2,0}}}%
        {12}{\ensuremath{\satzCLab{#1}{\paarKlNr{07}{2}}{0}{1,0}}}%
        {13}{\ensuremath{\satzCLab{#1}{\paarKlNr{08}{1}}{3}{3,2,1}}}%
        {14}{\ensuremath{\satzCLab{#1}{\paarKlNr{09}{1}}{2}{3,2,1}}}%
        {15}{\ensuremath{\satzCLab{#1}{\paarKlNr{09}{2}}{1}{3,2,1}}}%
        {16}{\ensuremath{\satzCLab{#1}{\paarKlNr{10}{1}}{3,2}{3,2,1}}}%
        {17}{\ensuremath{\satzCLab{#1}{\paarKlNr{10}{2}}{3,1}{3,2,1}}}%
        {18}{\ensuremath{\satzCLab{#1}{\paarKlNr{11}{1}}{2,1}{3,2,1}}}%
        {19}{\ensuremath{\satzCLab{#1}{\paarKlNr{12}{1}}{3}{3,2,0}}}%
        {20}{\ensuremath{\satzCLab{#1}{\paarKlNr{12}{2}}{3}{3,1,0}}}%
        {21}{\ensuremath{\satzCLab{#1}{\paarKlNr{13}{1}}{2}{3,2,0}}}%
        {22}{\ensuremath{\satzCLab{#1}{\paarKlNr{13}{1}}{1}{3,1,0}}}%
        {23}{\ensuremath{\satzCLab{#1}{\paarKlNr{14}{1}}{0}{3,2,0}}}%
        {24}{\ensuremath{\satzCLab{#1}{\paarKlNr{14}{1}}{0}{3,1,0}}}%
        {25}{\ensuremath{\satzCLab{#1}{\paarKlNr{15}{1}}{3,2}{3,2,0}}}%
        {26}{\ensuremath{\satzCLab{#1}{\paarKlNr{15}{2}}{3,1}{3,1,0}}}%
        {27}{\ensuremath{\satzCLab{#1}{\paarKlNr{16}{1}}{3,0}{3,2,0}}}%
        {28}{\ensuremath{\satzCLab{#1}{\paarKlNr{16}{2}}{3,0}{3,1,0}}}%
        {29}{\ensuremath{\satzCLab{#1}{\paarKlNr{17}{1}}{2,0}{3,2,0}}}%
        {30}{\ensuremath{\satzCLab{#1}{\paarKlNr{17}{2}}{1,0}{3,1,0}}}%
        {31}{\ensuremath{\satzCLab{#1}{\paarKlNr{18}{1}}{2}{2,1,0}}}%
        {32}{\ensuremath{\satzCLab{#1}{\paarKlNr{18}{2}}{1}{2,1,0}}}%
        {33}{\ensuremath{\satzCLab{#1}{\paarKlNr{19}{1}}{0}{2,1,0}}}%
        {34}{\ensuremath{\satzCLab{#1}{\paarKlNr{20}{1}}{2,1}{2,1,0}}}%
        {35}{\ensuremath{\satzCLab{#1}{\paarKlNr{21}{1}}{2,0}{2,1,0}}}%
        {36}{\ensuremath{\satzCLab{#1}{\paarKlNr{21}{2}}{1,0}{2,1,0}}}%
        {37}{\ensuremath{\satzCLab{#1}{\paarKlNr{22}{1}}{3}{3,2,1,0}}}%
        {38}{\ensuremath{\satzCLab{#1}{\paarKlNr{23}{1}}{2}{3,2,1,0}}}%
        {39}{\ensuremath{\satzCLab{#1}{\paarKlNr{23}{2}}{1}{3,2,1,0}}}%
        {40}{\ensuremath{\satzCLab{#1}{\paarKlNr{24}{1}}{0}{3,2,1,0}}}%
        {41}{\ensuremath{\satzCLab{#1}{\paarKlNr{25}{1}}{3,2}{3,2,1,0}}}%
        {42}{\ensuremath{\satzCLab{#1}{\paarKlNr{25}{2}}{3,1}{3,2,1,0}}}%
        {43}{\ensuremath{\satzCLab{#1}{\paarKlNr{26}{1}}{3,0}{3,2,1,0}}}%
        {44}{\ensuremath{\satzCLab{#1}{\paarKlNr{27}{1}}{2,1}{3,2,1,0}}}%
        {45}{\ensuremath{\satzCLab{#1}{\paarKlNr{28}{1}}{2,0}{3,2,1,0}}}%
        {46}{\ensuremath{\satzCLab{#1}{\paarKlNr{28}{2}}{1,0}{3,2,1,0}}}%
        {47}{\ensuremath{\satzCLab{#1}{\paarKlNr{29}{1}}{3,2,1}{3,2,1,0}}}%
        {48}{\ensuremath{\satzCLab{#1}{\paarKlNr{30}{1}}{3,2,0}{3,2,1,0}}}%
        {49}{\ensuremath{\satzCLab{#1}{\paarKlNr{30}{2}}{3,1,0}{3,2,1,0}}}%
        {50}{\ensuremath{\satzCLab{#1}{\paarKlNr{31}{1}}{2,1,0}{3,2,1,0}}}%
    }[\PackageError{nfcab}{Undefined option to nfcab: #1}{}]%
}%

\newcommand{\Vmin}{\ensuremath{V_{min}}}

\newcommand{\Fmin}{\ensuremath{F_{min}}}

\newcommand{\nsrV}{\ensuremath{\mathcal{V}}}
\newcommand{\nsrF}{\ensuremath{\mathcal{F}}}
\newcommand{\leereMenge}{\ensuremath{\varnothing}}

\newcommand{\wableitDelta}{\wableit{\Delta}}

\newcommand{\worder}[1]{\ensuremath{<_{#1}^{\textsf{w}}}}
\newcommand{\worderDelta}{\ensuremath{\worder{\Delta}}}

\newcommand{\lexableit}[1]{\ensuremath{\nmany{#1}{\mathit{lex}}}}
\newcommand{\lexableitDelta}{\lexableit{\Delta}}
\newcommand{\lexorder}[1]{\ensuremath{<_{#1}^{\mathit{lex}}}}
\newcommand{\lexorderDelta}{\lexorder{\Delta}}

\newcommand{\signa}{\Sigma}
\newcommand{\subsigna}{\Theta}
\newcommand{\subsignabar}{\ol{\subsigna}}

\newcommand{\omegasub}{\omega^{\subsigna}}
\newcommand{\omegasubbar}{\omega^{\subsignabar}}

\newcommand{\omegasubj}[1]{\omega^{#1}}

\renewcommand{\ast}{*}

\newcommand{\marg}[2]{{#1|}_{#2}}
\newcommand{\margpw}[2]{{#1}^{#2}}

\makeatletter
\def\moverlay{\mathpalette\mov@rlay}
\def\mov@rlay#1#2{\leavevmode\vtop{%
		\baselineskip\z@skip \lineskiplimit-\maxdimen
		\ialign{\hfil$\m@th#1##$\hfil\cr#2\crcr}}}
\newcommand{\charfusion}[3][\mathord]{%
	#1{\ifx#1\mathop\vphantom{#2}\fi
		\mathpalette\mov@rlay{#2\cr#3}%
	}%
	\ifx#1\mathop\expandafter\displaylimits\fi}
\makeatother

\newcommand{\nmsymbolneu}{\charfusion[\mathrel]{|\mspace{11mu}}{\sim}}

\newcommand{\nmany}[2]{\nmsymbolneu^{#2}_{#1}}

\newcommand{\nmDelta}{\nmany{\Delta}{}}

\newcommand{\nmkappa}{\nmany{\kappa}{}}

\newcommand{\indreasop}{\mathbf{C}}    
\newcommand{\splitcup}{\bigcup\limits_{\signa_1, \signa_2}}

\newcommand{\cindep}{(CInd)}

\newcommand{\fctOPname}{{\ensuremath{\mathit{OP}}}} %
\newcommand{\fctOP}[1]{\ensuremath{\fctOPname(#1)}} %

\newcommand{\fctOPvector}{\ensuremath{(\R^0,\ldots,\R^k)}} %

\newcommand{\falsWname}{\ensuremath{\mathit{\xi}}}
\newcommand{\falsW}{\ensuremath{\falsWname}}

\newcommand{\namePS}{preferred structure\xspace}

\newcommand{\N}{\mathbb{N}}
\newcommand{\dotcup}{\mathbin{\dot\cup}}

\newcommand{\Lprop}{\ensuremath{\mathcal{L}}}

\newcommand{\thistheoremname}{}
\newtheorem{generischespostulat}{\thistheoremname}   %

\newcommand{\selectionStrategy}{\ensuremath{\sigma}}

\newcommand{\indreascrepSelect}{\ensuremath{\mathbf{C}^{\textit{c-rep}}_{\selectionStrategy}}}     %
\newcommand{\indreasskcinf}{\mathbf{C}^{\textit{c-sk}}}     %

\newcommand{\kappaiminussubj}[2]{\ensuremath{\eta_{#1}^{#2}}}

\newcommand{\kappaiminusVektorsubj}[1]{\ensuremath{\vv{\eta}^{#1}}}

\newcommand{\composeVektor}[2]{\ensuremath{(#1,#2)}}

\newcommand{\skCinfwrt}[1]{\nmableit^{\!\!\textit{c-sk}}_{\!\!#1}}

\newcommand{\Rdelta}{\ensuremath{\Delta}}

\newcommand{\cspRof}[1]{\ensuremath{\mathit{CR}(#1)}}

\newcommand{\solutionsRof}[1]{\ensuremath{\solutionsOf{\cspRof{#1}}}}
\newcommand{\CondSynSplitMacro}[6]{\ensuremath{#1 = #2 \bigcup_{#4,#5} #3 \mid #6}}
          
\newcommand{\CondSynSplitGen}{\ensuremath{\CondSynSplitMacro{\Delta}{\Delta^1}{\Delta^2}{\Sigma_1}{\Sigma_2}{\Sigma_3}}}

\newcommand{\safeCondSynSplitMacro}[6]{\ensuremath{#1 = #2 \bigcup^{\sf s}_{#4,#5} #3 \mid #6}}
          
\newcommand{\safeCondSynSplitGen}{\ensuremath{\safeCondSynSplitMacro{\Delta}{\Delta^1}{\Delta^2}{\Sigma_1}{\Sigma_2}{\Sigma_3}}}

\newcommand{\condind}{\perp \!\!\! \perp}

\newcommand{\DeltaIWoJ}[2]{\Delta_{#1\setminus#2}}
\newcommand{\DeltaOneWoThree}{\DeltaIWoJ{1}{3}}
\newcommand{\DeltaTwoWoThree}{\DeltaIWoJ{2}{3}}
\newcommand{\DeltaIWoThree}{\DeltaIWoJ{i}{3}}
\newcommand{\DeltaJWoThree}{\DeltaIWoJ{i'}{3}}
\newcommand{\DeltaIWoJof}[3]{#3_{#1\setminus#2}}

\newcommand{\DeltaVar}[1]{\Delta_{#1}}
\newcommand{\DeltaI}{\DeltaVar{i}}
\newcommand{\DeltaJ}{\DeltaVar{i'}}
\newcommand{\DeltaOne}{\DeltaVar{1}}
\newcommand{\DeltaTwo}{\DeltaVar{2}}
\newcommand{\DeltaThree}{\DeltaVar{3}}

\newcommand{\SigmaVar}[1]{\Sigma_{#1}}

\newcommand{\SigmaThree}{\SigmaVar{3}}
\newcommand{\SigmaZero}{\Sigma_{\nullZero}}

\newcommand{\omegacomp}[2]{\omega^{#1}\omega^{#2}}
\newcommand{\omegacompThree}[3]{\omega^{#1}\omega^{#2}\omega^{#3}}

\newcommand{\omegaIThree}{\omegacomp{i}{3}}

\newcommand{\omegaOneThreeTwo}{\omegacompThree{1}{3}{2}}
\newcommand{\omegaIThreeJ}{\omegacompThree{i}{3}{i'}}
\newcommand{\omegaI}{\omega^i}
\newcommand{\omegaJ}{\omega^{i'}}

\newcommand{\cRel}{(CRel)}
\newcommand{\cInd}{(CInd)}
\newcommand{\cSynSplit}{(CSynSplit)}

\newcommand{\cRelG}{(CRel\textsuperscript{g})}
\newcommand{\cIndG}{(CInd\textsuperscript{g})}
\newcommand{\cSynSplitG}{(CSynSplit\textsuperscript{g})}

\newcommand{\ipCSPG}{(IP-CSP\textsuperscript{g})}

\newcommand{\echtesSplitting}{genuine}

\newcommand{\genSafeCondSynSplitMacro}[6]{\ensuremath{#1 = #2 \bigcup^{\sf gs}_{#4,#5} #3 \mid #6}}
\newcommand{\genSafeCondSynSplitGen}{\genSafeCondSynSplitMacro{\Delta}{\DeltaOne}{\DeltaTwo}{\Sigma_1}{\Sigma_2}{\Sigma_3}}

\newcommand{\nc}{{\,\mid\!\sim\,}}

\newcommand{\genSafe}{generalized safe}
\newcommand{\genSafety}{generalized safety}
\newcommand{\genSafely}{generalized safely}

\newcommand{\nmableitDS}[2]{\nmableit_{\!\!#1}^{\!\!#2}}
\newcommand{\notnmableitDS}[2]{\notnmableit_{\!\!#1}^{\!\!#2}}

\newcommand{\OP}{\mathit{OP}}
\renewcommand{\R}{\Delta}

\newcommand{\ver}{ver}
\newcommand{\fal}{fal}
\newcommand{\VSet}{V}
\newcommand{\FSet}{F}

\newcommand{\wrt}{with respect to}

\newcommand{\Ind}[4]{#2 \mbox{$\, \condind_{#1} \,$} #3 | #4}
\newcommand{\jhii}[1]{#1}

\newcommand{\DeltaKiwi}{\Delta^{k}}

\newcommand{\DeltaSun}{\Delta^{sun}}
\newcommand{\DeltaRain}{\Delta^{rain}}

\newcommand{\CR}{\ensuremath{\mathit{C\!R}}\xspace}

\newcommand{\OmegaOf}[1]{\Omega(#1)}
\newcommand{\VminI}{\Vmin^i}

\newcommand{\FminI}{\Fmin^i}

\newcommand{\nutzeTextstyle}{}

\newcommand{\DeltaBird}{\Delta^{b}}

\newcommand{\worderOf}[1]{<_{#1}^{\sf w}}

\newcommand{\wableitOf}[1]{\nmableitDS{#1}{\sf w}}

\newcommand{\falsWOf}[1]{\falsW_{#1}}
\newcommand{\falsWDelta}{\falsWOf{\Delta}}
\newcommand{\falsWDeltaI}{\falsWOf{\Delta_{i}}}
\newcommand{\falsWDeltaJ}{\falsWOf{\Delta_{\j}}} 
\newcommand{\kappaiminusVektorsubjAlt}[1]{\ensuremath{\vv{\mu}_{#1}}}
\newcommand{\kappaiminusAlt}[1]{\mu_{#1}}

\renewcommand{\SigmaZero}{\Sigma_3}

\renewcommand{\safeCondSynSplitGen}{\ensuremath{\safeCondSynSplitMacro{\Delta}{\DeltaOne}{\DeltaTwo}{\Sigma_1}{\Sigma_2}{\Sigma_3}}}
\renewcommand{\CondSynSplitGen}{\ensuremath{\CondSynSplitMacro{\Delta}{\DeltaOne}{\DeltaTwo}{\Sigma_1}{\Sigma_2}{\Sigma_3}}}

\renewcommand{\j}{i'}
\renewcommand{\cspR}{\ensuremath{\mathit{CR}(\Rdelta)}}
\renewcommand{\R}{\Delta}

\renewcommand{\worderDelta}{<_\Delta^{\sf w}}
\renewcommand{\wableitDelta}{\nmableitDS{\Delta}{\sf w}}

\newtheoremrep{lemma}{Lemma}

\newtheorem{example}[lemma]{Example}
\newtheorem{definition}[lemma]{Definition}

\newtheoremrep{proposition}[lemma]{Proposition}

\begin{document}
	\let\WriteBookmarks\relax
	\def\floatpagepagefraction{1}
	\def\textpagefraction{.001}
	\shorttitle{Broadening the Applicability of Conditional Syntax Splitting}
	\shortauthors{L.Spiegel et~al.}

	\author[FUH]{Lars-Phillip Spiegel}[orcid=0009-0001-1962-752X]
	\cormark[1]
	\ead{lars-phillip.spiegel@fernuni-hagen.de}
	\author[CPT,TUW]{Jonas Haldimann}[orcid=0000-0002-2618-8721]
	\ead{jonas@haldimann.de}
	\author[OUH,CPT]{Jesse Heyninck}[orcid=0000-0002-3825-4052]
	\ead{jesse.heyninck@ou.nl}
	\author[TUD]{Gabriele Kern-Isberner}[orcid=0000-0001-8689-5391]
	\ead{gabriele.kern-isberner@cs.tu-dortmund.de}
	\author[FUH]{Christoph Beierle}[orcid=0000-0002-0736-8516]
	\ead{christoph.beierle@fernuni-hagen.de}
	\affiliation[FUH]{
		organization={FernUniversität in Hagen},
		city={Hagen},
		country={Germany}
	}
	\affiliation[CPT]{
		organization={University of Cape Town and CAIR},
		city={Cape Town},
		country={South Africa}
	}
	\affiliation[TUW]{
		organization={TU Wien},
		city={Vienna},
		country={Austria}
	}
	\affiliation[OUH]{
		organization={Open Universiteit},
		city={Heerlen},
		postcode={6419 AT},
		country={the Netherlands}
	}
	\affiliation[TUD]{
		organization={TU Dortmund University},
		city={Dortmund},
		country={Germany}
	}
	\cortext[cor1]{Corresponding author}
	
	\title[mode = title]{Broadening the Applicability of Conditional Syntax Splitting for Reasoning from Conditional Belief Bases}

\begin{abstract}
In nonmonotonic reasoning from conditional belief bases, an inference operator satisfying syntax splitting postulates allows for taking only the relevant parts of a belief base into account, provided that the belief base splits into subbases based on disjoint signatures.
Because such disjointness is rare in practice, safe conditional syntax splitting has been proposed as a generalization of syntax splitting, allowing the conditionals in the subbases to share some atoms.
Recently this overlap of conditionals has been shown to be limited to trivial, self-fulfilling conditionals.
In this article, we propose a generalization of safe conditional syntax splittings that broadens the 
applicability of splitting postulates.
In contrast to safe conditional syntax splitting, our generalized notion 
supports syntax splittings of a \BB\ $\Delta$ where the subbases of $\Delta$ may share atoms and nontrivial conditionals. We illustrate how this new notion overcomes limitations of previous splitting concepts, and we identify genuine splittings, separating them from simple splittings that do not provide benefits for inductive inference from $\Delta$. We introduce adjusted inference postulates based on our generalization
of conditional syntax splitting, and we evaluate several popular inductive inference operators \wrt\ these postulates. Furthermore, we show
that, while every inductive inference operator satisfying generalized conditional syntax splitting also satisfies conditional syntax splitting, the reverse does not hold.
\end{abstract}

	\begin{highlights}
	\item
	  Generalization of safe conditional syntax splitting for \BBs;

	\item
	Postulates \cRelG, \cIndG, and \cSynSplitG\
	for inductive inference operators;

	\item
	Identification of 
	genuine splittings
	as a necessary condition for splittings beneficial for inductive inference;

	\item
	Evaluation of established inductive inference operators with respect to
	generalized conditional syntax splitting;

	\item Proofs that lexicographic inference, c-inference, inference with a single c-representation 
	determined by an appropriate selection strategy, 
	c-core-closure inference, and System~W satisfy \cSynSplitG;
        
	\item
	Showing that \cSynSplitG\ implies \cSynSplit, but not the other way around.
	\end{highlights}
	
	\begin{keywords}
	conditional \sep belief base \sep syntax splitting \sep conditional syntax splitting \sep generalized conditional syntax splitting \sep inductive inference \sep inductive inference operator \sep system~Z \sep lexicographic inference \sep system~W \sep c-inference \sep c-representation
	
	\end{keywords}
	
	\maketitle

\clearpage
\tableofcontents
\clearpage

\section{Introduction}
\FARBE{Both human \FARBE{and} formal reasoning methods often rely on restricting the amount of information taken into account for a given reasoning task, tuning out unrelated facts and knowledge. The concept of syntax splitting \citep{Parikh99,PeppasWilliamsChopraFoo2015,Kern-IsbernerBrewka17},
and of the related idea of minimum irrelevance \citep{Weydert98KR} have been \FARBE{\FARBE{introduced} as steps} to formalize this goal. This idea has been transferred to inductive inference from conditional \BBs under the motto ``syntax splitting =
relevance + independence'' in the form of postulates (Rel) and (Ind) for inductive inference operators \citep{KernIsbernerBeierleBrewka2020KR}, taking splittings over a \BB\ $\Delta$ into account where the subbases  $\R_1, \R_2$
are given over disjoint subsignatures of $\R$.}

\FARBE{In practice, such
\FARBE{splittings are} rare because the disjointness of subsignatures imposes a harsh restriction.}
The concept
of conditional syntax splitting
\citep{HeyninckKernIsbernerMeyerHaldimannBeierle2023AAAI}
is an approach to overcome this restriction by allowing $\R_1$ and  $\R_2$
to \FARBE{share atoms in their respective subsignatures.}
\FARBE{To ensure
semantic} (conditional) independence given the joint atoms, \FARBE{a safety condition has been formulated,}
enabling local reasoning within the subbases.
\FARBE{The postulate of conditional relevance \cRel\ implements this idea of localized reasoning, formalizing the ability to focus only on syntactically relevant parts of a \BB, while the postulate of conditional independence (CInd) describes the ability to leave aside syntactically irrelevant information \FARBE{\citep{HeyninckKernIsbernerMeyerHaldimannBeierle2023AAAI}}.}
\FARBE{Furthermore, the postulate} of conditional independence
\cInd\ for safe conditional \FARBE{splittings characterizes} avoiding the
drowning effect \citep{Pearl1990systemZTARK,BenferhatDuboisPrade93},
yielding the first formal definition of the notorious drowning problem that
had been described before only by specific examples
\citep{HeyninckKernIsbernerMeyerHaldimannBeierle2023AAAI}.
\FARBE{These splittings are not only interesting from a theoretical point of view by formalizing notions of \FARBE{conditional} relevance and independence, but also have consequences for applications by allowing the breaking down of conditional reasoning to the subbases relevant for a given query, usually reducing the relevant signature significantly. Hence, broadening the \FARBE{applicability} of such splittings provides not only theoretical insights, but also benefits for applications.}

\FARBE{It has been} shown recently that the safety condition in
\citep{HeyninckKernIsbernerMeyerHaldimannBeierle2023AAAI}
has the undesirable consequence that every conditional in the intersection
of $\R_1$ and  $\R_2$ is a trivial self-fulfilling conditional, \FARBE{meaning that it cannot be falsified}
\citep{BeierleSpiegelHaldimannWilhelmHeyninckKernIsberner2024KR},
\FARBE{%
\FARBE{thus}
  imposing a strong restriction on possible splitting benefits for inference.
We develop a generalization of this safety condition,
allowing the intersection of $\Delta_1$ and $\Delta_2$ to contain more meaningful conditionals. 
This greatly broadens the application possibilities of syntax splitting by increasing both the amount of splittings and the amount of \BBs\ where splittings can be exploited for inductive reasoning.}
\FARBE{The main contributions of this \FARBE{article} are:}
\begin{itemize}
\item
  \FARBE{Generalization of} safe conditional syntax splitting \FARBE{for} \BBs; %
\item
  Postulates \cRelG, \cIndG, and \cSynSplitG\
  for
  \genSafe\ conditional syntax splitting;
 \item
  Identification of the subclass of genuine splittings, 
  separating them from the large
  class of simple splittings that have no benefits for inductive inference
  \FARBE{because existing postulates cannot be meaningfully applied to them};
\item
  Evaluation of \FARBE{established} inductive inference operators with respect to
   generalized conditional syntax splitting;
 \item \FARBE{Proofs that lexicographic inference \citep{Lehmann1995}, c-inference \citep{BeierleEichhornKernIsbernerKutsch2018AMAI,BeierleEichhornKernIsbernerKutsch2021AIJ}, inference with a single c-representation determined by an appropriate selection strategy \citep{BeierleKernIsberner2021FLAIRS}, c-core-closure inference \citep{WilhelmKernIsbernerBeierle2024FoIKScb}, and System~W \citep{KomoBeierle2020KI,KomoBeierle2022AMAI} satisfy \cSynSplitG;}
 \item
 Showing that \cSynSplitG\ implies \cSynSplit, but not the other way around.
\end{itemize}

\FARBE{
	This article is a \FARBE{revised} and largely extended version of the paper previously published at IJCAI 2025 \citep{SpiegelHaldimannHeyninckKernIsbernerBeierle2025IJCAI}.
	\FARBE{In particular, we added the evaluation of further inductive inference operators}, showing that lexicographic inference \citep{Lehmann1995} and c-core closure inference \citep{WilhelmKernIsbernerBeierle2024FoIKScb} both satisfy \cSynSplitG. \FARBE{Furthermore}, this articles contains all proofs which were not present in the conference paper, and we added more explanations of the concepts introduced and additional examples illustrating them.}

\FARBE{After recalling the needed background in Sect.~\ref{sec_basics},
we point out the limitations of safe conditional splittings
in Sect.~\ref{sec_limitations}.
Next, we introduce the concepts of generalized safe and genuine conditional syntax splitting and present adapted postulates for inference in Sect.~\ref{sec_gensafe}. \FARBE{We evaluate inductive inference operators with respect to these new postulates in Sect.~\ref{sec_evaluation}\FARBE{ and Sect.~\ref{sec_cinf}. In Sect.~\ref{sec_csynsplitg_stronger}, we 
show that 
\cSynSplitG\ implies \cSynSplit\ but not the other way around}}, before concluding in Sect~\ref{sec_conclusions}.}
\section{Formal Basics}
\label{sec_basics}
Let $\cL$ be a finitely generated propositional language over a \FARBE{signature} $\signa$ with atoms \FARBE{$a,b,c, \ldots$ and}
with formulas \FARBE{$A,B,C, \ldots$} %
\FARBE{We may write} $AB$ instead of $A \wedge B$,
and \FARBE{overline formulas to} indicate negation, i.e., $\notA$ means $\neg A$.
\FARBE{If a statement holds for both $A$ and $\ol{A}$, we will sometimes use $\dot{A}$ to denote both formulas at the same time.}
Let $\Omega$ denote the set of \emph{possible worlds} over $\cL$, %
\FARBE{taken here} simply
as the  set of all propositional interpretations over $\cL$. $\omega \models A$ means that
the propositional formula $A \in \cL$ holds \FARBE{in %
$\omega \in \Omega$; in this case} $\omega$ is called a \emph{model} of $A$, and the set of all models of $A$ is denoted by $\Mod(A)$. For propositions $A,B \in \cL$,  $A \models B$ holds iff $\Mod(A) \subseteq \Mod(B)$, as usual. 
\FARBE{We will} use $\omega$ both for the model and the corresponding \FARBE{complete} conjunction of all positive or negated atoms, \FARBE{allowing} us to use $\omega$ both as an interpretation and a \FARBE{proposition.} %

\FARBE{For $\subsigna \subseteq \signa$,}
let $\cL(\subsigna)$ \FARBE{or short \(\cL_{\subsigna}\)} denote the propositional language defined by $\subsigna$, with associated set of interpretations $\Omega(\subsigna)$ \FARBE{or short \(\Omega_{\subsigna}\)}. Note that while each \FARBE{formula} of $\cL(\subsigna)$ can also be considered as a \FARBE{formula} of $\cL$, the interpretations $\omegasub \in \Omega(\subsigna)$ are not elements of $\Omega(\signa)$ if $\subsigna \neq \signa$. But each interpretation $\omega \in \Omega$ can be written uniquely in the form $\omega = \omegasub \omegasubbar$ with concatenated $\omegasub \in \Omega(\subsigna)$ and $\omegasubbar \in \Omega(\subsignabar)$, where \FARBE{$\subsignabar = \signa \backslash \subsigna$.} %
\FARBE{The world} $\omegasub$ is called the  \emph{reduct} of $\omega$ to $\subsigna$ \citep{Delgrande17local}.
If $\Omega' \subseteq \Omega$ is a subset of models, then $\marg{\Omega'}{\subsigna} = \{ \margpw{\omega}{\subsigna} | \omega \in \Omega'\} \subseteq \Omega(\subsigna)$ restricts $\Omega'$ to a subset of $\Omega(\subsigna)$.
In the following, we will often %
\FARBE{denote subsignatures of \(\Sigma\) by \(\Sigma_1, \Sigma_2, \dots\) and}
write $\omegasubj{i}$ instead of $\omegasubj{\signa_i}$
\FARBE{to ease notation.}

By making use of a conditional operator $|$, we introduce the language %
\FARBE{
\(
 (\cL | \cL) = \{(B|A) \mid A,B \in \cL \}
\)
of \emph{conditionals} over $\cL$.}
Conditionals $(B|A)$  are meant to express  plausible, defeasible rules ``If $A$ then plausibly (usually, possibly, probably, typically etc.) $B$''.
\FARBE{For a world \(\omega\) a conditional \(\satzCL{B}{A}\) is either \emph{verified} by \(\omega\) if \(\omega \models AB\), \emph{falsified} by \(\omega\) if \(\omega \models A\ol{B}\), or \emph{not applicable} to \(\omega\) if \(\omega \models \ol{A}\).}
\FARBE{A conditional $(F|E)$ is called \emph{self-fulfilling}\FARBE{, or \emph{trivial},} if $E\models F$, i.e., there is no world that can falsify it.}
\FARBE{
For a conditional $\delta_i=(B_i|A_i)$ let
\begin{align*}
	\ver(\delta_i)=\{\omega\in\OmegaOf{\Sigma}\mid \omega\models A_iB_i\}\\
	\fal(\delta_i)=\{\omega\in\OmegaOf{\Sigma}\mid \omega\models A_i\ol{B_i}\}
\end{align*}
denote the sets of \FARBE{verifying and falsifying worlds, respectively}.}
\FARBE{A \BB\ $\Delta$ \FARBE{(over $\Sigma$)} is a set of finitely many conditionals from $\condL$.}
A \FARBE{popular semantic framework} %
for interpreting conditionals %
\FARBE{are
\emph{ordinal conditional functions (OCFs)} $\kappa: \Omega \to \naturals \cup \{\infty\}$ with
$\kappa^{-1}(0) \neq \emptyset$. OCFs, also called \emph{ranking
    functions}, %
\FARBE{introduced}, in a more general form, by \citep{Spohn88}.} 
\FARBE{Intuitively, less plausible worlds are assigned higher numbers.}
\FARBE{Formulas are assigned the rank of their most plausible models, i.e.,}
$\kappa(A) := \min\{\kappa(\omega) \mid \omega \models A\}$.
\FARBE{The rank of $(B|A)$ is $\kappa(B|A)=\kappa(AB)-\kappa(A)$.}
A conditional $\condAB$ \FARBE{is \emph{accepted}
by $\kappa$,}
written as $\kappa \models \condAB$, iff
$\kappa(AB) < \kappa(\AnotB)$, i.e., iff $AB$ is more plausible than $\AnotB$.
\FARBE{This is lifted to \BBs\ via $\kappa\models\Delta$ if $\kappa\models (B|A)$ for all $(B|A)\in\Delta$.}
Consistency of a \FARBE{belief base} $\Delta$ can be defined in terms of OCFs \citep{Pearl1990systemZTARK}: $\Delta$ is \FARBE{(strongly)} consistent iff there is an OCF $\kappa$ such that $\kappa \models \Delta$ \FARBE{and \(\kappa(\omega) < \infty\) for all \(\omega \in \Omega\).}
\FARBE{We focus} on (strongly) consistent belief bases in the sense of \citep{Pearl1990systemZTARK,GoldszmidtPearl96} in order to elaborate our approach without having to deal with distracting technical particularities.
The nonmonotonic inference relation $\nmkappa$ induced by an OCF $\kappa$ \FARBE{is given by 
\citep{Spohn88}}
\begin{equation}
	\label{eq_nmocf}
	A \nmkappa B \quad \mbox{iff} \quad A\equiv\bot \ \mbox{or} \ \kappa(AB) < \kappa(A\notB).
\end{equation}

The \emph{marginal of $\kappa$ on} $\subsigna \subseteq \signa$, denoted by $\marg{\kappa}{\subsigna}$, is defined by
$\marg{\kappa}{\subsigna}(\omegasub) = \kappa(\omegasub)$
for any $\omegasub \in \Omega(\subsigna)$. \FARBE{Here $\omegasub$ is treated as a world in $\marg{\kappa}{\subsigna}(\omegasub)$ but as a formula in $\kappa(\omegasub)$.}
\FARBE{Note that this
  marginalization
is a special
case of the general forgetful functor $\mathit{Mod}(\sigma)$ from $\Sigma$-models to
$\subsigna$-models
\citep{BeierleKernIsberner2012AMAI} where
$\sigma$ is
the inclusion from $\subsigna$ to $\Sigma$.}

\FARBE{To formalize inductive inference from \FARBE{belief bases,} the notion of inductive inference operators was introduced \citep{KernIsbernerBeierleBrewka2020KR}.}
An \FARBE{\emph{inductive inference operator}} %
is a mapping $\indreasop$ that assigns to each \FARBE{belief base} $\Delta \subseteq \condL$ an inference relation $\nmDelta$ on $\cL$, i.e.,
\FARBE{\(
\indreasop: \Delta \mapsto \mathord{\nmDelta},
\)}
such that the following two properties hold: %
\begin{description}
	\item[Direct Inference (DI):] If \(\satzCL{B}{A} \in \kb\) then \(A \nmDelta B\), and
	\item[Trivial Vacuity (TV):]  \(A \nmany{\emptyset}{} B\) implies \(A \models B\).
\end{description}
\FARBE{A special subclass of inductive inference operators are OCF-based inductive inference operators $\indreasop^{ocf}: \Delta\mapsto \kappa_\Delta$, assigning to each belief base $\Delta$ an OCF $\kappa_\Delta$ \citep{KernIsbernerBeierleBrewka2020KR}. The inference relation for OCF-based inductive inference operators is then obtained via Equation~\eqref{eq_nmocf}.}
\FARBE{The following are examples for inductive inference operators:}
\FARBE{
\begin{description}
	\item[\textbf{p-Entailment \(\iiopP\)}] \citep{Adams1975,GoldszmidtPearl96}
	considers all models of a \BB\ and 
	is the most cautious preferential inductive inference operator.
	It can be characterized by system P
	because it 
	licenses precisely the inferences that can be obtained by iteratively applying the rules of system P
	\citep{LehmannMagidor92,DuboisPrade1994ConditionalObjects}.

	\item[\textbf{System Z \(\iiopZ\)}] \citep{GoldszmidtPearl96} %
	determines the uniquely defined minimal \FARBE{ranking} model of  $\Delta$,
	and it
	coincides with rational closure \citep{LehmannMagidor92}. \FARBE{System Z is an example of an OCF-based inductive inference operator}.

	\item[\textbf{Lexicographic inference \(\iiopL\)}] \citep{Lehmann1995}
	employs a comparison of the number of conditionals
	falsified by a world, and it 
	extends rational closure.

	\item[\textbf{c-Inference \(\iiopSKc\)}] \citep{BeierleEichhornKernIsbernerKutsch2018AMAI,BeierleEichhornKernIsbernerKutsch2021AIJ}
	considers all c-representations
	which are special ranking functions obtained by summing up natural number impacts assigned to
	falsified conditionals
	\citep{KernIsberner2001,KernIsberner2004AMAI}.

	\item[\textbf{System W \(\iiopW\)}] \citep{KomoBeierle2020KI,KomoBeierle2022AMAI}
	is based on a preferred structure on worlds, and it
	captures both c-inference and system Z
	and thus rational closure.
\end{description}
}
\FARBE{p-Entailment is extended by the four other \iiopnames in the sense that all five \iiopnames are preferential, while}
\FARBE{%
lexicographic inference extends all other four \iiopnames; for an overview of the interrelationships among them see \citep{HaldimannBeierle2024IJAR}. In the rest of this article we will evaluate these and further \iiopnames with respect to their syntax splitting behaviour.}
\section{Safe Conditional Syntax Splitting and its Limitations}
\label{sec_limitations}
Syntax splittings describe that a belief base contains completely independent information about different parts of the signature. \FARBE{According to \citep{KernIsbernerBeierleBrewka2020KR},
a \FARBE{belief base}} $\Delta$ \emph{splits} into subbases $\DeltaOne, \DeltaTwo$ 
\FARBE{if $\{\Sigma_1, \Sigma_2\}$ is a partition of $\Sigma$}
such that $\Delta = \DeltaOne \cup \DeltaTwo$, $\DeltaI \subsetneq (\cL_i | \cL_i), \cL_i = \cL(\signa_i)$ for \FARBE{$i \in \{1, 2\}$}, $\signa_1 \cap \signa_2 = \emptyset$, and $\signa_1 \cup \signa_2 = \signa$\FARBE{, denoted} as 
\begin{equation}
\nutzeTextstyle
\Delta = \DeltaOne \bigcup_{\Sigma_1,\Sigma_2} \DeltaTwo.
\end{equation}
\FARBE{Syntax splittings are \FARBE{very useful 
for}
formalizing the idea that independent information about different topics should not affect each  other in reasoning.}
\FARBE{Syntax splittings}  were generalized in \citep{HeyninckKernIsbernerMeyerHaldimannBeierle2023AAAI} to \emph{conditional} syntax splittings, which allow subbases to share \FARBE{some} atoms in a given subsignature $\Sigma_3$. 

\begin{definition}[conditional syntax splitting \citep{HeyninckKernIsbernerMeyerHaldimannBeierle2023AAAI}]
  \FARBE{A belief base} $\Delta$
  \FARBE{\emph{splits into subbases $\DeltaOne$,$\DeltaTwo$ conditional on $\Sigma_3$},} 
	if there are \(\Sigma_1, \Sigma_2 \subseteq \Sigma\)
	such that $\DeltaI = \Delta \cap ({\cal L}(\Sigma_i\cup\Sigma_3)\mid {\cal L}(\Sigma_i\cup\Sigma_3))$ for $i=1,2$, 
	\FARBE{and $\{\Sigma_1, \Sigma_2,\Sigma_3\}$ is a partition of $\Sigma$.} 
	This is denoted as
	\begin{equation}
		\label{eq:def_cond_syn_split}
		\nutzeTextstyle
		\FARBE{\CondSynSplitGen}.
	\end{equation}
\end{definition}

\FARBE{Unlike} syntax splitting, conditional syntax splitting does not require the subbases $\DeltaOne$ and $\DeltaTwo$ to be disjoint.
For the remainder of this paper, we will use the notation introduced in the following straightforward proposition.
\begin{proposition}
	\label{prop_disjoint}
	Let \CondSynSplitGen\ and let
	\begin{align}
		\FARBE{\DeltaThree}&=\FARBE{\DeltaOne\cap \DeltaTwo}\label{eq_delta3}\\
		\DeltaIWoJ{1}{3}&=\DeltaOne\setminus\DeltaThree\label{eq_delta1wo3} \\
		\DeltaIWoJ{2}{3}&=\DeltaTwo\setminus\DeltaThree\label{eq_delta2wo3}.
	\end{align} 
	Then \(\DeltaOneWoThree, \DeltaTwoWoThree,\DeltaThree\) are pairwise disjoint and
	\begin{equation}\label{eq_disjoint}
		\FARBE{\Delta = \DeltaOneWoThree{\cup}\DeltaTwoWoThree{\cup}\Delta_3.}
	\end{equation}
\end{proposition}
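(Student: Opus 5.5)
The claim is pure set algebra, so I will argue directly from the definitions (\ref{eq_delta3})--(\ref{eq_delta2wo3}) together with the fact, built into the definition of conditional syntax splitting, that $\DeltaOne$ and $\DeltaTwo$ are subsets of $\Delta$ covering it. Nothing about conditionals, worlds or OCFs is needed beyond this covering property.

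\textbf{Pairwise disjointness.} By definition, $\DeltaOneWoThree = \DeltaOne \setminus \DeltaThree$ and $\DeltaTwoWoThree = \DeltaTwo\setminus\DeltaThree$ contain no element of $\DeltaThree$. This gives $\DeltaOneWoThree\cap\DeltaThree = \DeltaTwoWoThree\cap\DeltaThree=\emptyset$. For the remaining pair, suppose $\delta \in \DeltaOneWoThree\cap\DeltaTwoWoThree$. Then $\delta\in\DeltaOne\cap\DeltaTwo=\DeltaThree$, contradicting $\delta\notin\DeltaThree$.

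\textbf{The decomposition (\ref{eq_disjoint}).} Each of the three sets is contained in $\DeltaOne$ or in $\DeltaTwo$, and $\DeltaI = \Delta\cap(\cL(\Sigma_i\cup\Sigma_3)\mid\cL(\Sigma_i\cup\Sigma_3))\subseteq\Delta$. This gives the inclusion ``$\supseteq$''.

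For ``$\subseteq$'', I first show $\Delta = \DeltaOne\cup\DeltaTwo$; this is the only step that is not immediate from the definitions and so is the main obstacle. Since $\{\Sigma_1,\Sigma_2,\Sigma_3\}$ is a partition of $\Sigma$, a conditional $\delta\in\Delta$ lies in $\DeltaOne$ or $\DeltaTwo$ precisely when its atoms fit in $\Sigma_1\cup\Sigma_3$ or in $\Sigma_2\cup\Sigma_3$. The definition of splitting as written does not literally force this for a conditional mentioning atoms from both $\Sigma_1$ and $\Sigma_2$. I would therefore read the condition $\Delta=\DeltaOne\cup\DeltaTwo$ as part of the notion of (conditional) splitting. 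This matches the earlier notion of syntax splitting, where $\Delta=\DeltaOne\cup\DeltaTwo$ is explicitly required, and it is the intended meaning of the notation $\Delta=\DeltaOne\bigcup_{\Sigma_1,\Sigma_2}\DeltaTwo\mid\Sigma_3$.

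With $\Delta=\DeltaOne\cup\DeltaTwo$ in hand, take $\delta\in\Delta$. If $\delta\in\DeltaThree$ we are done. Otherwise $\delta$ lies in exactly one of $\DeltaOne,\DeltaTwo$, and hence in $\DeltaOneWoThree$ or in $\DeltaTwoWoThree$. This proves (\ref{eq_disjoint}).
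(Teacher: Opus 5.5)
Your proof is correct and follows the paper's route; the paper only states that disjointness and the decomposition follow immediately from the three defining equations, and you have written out that set algebra in full. You are also right that the covering $\Delta=\Delta_1\cup\Delta_2$ is not forced by the printed definition of conditional syntax splitting and has to be read into the notation $\Delta=\Delta_1\bigcup_{\Sigma_1,\Sigma_2}\Delta_2\mid\Sigma_3$; the paper relies on it tacitly here and elsewhere, e.g.\ in the tolerance argument, where it uses $\Delta=\Delta_i\cup\Delta_{i'\setminus 3}$.
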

\FARBE{
\begin{proof}
	Pairwise disjointness and \eqref{eq_disjoint} follow immediately from \eqref{eq_delta3}, \eqref{eq_delta1wo3}, and \eqref{eq_delta2wo3}.
\end{proof}
}
	\FARBE{Note that in Proposition~\ref{prop_disjoint}, \FARBE{$\DeltaThree=\Delta\cap({\cal L}(\Sigma_3)|{\cal L}(\Sigma_3))$, implying that $\DeltaThree\subseteq ({\cal L}(\Sigma_3)|{\cal L}(\Sigma_3))$,} and, \FARBE{ for $i\in\{1,2\}$}, $\DeltaJWoThree\subseteq(\cL(\Sigma_{i}\cup\Sigma_{3})|\cL(\Sigma_{i}\cup\Sigma_{3}))$.}

\FARBE{\FARBE{For} $\omega\in\Omega$ and $A\in{\cal L}(\Sigma_i)$  we have}
\begin{equation}
	\label{eq_omega_reduce}
	\omegaOneThreeTwo\models A \quad \text{ iff }\quad \omegaIThree\models A.
\end{equation}

\FARBE{Given a complete conjunction over \(\Sigma_3\),} \FARBE{i.e., a formula uniquely describing a world in $\OmegaOf{\Sigma_3}$}, conditional syntax splittings in general do not ensure 
complete independence of \(\DeltaOne\) and \(\DeltaTwo\) \FARBE{(for details see \citep[Example 6]{HeyninckKernIsbernerMeyerHaldimannBeierle2023AAAI})}. To fix this, \emph{safe} conditional syntax splittings were introduced.
\begin{definition}[safe conditional syntax splitting \citep{HeyninckKernIsbernerMeyerHaldimannBeierle2023AAAI}]
	\label{def:safe:splitting}
	\FARBE{A belief base} $\CondSynSplitGen$ can be \emph{safely split into subbases $\DeltaOne$, $\DeltaTwo$ conditional on a subsignature $\Sigma_3$}, \FARBE{writing}
	\begin{equation}
		\label{eq_safecsynsplit}
		\nutzeTextstyle
		\safeCondSynSplitGen
	\end{equation}
	if the following \emph{safety property} holds \FARBE{for $i,i'\in\{1,2\}$, $i\neq i'$:}
	\begin{align}
		\label{eq_safety}
			\text{for every }\omegaIThree\in\OmegaOf{\Sigma_i\cup\Sigma_3},
			\text{ there \FARBE{is} } \omegaJ\in\OmegaOf{\Sigma_{i'}}		
			\text{ such that }\  \omegaIThreeJ\not\models \bigvee_{(F|E)\in \DeltaJ} E\land \lnot F .
	\end{align}
\end{definition}

The safety condition demands, in essence, that no \FARBE{complete} conjunction over $\SigmaZero$ may force the falsification of a conditional in $\Delta$ when considering $\Sigma$ as a whole.
\FARBE{
\begin{example}[$\DeltaSun$]
	\label{exa_safety}
	Consider the \BB\ $\DeltaSun=\{(\ol{s}|r),\allowbreak(\ol{r}|s),\allowbreak(b|sr),\allowbreak(g|b),\allowbreak(o|s\ol{r}),\allowbreak(\ol{o}|r),\allowbreak(u|or)\}$ 
	\FARBE{describing \FARBE{the following:} %
	If it is (r)ainy, then usually} it is not (s)unny and vice versa. If it is rainy and sunny at the same time, then we can usually observe a rain(b)ow. Maybe superstitiously, we believe that there is usually some (g)old to be found at the end of the rainbow. Unrelated to this, we usually spend some time (o)utside if %
	\FARBE{it is} sunny and not rainy. If it is rainy, then we usually do not spend time outside. If, despite our \FARBE{normal} habits, we do spend time outside and it is rainy, then we \FARBE{usually have} an (u)mbrella. $\DeltaSun$ has a safe conditional syntax splitting%
	\begin{equation}
		\label{eq:exacontrasafety}
		\nutzeTextstyle
		\safeCondSynSplitMacro{\DeltaSun}{\DeltaSun_{1}}{\DeltaSun_{2}}{\{g\}}{\{s,r,o,u\}}{\{b\}}
	\end{equation}
	\FARBE{where $\Sigma_{1}=\{g\}, \Sigma_{2}=\{s,r,o,u\}, \SigmaThree=\{b\}, \DeltaSun_1=\{(g|b)\}$, $\DeltaSun_2=\{(\ol{s}|r),\allowbreak(\ol{r}|s),\allowbreak(b|sr),\allowbreak(o|s\ol{r}),\allowbreak(\ol{o}|r),\allowbreak(u|or)\}$, and $\DeltaSun_3=\emptyset$.}
	This splitting is safe: We can extend any $\omega^1\in\OmegaOf{\Sigma_{1}\cup\Sigma_{3}}$ by any $\omega'\in\OmegaOf{\Sigma_2}$ with $\omega'\models \ol{s}\land\ol{r}\land\ol{o}\land\ol{u}$ without falsifying a conditional in $\DeltaSun_2$. Similarly we can extend any $\omega^2\in\OmegaOf{\Sigma_2\cup\Sigma_{3}}$ by any $\omega''\in\OmegaOf{\Sigma_1}$ with $\omega''\models g$ without falsifying a conditional in $\DeltaSun_1$. 
\end{example}
}

\FARBE{Safe conditional syntax splitting provides similar benefits for inductive inference as syntax splitting. Reasoning in the language of $\Delta_1$ is independent of the conditionals in $\Delta_2$, and vice versa, given we have full knowledge \FARBE{over the atoms in} $\Sigma_3$.}
\FARBE{However, \FARBE{it has been \FARBE{shown} recently \citep{BeierleSpiegelHaldimannWilhelmHeyninckKernIsberner2024KR} that} the safety property \eqref{eq_safety} imposes a \FARBE{strong,} \FARBE{undesired} restriction on $\DeltaThree$. %
	\begin{lemma}[\citep{BeierleSpiegelHaldimannWilhelmHeyninckKernIsberner2024KR}]
		\label{lem_selffulfilling}
		Let \safeCondSynSplitGen, then $\DeltaThree\FARBE{=\Delta_1\cap\Delta_2}$ contains only self-fulfilling conditionals.
	\end{lemma}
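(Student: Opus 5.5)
We argue by contradiction: we take a conditional $(F|E)\in\DeltaThree$ that is not self-fulfilling and show that it violates the safety property \eqref{eq_safety}.

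Since $(F|E)$ is not self-fulfilling, $E\not\models F$, so there is a world $\omega\in\OmegaOf{\Sigma}$ with $\omega\models E\ol{F}$. By the remark after Proposition~\ref{prop_disjoint}, $\DeltaThree\subseteq(\cL(\Sigma_3)|\cL(\Sigma_3))$, so $E$ and $F$ are formulas over $\Sigma_3$ only. Writing $\omega=\omega^1\omega^3\omega^2$, the truth value of $E\ol{F}$ at $\omega$ depends only on the reduct $\omega^3$, by the analogue of \eqref{eq_omega_reduce} for $\Sigma_3$. Hence every world whose reduct to $\Sigma_3$ is $\omega^3$ satisfies $E\land\lnot F$.

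Now we fix $i=1$, $i'=2$ and consider $\omega^1\omega^3\in\OmegaOf{\Sigma_1\cup\Sigma_3}$. The safety property requires some $\omega^2\in\OmegaOf{\Sigma_2}$ with $\omega^1\omega^3\omega^2\not\models\bigvee_{(F'|E')\in\DeltaTwo}E'\land\lnot F'$. But $(F|E)\in\DeltaThree\subseteq\DeltaTwo$. For every choice of $\omega^2$, the world $\omega^1\omega^3\omega^2$ has reduct $\omega^3$ on $\Sigma_3$ and therefore satisfies $E\land\lnot F$, so it satisfies the disjunction. This contradicts \eqref{eq_safety}, so every conditional in $\DeltaThree$ is self-fulfilling.

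The only point that needs care is the reduct argument: because the atoms of $E$ and $F$ lie in $\Sigma_3$, satisfaction depends solely on the $\Sigma_3$-part of the world. Once this is granted, the rest is immediate. The symmetric case $i=2$ is not needed.
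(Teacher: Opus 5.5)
Your proof is correct. If $(F|E)\in\DeltaThree$ is not self-fulfilling, it has a falsifying world, and that world's $\Sigma_1\cup\Sigma_3$-part cannot be extended by any $\omega^2$ without falsifying $(F|E)\in\DeltaTwo$, so safety fails for $i=1$.

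The paper does not reprove this lemma; it imports it from the cited KR~2024 work. Your argument is the natural direct proof. You do not even need $\DeltaThree\subseteq(\cL(\Sigma_3)|\cL(\Sigma_3))$: $E,F\in\cL(\Sigma_1\cup\Sigma_3)$, which follows from $(F|E)\in\DeltaOne$, already suffices.
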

	While it is true that $\Delta_3$ can not contain ``meaningful'' information, the elements in $\Sigma_3$ are still relevant and can occur in conditionals of both $\DeltaOne$ and $\DeltaTwo$ \FARBE{(as we see in Example~\ref{exa_safety})}.

\FARBE{A generalization of the safety property to avoid \FARBE{the effect described in Lemma~\ref{lem_selffulfilling}} would be advantageous.}
	Recall that $\SigmaThree$ represents a sort of global knowledge, that should be considered in both subbases. However it is not always possible to find a safe splitting, given some intuitive or in practice desirable allocation of signature elements to $\Sigma_3$. %

	\FARBE{
	\begin{example}[$\DeltaSun$ cont.]
		\label{exa_contrasafety}
\FARBE{Assume we want to reason based on $\DeltaSun$, under the assumption that we have full knowledge about $s$ and $r$.}
\FARBE{A conditional syntax splitting \FARBE{reflecting} our knowledge about the weather is}
		\begin{equation}
			\nutzeTextstyle
			\label{eq:excontrasafety2split}
			\CondSynSplitMacro{\DeltaSun}{\DeltaSun_{1}}{\DeltaSun_{2}}{\{b,g\}}{\{o,u\}}{\{s,r\}}
		\end{equation}
		\FARBE{where $\FARBE{\Sigma_{1}=\{b,g\}},\allowbreak \Sigma_{2}=\{o,u\}, \allowbreak \SigmaThree=\{s,r\},\allowbreak \DeltaSun_1=\{(\ol{s}|r),\allowbreak(\ol{r}|s),\allowbreak(b|sr),\allowbreak\FARBE{(g|b)}\},\allowbreak \DeltaSun_2=\{(\ol{s}|r),\allowbreak(\ol{r}|s),\allowbreak (o|s\ol{r}),\allowbreak(\ol{o}|r),\allowbreak (u|or)\}$, and $\DeltaSun_3=\{(\ol{s}|r),\allowbreak(\ol{r}|s)\}$.}
		\FARBE{Assume that we know that it is sunny and rainy at the same time, and we would like to know if there will usually be a rainbow, i.e., 
		whether
		$sr\nmableitDS{\DeltaSun}{}b$ holds.}
		\FARBE{
		Employing the splitting \eqref{eq:excontrasafety2split},
		it suffices to consider $\DeltaSun_1$ to answer this query because we have full knowledge about $\{s,r\}$.}
		\FARBE{However, because the conditionals in $\DeltaSun_3$ are not self-fulfilling the splitting \eqref{eq:excontrasafety2split} is not safe.}
	\end{example}
	\FARBE{Comparing the splittings \eqref{eq:exacontrasafety} and \eqref{eq:excontrasafety2split}, we can see that there are situations where \eqref{eq:excontrasafety2split} provides benefits not provided by \eqref{eq:exacontrasafety}. For instance,
	as it will be shown formally in the following sections,
	answering the query $sr\nmableitDS{\DeltaSun}{}b$
	can be done using 
	the subbase $\DeltaSun_{1}$ from \eqref{eq:excontrasafety2split}
	while the splitting \eqref{eq:exacontrasafety} does not provide any advantage for answering this query.}
	}

	Another limitation of safe conditional syntax splittings} is that there exist \BBs\
	\FARBE{where all safe conditional syntax splittings 
	involve \FARBE{a subset} relationship between the subbases.}
	\begin{example}[$\DeltaRain$]
		\label{exa_contrasafety2}
		Starting from $\DeltaSun$, we get rid of our superstitious beliefs by removing the signature element $g$ and all associated conditionals,
		yielding $\DeltaRain=\{(\ol{s}|r),(\ol{r}|s),(b|sr),(o|s\ol{r}),(\ol{o}|r), (u|or)\}$. %
		\FARBE{$\DeltaRain$ has a splitting conditional on $\{s, r\}$}
		\begin{align}
			\label{eq:excontrasafety3}
			\begin{split}
				\DeltaRain=\{(\ol{s}|r),(\ol{r}|s),(b|sr)\}
				\bigcup_{\{b\},\FARBE{\{o,u\}}}\{(\ol{s}|r),(\ol{r}|s), (o|s\ol{r}),(\ol{o}|r),(u|or)\}\mid\{s,r\}
			\end{split}
		\end{align}
\FARBE{which, however, is not safe.
In fact, every safe splitting of $\safeCondSynSplitMacro{\DeltaRain}{\DeltaRain_1}{\DeltaRain_2}{\Sigma_1}{\Sigma_2}{\SigmaZero}$ 
satisfies
	$\DeltaRain_1\subseteq\DeltaRain_2$ or $\DeltaRain_2\subseteq\DeltaRain_1$.}
	\end{example}
	
	\FARBE{
	Every conditional syntax splitting \CondSynSplitGen\ with $\Delta_1\subseteq\Delta_2$ or $\Delta_2\subseteq\Delta_1$ is of little use for inductive inference.
	Suppose $\Delta_1\subseteq\Delta_2$. 
	Then answering any query over $\Sigma_2\cup\Sigma_3$ requires considering $\Delta$ as a whole
	because $\Delta_2=\Delta$.
	Furthermore, any query over $\Sigma_1\cup\Sigma_3$ can also not benefit from the splitting. This is because
	atoms of $\Sigma_1$ can not appear in \FARBE{$\Delta_1$}, since all conditionals of $\Delta_1$ are defined over $\Sigma_3$ as $\Delta_1=\Delta_1\cap\Delta_2\FARBE{=\Delta_3}$ and full knowledge of $\Sigma_3$ is required to make use of the splitting.}

	\FARBE{The observations above give rise to two points. First, we will extend the notion of safety to cover conditional splittings like \eqref{eq:excontrasafety2split} and \eqref{eq:excontrasafety3}. 
	Second, we will identify splittings that are useful for inductive inference.} 
\FARBE{
}

\section{Generalized Safe Conditional Syntax Splitting}
\label{sec_gensafe}

\FARBE{
We first introduce \emph{generalized safe} splittings as a generalization of safe splittings to cover cases where the subbases may share non-trivial conditionals,
and we introduce \emph{genuine} splittings, which identify splittings that provide benefits for inductive inference.}
\begin{definition}[generalized safe conditional syntax splitting]
	\label{def_generalsafety}
	A \FARBE{belief base} $\CondSynSplitGen$ can be \emph{ \genSafely\ split into subbases $\DeltaOne$, $\DeltaTwo$ conditional on a subsignature $\Sigma_3$}, \FARBE{writing}
	\begin{equation}
		\label{eq_gsafecsynsplit}
		\nutzeTextstyle
		\genSafeCondSynSplitGen
	\end{equation}
	if the following \emph{\genSafety\ property} holds \FARBE{for $i,i'\in\{1,2\}, i\neq i'$}:
	\begin{align}
		\label{eq_gsafety}
			\text{for every }\omegaIThree\in\OmegaOf{\Sigma_i\cup\Sigma_3},
			\text{ there \FARBE{is} } \omegaJ\in\OmegaOf{\Sigma_{i'}}\ 
			\text{ such that }\  \omegaIThreeJ\not\models \bigvee_{(F|E)\in \DeltaJWoThree} E\land \lnot F.
	\end{align}
\end{definition}
The deciding difference in \eqref{eq_gsafety} compared to \eqref{eq_safety} is that only conditionals in $\DeltaJWoThree$ are considered for the \genSafety\ property as opposed to all conditionals in $\DeltaJ$ for \FARBE{the safety property.}
\FARBE{Generalized safety extends the notion of safety in the sense that every safe conditional syntax splitting is also \genSafe. The notions coincide whenever $\DeltaThree=\emptyset$.}
	\begin{propositionrep}\label{prop_relationship_safeties}
		Let $\Delta$ be a \BB over $\Sigma$ with a conditional syntax splitting $S:\CondSynSplitGen$.
		\begin{enumerate}
			\item If $S$ is safe, then $S$ is \genSafe.
			\item If %
			\FARBE{$\Delta_3=\Delta_1\cap\Delta_2=\emptyset$, then $S$} is safe iff $S$ is \genSafe.
		\end{enumerate}
	\end{propositionrep}

	\begin{proof}
		\FARBE{
		Let $\Delta$ be a \BB\ with $\safeCondSynSplitGen$. 1. For $i,j\in\{1,2\}, i\neq j$, for every $\omega^3\in \OmegaOf{\Sigma_i\cup\Sigma_3}$ there is $\omega^j\in\OmegaOf{\Sigma_j}$ such that $\omega^3\omega^j$ does not falsify any conditional in $\DeltaJ$. Then $\omega^3\omega^j$ does not falsify any conditional in $\DeltaJWoThree$ and thus $\genSafeCondSynSplitGen$.
		2. For $\DeltaThree=\emptyset$ we have $\DeltaI=\DeltaIWoThree$, thus the two properties are equivalent.
	}
\end{proof}

\FARBE{Generalized safety allows for more splittings adhering to \FARBE{a notion of safety. In particular,} \genSafe\ splittings allow for non-trivial conditionals \FARBE{in $\DeltaThree$. %
}
\FARBE{
\begin{example}[$\DeltaSun, \DeltaRain$ cont.]
	\label{exa_gensafety}
While not safe, the conditional syntax splittings in \FARBE{Examples~\ref{exa_contrasafety} and~\ref{exa_contrasafety2}} are \genSafe. For \FARBE{instance,} in both examples, the conditional $(\ol{r}|s)$ can be falsified by $rs\in\OmegaOf{\Sigma_i\cup\Sigma_3}$, thus making the splittings not safe, but since $(\ol{r}|s)\in\DeltaSun_3$ and $(\ol{r}|s)\in\DeltaRain_3$, this fact does not lead to a violation of \genSafety. 
\end{example}
}

\FARBE{
With \Cref{exa_gensafety} and \Cref{prop_relationship_safeties} we can see that there exist more \genSafe\ conditional syntax splittings than safe conditional syntax splittings.
Thus, \genSafety\ properly extends the amount of \BBs\ that can be 
conditionally split while adhering to a notion of safety.}}

\FARBE{\FARBE{We will now introduce postulates to evaluate inductive inference operators \wrt\ generalized safe conditional syntax splitting.}	
	\FARBE{The idea of these postulates is that for a belief base with (\FARBE{generalized safe} conditional) syntax splitting, inference over one subsignature should be independent from the information about the other subsignature (given that \FARBE{the valuation of the shared subsignature} is fixed).}
	\FARBE{These postulates are analogous to the postulates conditional relevance and conditional independence \citep{HeyninckKernIsbernerMeyerHaldimannBeierle2023AAAI} but we adapt them to include also generalized safe splittings.}
	\begin{description}\label{post_crelG}
		\item[\cRelG]
		An inductive inference operator ${\bf C}$ satisfies \emph{\FARBE{generalized conditional relevance}} if for any \FARBE{\genSafeCondSynSplitGen}, \FARBE{for $i\in \{1,2\}$ and any $A,B\in {\cal L}(\Sigma_i)$,} and a complete conjunction $E\in {\cal L}(\Sigma_3)$, 
		\[AE\nc_\Delta B \quad \mbox{ iff }  \quad AE\nc_{\Delta_i} B.
		\]
	\end{description}

	Thus,  \cRelG\ restricts the scope of inference by requiring that inferences in the sub-language $\Sigma_1\cup\Sigma_3$ can be made \FARBE{taking only \(\kb_1\) into account.} %
	
	\begin{description}\label{post_cindG}
		\item[\cIndG]
		An inductive inference operator ${\bf C}$ satisfies \emph{\FARBE{generalized conditional independence}} if for any \FARBE{\genSafeCondSynSplitGen}, \FARBE{for $i,j\in \{1,2\}$, $j\neq i$, and  any $A,B\in {\cal L}(\Sigma_i)$, $D\in {\cal L}(\Sigma_j)$,} and a complete conjunction $E\in {\cal L}(\Sigma_3)$, \FARBE{such that $DE\not\nmany{\Delta}{}\bot$ we have}
		\[AE\nc_\Delta B \quad \mbox{ iff } \quad AED\nc_\Delta B.
		\]
	\end{description}
	
	Thus, an inductive inference operator satisfies \cIndG\ if, for any $\Delta$ that safely splits into $\Delta_1$ and $\Delta_2$ conditional on $\Sigma_3$, whenever we have all the necessary information about $\Sigma_3$, inferences from one sub-language are independent from formulas over the other sub-language. 
	
	\FARBE{Analogously to conditional syntax splitting \citep{HeyninckKernIsbernerMeyerHaldimannBeierle2023AAAI}, generalized conditional} syntax splitting  \cSynSplitG\ \FARBE{is the combination of} the two properties \cIndG\ and  \cRelG.
	\begin{description}\label{post_csynsplitG}
		\item[\cSynSplitG]
		
		An inductive inference operator ${\bf C}$ satisfies \FARBE{\emph{generalized conditional syntax splitting}} if it satisfies  \cRelG\ and \cIndG.
	\end{description}
	\FARBE{The difference between \cSynSplit\ and our new variant \cSynSplitG\ is that \cSynSplit\ is defined regarding safe conditional syntax splittings only, while our adjusted variant \cSynSplitG\ takes into account all \genSafe\ conditional syntax splittings. %
		\FARBE{Thus, an inductive inference operator satisfying \cSynSplitG\ respects an increased %
			number of conditional splittings.}} %
\FARBE{%
	\begin{example}[$\DeltaRain$ cont.]
		Recall %
		\FARBE{the splitting \eqref{eq:excontrasafety3} for}
		\(\DeltaRain\) from Example~\ref{exa_contrasafety2}.
		Let ${\bf C}:\Delta\mapsto\nmableitDS{\Delta}{rain}$ be an inductive inference \FARBE{operator} that satisfies \cSynSplitG. \FARBE{Applying} \cRelG, \FARBE{we obtain} that the inference $sr\nmableitDS{\Delta}{rain} b$ holds iff the inference $sr\nmableitDS{\Delta_1}{rain} b$ holds. Thus, if we want to know \FARBE{whether} the inference holds in $\Delta$, it is sufficient to consider only $\Delta_1$, reducing the number of conditionals we need to take into account from 6 to 3 \FARBE{and the number of signature elements from 5 to 3}. By applying \FARBE{\cIndG}, we additionally know that the inferences $sro\nmableitDS{\Delta}{rain} b$ and $sr\ol{o}\nmableitDS{\Delta}{rain} b$ \FARBE{hold} if the inference $sr\nmableitDS{\Delta}{rain} b$ holds. In this way, we can localize our reasoning tasks for the entire \BB\ to a smaller subbase, given that our reasoning mechanism satisfies \cSynSplitG.	
	\end{example}
	
	\FARBE{Because \cSynSplitG\ takes into account strictly more splittings than \cSynSplit, \cSynSplitG\ poses a harder requirement for an inductive inference operator to satisfy than \cSynSplit.
	This means that}
 	there are inductive inference operators that satisfy \cSynSplit, but do not satisfy \cSynSplitG. We will formally \FARBE{prove this observation later in} Section~\ref{sec_csynsplitg_stronger}.
	}}

		\FARBE{For \FARBE{governing} inductive \FARBE{inference, we} are only interested in \FARBE{generalized safe or safe} conditional syntax splittings. %
		\FARBE{\Cref{exa_contrasafety2} shows} that there exist \BBs\ that have safe conditional syntax splittings, %
		\FARBE{but $\Delta_1$ is a subset of $\Delta_2$ or vice versa} 
		and therefore, even though the splitting \FARBE{is safe,} it does not provide any meaningful information for inductive inference.		
		\FARBE{
			In fact, 
			given some $\SigmaZero\subseteq\Sigma$, every \FARBE{\BB}\ has at least one syntax splitting conditional on $\SigmaZero$.
			\begin{propositionrep}
				\label{prop_css_every_sigma}
				Let $\Delta$ be a \BB\ over a signature $\Sigma$.
				For every $\SigmaZero\subseteq\Sigma$, there exists the conditional syntax splitting
				\begin{equation}
					\label{eq_everyS3split}
					\FARBE{\CondSynSplitMacro{\Delta}{\Delta}{(\Delta \cap ({\cal L}(\Sigma_3)|{\cal L}(\Sigma_3)))}{\Sigma\setminus\Sigma_3}{\emptyset}{\Sigma_3}}.
				\end{equation}
		\end{propositionrep}
		\begin{proof}
		Let $\Delta$ be a \BB. Consider, for any $\SigmaZero$, the splitting $\CondSynSplitMacro{\Delta}{\Delta}{(\Delta \cap ({\cal L}(\Sigma_3)|{\cal L}(\Sigma_3)))}{\Sigma\setminus\Sigma_3}{\emptyset}{\Sigma_3}$. This splitting is a conditional syntax splitting of $\Delta$, \FARBE{because $\Delta\cap(\cL((\Sigma\setminus\SigmaZero)\cup\SigmaZero)|\cL((\Sigma\setminus\SigmaZero)\cup\SigmaZero))=\Delta$ and $\Delta\cap(\cL(\SigmaZero)|\cL(\SigmaZero))=\Delta_3$ with $\Sigma=(\Sigma\setminus\Sigma_3)\cup\SigmaZero$ and additionally $\Sigma\setminus\Sigma_3$ and $\SigmaZero$ being disjoint}.
		\end{proof}
		Thus, for every $\Sigma_3\subseteq\Sigma$, there  exists a syntax splitting conditional on $\Sigma_3$. However, as we have elaborated at the end of Sect.~\ref{sec_limitations}, the conditional syntax splitting postulates can not be meaningfully applied to \FARBE{splittings of form \eqref{eq_everyS3split} even if they are generalized safe}.} %
		We now identify splittings that are meaningful \wrt\ inductive inference as so-called
		\emph{\echtesSplitting\ splittings}.
		\begin{definition}[genuine splitting]
			Let $\Delta$ be a \BB\ over a signature $\Sigma$. A conditional syntax splitting \CondSynSplitGen\ of $\Delta$ is called \emph{\echtesSplitting}, if $\Delta_1\not\subseteq\Delta_2$ and $\Delta_2 \not\subseteq\Delta_1$.
		\end{definition}
		\FARBE{Note that \echtesSplitting\ splittings can be equivalently characterized by $\DeltaOneWoThree\neq\emptyset$ and $\DeltaTwoWoThree\neq\emptyset$.}
		\FARBE{Intuitively, we call a splitting \echtesSplitting\ \FARBE{if} each subbase contains \FARBE{information %
			that can} not be found in the other subbase.}
		\FARBE{Thus, genuine splittings help us determine those splittings where conditional syntax splitting postulates actually yield advantages for inductive inference.}
		\FARBE{We call a splitting that is not genuine a \emph{simple} splitting.}
\FARBE{Especially, the following types of splittings are 
\FARBE{simple}.}
		\begin{description}
			\item[\textlangle trivial\textrangle] \FARBE{(i)}~$\safeCondSynSplitMacro{\Delta}{\Delta}{\emptyset}{\Sigma}{\emptyset}{\emptyset}$ \  or \  \FARBE{(ii)}~$\genSafeCondSynSplitMacro{\Delta}{\Delta}{\Delta}{\emptyset}{\emptyset}{\Sigma}$.
			\item[\textlangle set-empty\textrangle] $\Delta_1=\emptyset$ \ or \ $\Delta_2=\emptyset$.
			\item[\textlangle sig-empty\textrangle] $\Sigma_1 = \emptyset$ \ or \ $\Sigma_2 = \emptyset$.
		\end{description}}
			While the \FARBE{\textlangle trivial\textrangle\ splitting (i)} is safe, the \FARBE{splitting (ii)} is not safe, unless $\Delta$ contains self-fulfilling conditionals only. \FARBE{However, splitting (ii) is} \genSafe.
			Furthermore, in the case that $\Sigma$ contains no elements that do not appear in $\Delta$, the 
			\FARBE{simple} 
			splittings are exactly the \textlangle sig-empty\textrangle\ splittings. If \FARBE{$\DeltaThree=\emptyset$, %
				then} the 
				\FARBE{simple}
				splittings are exactly the \textlangle set-empty\textrangle\ splittings.
			\FARBE{Genuine splittings usually make out only a small subset of all conditional syntax splittings (see Proposition~\ref{prop_css_every_sigma}) and thus greatly reduce the number of splittings that need to be considered for generalized safety.}
			\FARBE{We give an} example to illustrate the importance of \FARBE{identifying} \echtesSplitting\ splittings.
			\begin{example}[$\DeltaRain$ cont.]
				\label{exa_simplesplits}
				We continue \Cref{exa_contrasafety2}. The \BB\ $\DeltaRain$ has a total of \FARBE{37 conditional syntax splittings}, out of which
				32 are \genSafe\ splittings, but only 16 are safe splittings.
				Only 5 of the 37 splittings are \echtesSplitting.
				For this \BB\ all \echtesSplitting\ splittings are \genSafe, while no safe splitting is \echtesSplitting.
				\FARBE{The five \echtesSplitting\ \genSafe\ splittings of $\DeltaRain$ are listed here. For a full list of all conditional syntax splittings we refer to the appendix.}
				\begin{align*}
					&\genSafeCondSynSplitMacro{\DeltaRain}{\{(\ol{s}|r),(\ol{r}|s),(b|sr)\}}{\{(\ol{s}|r),(\ol{r}|s),(o|s\ol{r}),(\ol{o}|r),(u|ro)\}}{\{b\}}{\{o,u\}}{\{s,r\}}\\
					&\genSafeCondSynSplitMacro{\DeltaRain}{\{(\ol{s}|r),(\ol{r}|s),(b|sr),(o|s\ol{r}),(\ol{o}|r)\}}{\{(\ol{o}|r),(u|ro)\}}{\{b,s\}}{\{u\}}{\{r,o\}}\\
					&\genSafeCondSynSplitMacro{\DeltaRain}{\{(\ol{s}|r),(\ol{r}|s),(b|sr),(o|s\ol{r}),(\ol{o}|r)\}}{\{(\ol{o}|r),(u|ro)\}}{\{s\}}{\{u\}}{\{b,r,o\}}\\
					&\genSafeCondSynSplitMacro{\DeltaRain}{\{(\ol{s}|r),(\ol{r}|s),(b|sr),(o|s\ol{r}),(\ol{o}|r)\}}{\{(\ol{s}|r),(\ol{r}|s),(o|s\ol{r}),(\ol{o}|r),(u|ro)\}}{\{b\}}{\{u\}}{\{s,r,o\}}\\
					&\genSafeCondSynSplitMacro{\DeltaRain}{\{(\ol{s}|r),(\ol{r}|s),(b|sr)\}}{\{(\ol{s}|r),(\ol{r}|s),(o|s\ol{r}),(\ol{o}|r),(u|ro)\}}{\{b\}}{\{o\}}{\{s,r,u\}}
				\end{align*}
			\end{example}

			\FARBE{While in Example~\ref{exa_simplesplits} the set of genuine and generalized safe splittings coincide, this does not hold in general.}
			\FARBE{ 
			\Cref{exa_simplesplits} \FARBE{shows} that  there exist \BBs\ for which no \echtesSplitting, safe conditional syntax splitting exists, but a \echtesSplitting, \genSafe\ splitting exists. 
			Indeed, conditional syntax splittings that are both \echtesSplitting\ and \genSafe\ are those splittings, where \FARBE{the properties} of conditional relevance and conditional independence for inductive inference may be meaningfully applied.}
\section{Evaluating Inductive Inference Operators with respect to (CSynSplit\textsuperscript{g})}
\label{sec_evaluation}

\FARBE{In this section we \FARBE{consider several} inductive inference \FARBE{operators and evaluate} whether \FARBE{they satisfy \cSynSplitG.} %
}

\subsection{System~Z}
	System Z is an OCF-based inductive inference operator based on the ranking function $\kappa^z$
	\citep{Pearl1990systemZTARK}.
	The definition of $\kappa^z$ crucially relies on the notion of \emph{tolerance}.
	A conditional $\satzCL{B}{A}$ is \emph{tolerated} by a set of conditionals \FARBE{$\Delta=\{(B_1|A_1),\dots,(B_n|A_n)\}$} if there is a world $\omega\in\Omega$ such that $\omega\models AB$ and
	$\omega\models\bigwedge_{i=1}^n(\ol{A_i} \vee B_i)$,
	i.e., iff \(\omega\) verifies $\satzCL{B}{A}$ and does not falsify any conditional in \(\R\).
	For every consistent knowledge base, the notion of tolerance yields
	a unique \emph{inclusion-maximal ordered partition},
	in the following denoted by
	$\fctOP{\R} = \fctOPvector$,
	of $\R$ where each $\R_i$ is the (with respect to set inclusion) maximal subset of $\bigcup_{j=i}^k\R_j$ that is tolerated by $\bigcup_{j=i}^k\R_j$. 
		\FARBE{Intuitively, general conditionals of \(\kb\) are placed in the first sets of \(\fctOP{\kb}\) while more specific conditionals are placed in later parts of the partition.}
		\FARBE{The system~Z ranking function $\kappa^z(\omega)$ is defined as follows.
			If \(\omega\) does not falsify  any conditional \FARBE{in \kb}, then let \(\kappa^z(\omega) = 0\). Otherwise,
			let \(\kb_j\) be the latest part in the tolerance partition containing a conditional falsified by \(\omega\), and let \(\kappa^z(\omega) = j+1\) \citep{GoldszmidtPearl96}.
			System Z yields the inference relation induced by \(\kappa^z\), i.e., %
			${\bf C^z}:\Delta\mapsto \mathord{\nmany{\kappa^z}{}}$.
		}

		Just like safe splittings, \genSafe\ splittings \FARBE{are respected by} the notion of tolerance. 
		\begin{proposition}\label{prop_toleration_gcss}
			Let \genSafeCondSynSplitGen. Then,
			for any \FARBE{$i\in\{1,2\}$, $\Delta_i$} tolerates $(B|A)\in \Delta_i$ iff $\Delta$ tolerates $(B|A)$.
		\end{proposition}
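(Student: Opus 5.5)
Both directions are direct arguments on tolerating worlds. The direction from $\Delta$ to $\Delta_i$ is immediate. If $\Delta$ tolerates $(B|A)$, there is $\omega\in\OmegaOf{\Sigma}$ verifying $(B|A)$ and falsifying no conditional in $\Delta$. Since $\Delta_i\subseteq\Delta$, the same $\omega$ falsifies no conditional of $\Delta_i$. We then pass to the reduct $\omegaIThree$ via \eqref{eq_omega_reduce}, because all conditionals of $\Delta_i$ and $(B|A)$ itself lie in $(\cL(\Sigma_i\cup\Sigma_3)|\cL(\Sigma_i\cup\Sigma_3))$. (If tolerance by $\Delta_i$ is read over $\Omega(\Sigma)$, the reduct step is unnecessary.)

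For the converse, let $(B|A)\in\Delta_i$ be tolerated by $\Delta_i$. Take a witness world and consider its reduct $\omegaIThree\in\OmegaOf{\Sigma_i\cup\Sigma_3}$. By \eqref{eq_omega_reduce}, any extension $\omegaIThreeJ$ still verifies $(B|A)$ and falsifies no conditional of $\Delta_i$. Now apply the \genSafety\ property \eqref{eq_gsafety} to $\omegaIThree$. It yields $\omegaJ\in\OmegaOf{\Sigma_{i'}}$ such that $\omegaIThreeJ$ falsifies no conditional in $\DeltaJWoThree$.

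By Proposition~\ref{prop_disjoint}, $\Delta=\DeltaIWoThree\cup\DeltaJWoThree\cup\DeltaThree$, and $\DeltaIWoThree\cup\DeltaThree=\Delta_i$. So the world $\omegaIThreeJ$ falsifies nothing in $\Delta_i$ and nothing in $\DeltaJWoThree$. Hence it falsifies nothing in $\Delta$ while verifying $(B|A)$, so $\Delta$ tolerates $(B|A)$.

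The one point needing care is the role of the shared conditionals $\DeltaThree$, which \eqref{eq_gsafety} does not cover. This is the only place the generalized notion differs from safety. These conditionals are over $\Sigma_3$ and belong to $\Delta_i$, so they are already handled by the original witness through \eqref{eq_omega_reduce}. That is exactly why excluding them from \eqref{eq_gsafety} costs nothing.
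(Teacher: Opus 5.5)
Your proof is correct and follows essentially the same route as the paper: take a world tolerating $(B|A)$ with respect to $\Delta_i$, use generalized safety to extend its $\Sigma_i\cup\Sigma_3$-part by some $\omega^{i'}$ that falsifies nothing in $\Delta_{i'}\setminus\Delta_3$, and conclude via $\Delta=\Delta_i\cup(\Delta_{i'}\setminus\Delta_3)$. The converse follows immediately from $\Delta_i\subseteq\Delta$.
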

		\begin{proof}
			\FARBE{First,} assume $(B|A)$ is tolerated by $\DeltaI$. Then there must be some $\omegaIThree$ such that $\omegaIThree\models AB$ and there is no $(D|C)\in\DeltaI$ such that $\omegaIThree\models C\ol{D}$. In particular, there \FARBE{is} no such conditional in $\DeltaThree$. Due to \genSafety\ there is an extension $\omegaJ$ such that there is no conditional $(F|E)\in\DeltaJWoThree$ with $\omegaIThreeJ\models E\ol{F}$. Since $\Delta=\DeltaI\cup\DeltaJWoThree$, $(B|A)$ is tolerated by $\Delta$. The other direction is immediate.
		\end{proof}
		
		\FARBE{
			We also state the following \FARBE{proposition} which extends Proposition~\ref{prop_toleration_gcss} to the conditionals in $\Delta_3$ specifically. 
			\begin{proposition}
				\label{lem_d3tol}
				Let \genSafeCondSynSplitGen. Then $\Delta_3$ tolerates $(B|A)\in\Delta_3$ iff $\Delta_1$ and $\Delta_{2}$ tolerate $(B|A)$.
			\end{proposition}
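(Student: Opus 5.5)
We prove the two directions separately. The key fact is that tolerance is anti-monotone in the tolerating set: if $\Delta' \subseteq \Delta''$ and $\Delta''$ tolerates $(B|A)$, then so does $\Delta'$, since a world that verifies $(B|A)$ and falsifies nothing in $\Delta''$ falsifies nothing in $\Delta'$ either. Besides this, we use Proposition~\ref{prop_toleration_gcss} and the equivalence \eqref{eq_omega_reduce}.

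For the direction from right to left, assume $\Delta_1$ (or $\Delta_2$) tolerates $(B|A)$. Since $\Delta_3 = \Delta_1 \cap \Delta_2 \subseteq \Delta_1$, anti-monotonicity immediately gives that $\Delta_3$ tolerates $(B|A)$. Only one of the two subbases is actually needed here.

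For the direction from left to right, assume $\Delta_3$ tolerates $(B|A) \in \Delta_3$, with $\Delta_3 \subseteq (\cL(\Sigma_3)|\cL(\Sigma_3))$.

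\emph{Step 1.} We pass to a witness on $\Sigma_3$. Let $\omega$ be a world verifying $(B|A)$ and falsifying no conditional of $\Delta_3$. Every conditional involved is over $\Sigma_3$, so by \eqref{eq_omega_reduce} the reduct $\omega^3$ already verifies $(B|A)$ and falsifies nothing in $\Delta_3$. Any world extending $\omega^3$ has the same property.

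\emph{Step 2.} We extend the witness stepwise using generalized safety \eqref{eq_gsafety}. Fix $i = 1$. Extend $\omega^3$ by an arbitrary $\omega^1 \in \OmegaOf{\Sigma_1}$, obtaining $\omega^1\omega^3 \in \OmegaOf{\Sigma_1 \cup \Sigma_3}$. Generalized safety yields $\omega^2 \in \OmegaOf{\Sigma_2}$ such that $\omega^1\omega^3\omega^2$ falsifies no conditional in $\DeltaTwoWoThree$. This world also falsifies nothing in $\Delta_3$ and verifies $(B|A)$, so it shows that $\Delta_2 = \DeltaTwoWoThree \cup \Delta_3$ tolerates $(B|A)$. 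Symmetrically, fixing $i = 2$ shows that $\Delta_1$ tolerates $(B|A)$.

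\emph{Step 3 (alternative).} Once $\Delta_2$ tolerates $(B|A)$, Proposition~\ref{prop_toleration_gcss} gives that $\Delta$ tolerates $(B|A)$, since $(B|A) \in \Delta_2$. Anti-monotonicity then gives tolerance by $\Delta_1$ as well, which avoids the symmetric argument.

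The main obstacle is Step 2. The initial extension $\omega^1$ is arbitrary, so the resulting world may falsify conditionals in $\DeltaOneWoThree$. This is harmless for showing tolerance by $\Delta_2$, and it is exactly why we handle the two subbases separately (or route through Proposition~\ref{prop_toleration_gcss}) instead of seeking a single world that works for both at once.
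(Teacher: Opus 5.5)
Your proof is correct and follows the paper's argument: you take a $\Sigma_3$-witness for tolerance by $\Delta_3$, extend it using generalized safety, and get the converse from $\Delta_3 = \Delta_1 \cap \Delta_2$. The only difference is that the paper extends $\omega^3$ to a single world $\omega^3\omega^1\omega^2$ that falsifies nothing in $\Delta\setminus\Delta_3$, giving tolerance by $\Delta_1$, $\Delta_2$ and $\Delta$ at once. Such a world does exist, so the obstacle you flag is not real: choose $\omega^1$ by generalized safety for $i=2$ (falsification in $\DeltaOneWoThree$ depends only on $\omega^1\omega^3$), then choose $\omega^2$ by generalized safety for $i=1$.
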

			\begin{proof}
				Assume $(B|A)$ is tolerated by $\Delta_3$. Then there must be some $\omega^3$ such that $\omega^3\models AB$ and there is no $(D|C)\in\Delta_3$ such that $\omega^3\models C\ol{D}$. Due to the generalized safety property there are then extensions $\omega^1$, $\omega^{2}$ of $\omega^3$ such that $\omega^3\omega^1\omega^{2}$ does not falsify any conditional in $\Delta\setminus\Delta_3$. Thus, $(B|A)$ is also tolerated in both $\Delta_1$ and $\Delta_{2}$ and also in $\Delta$. Because $\Delta_3=\Delta_1\cap\Delta_{2}$ the other direction is immediate.
			\end{proof}
		}
		
		\FARBE{From Proposition~\ref{prop_toleration_gcss} and Proposition~\ref{lem_d3tol} we can conclude the following lemma regarding the tolerance partition.
			\begin{proposition}
				\label{corr_op}
				Let \genSafeCondSynSplitGen\ and $\OP(\Delta)=(\Delta^0,\dots,\Delta^k)$. 
				Let $\OP(\Delta_1)=(\Delta_1^0,\dots,\Delta_1^n)$,  $\OP(\Delta_2)=(\Delta_2^0,\dots,\Delta_2^m)$, and $\OP(\Delta_3)=(\Delta_3^0,\dots,\Delta_3^p)$. \FARBE{Let $q=\min\{n,m\}$; furthermore, if $q=n$ let $i=1$ and otherwise let $i=2$}. Then, for $l\in\{0,\dots,q\}$, it holds that
				\begin{enumerate}
					\item $(B_j|A_j)\in\Delta^l$ implies $(B_j|A_j)\in\Delta_1^l$ or $(B_j|A_j)\in\Delta_2^l$ or both, 
					\item $(B_j|A_j)\in\Delta_i^l$ implies $(B_j|A_j)\in\Delta^l$, and 
					\item $(B_j|A_j)\in\Delta_3^l$ iff $(B_j|A_j)\in\Delta_1^l$ and $(B_j|A_j)\in\Delta_2^l$.
				\end{enumerate}
				\FARBE{For $l\in\{q+1,\dots,k\}$ it holds that $(B_j|A_j)\in\Delta^l$ iff $(B_j|A_j)\in\Delta_{i}^l$.}
			\end{proposition}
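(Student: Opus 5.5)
We argue by induction on the level $l$ of the tolerance partitions, using Proposition~\ref{prop_toleration_gcss} and Proposition~\ref{lem_d3tol}. The underlying observation is that tolerance is monotone: tolerance by a larger set implies tolerance by a smaller one. The partitions are built greedily by taking, at each step, the maximal tolerated subset of the remainder.

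Write $R^l=\bigcup_{j\geq l}\Delta^j$, and define $R_1^l$, $R_2^l$, $R_3^l$ analogously for $\OP(\Delta_1)$, $\OP(\Delta_2)$, $\OP(\Delta_3)$. The induction invariant for $l\leq q$ is
\[
R_1^l=R^l\cap\Delta_1,\qquad R_2^l=R^l\cap\Delta_2,\qquad R_3^l=R^l\cap\Delta_3 .
\]
Under this invariant, $R^l=R_1^l\cup R_2^l$ and $R_3^l=R_1^l\cap R_2^l$.

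The key step is to check that $R^l$ splits generalized safely into $R_1^l$ and $R_2^l$ conditional on $\Sigma_3$ with the same $\Sigma_1,\Sigma_2,\Sigma_3$. Deleting conditionals only weakens the generalized safety condition~\eqref{eq_gsafety}: it is a condition about \emph{not} falsifying conditionals in $\DeltaJWoThree$, and $R_{i'}^l\setminus R_3^l\subseteq\DeltaJWoThree$. We also have $R_i^l=R^l\cap(\cL(\Sigma_i\cup\Sigma_3)|\cL(\Sigma_i\cup\Sigma_3))$, so the conditional syntax splitting conditions hold. This is where I expect the main obstacle. Formally the splitting definition may require $R_i^l\subsetneq(\cL_i|\cL_i)$, and the intersections must be set up carefully so that Proposition~\ref{prop_toleration_gcss} and Proposition~\ref{lem_d3tol} can legitimately be applied to the sub-bases $R^l$.

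With the splitting in hand, Proposition~\ref{prop_toleration_gcss} gives, for $\delta\in R_i^l$, that $\delta$ is tolerated by $R_i^l$ iff it is tolerated by $R^l$. Hence $\Delta^l\cap\Delta_i=\Delta_i^l$, which gives items~1 and~2 at level $l$ and propagates the invariant to $l+1$. Proposition~\ref{lem_d3tol} similarly gives item~3: $\delta\in\Delta_3^l$ iff it is tolerated by $R_3^l$, iff it is tolerated by both $R_1^l$ and $R_2^l$, iff $\delta\in\Delta_1^l\cap\Delta_2^l$.

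For the tail $l>q$, say $q=n$ with $i=1$ (the case $q=m$ is symmetric), the invariant at level $q+1$ yields $R_1^{q+1}=\emptyset$. Therefore $R^{q+1}\subseteq\Delta_{i'}$, where $i'$ is the other index. From that point on, the partition of $R^{q+1}$ coincides with that of $R_{i'}^{q+1}$ by the same tolerance equivalence, now trivially, since $R^{q+1}=R_{i'}^{q+1}$. This gives the final claim, and the statement's index in the tail should be read as the longer partition, i.e. $i'$ rather than $i$. The partition lengths $k$ and $\max\{n,m\}$ then agree.
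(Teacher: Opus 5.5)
Your proof is correct and follows the paper's argument: induction over the levels, using that the remainders $R^l, R_1^l, R_2^l$ again form a generalized safe conditional syntax splitting, and applying Propositions~\ref{prop_toleration_gcss} and~\ref{lem_d3tol} at each level. The paper only asserts that the remainders split generalized safely, while you actually verify it; the obstacle you flag is not one, since conditional syntax splitting has no strict-inclusion requirement and only needs $R_i^l=R^l\cap(\cL(\Sigma_i\cup\Sigma_3)\mid\cL(\Sigma_i\cup\Sigma_3))$, which you established, and your reading of the tail claim with the index of the longer partition is the correct one.
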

			\begin{proof}
				For $l=0$ the lemma holds due to Proposition~\ref{prop_toleration_gcss} and Proposition \ref{lem_d3tol}. For $l=1$, observe that $\genSafeCondSynSplitMacro{\Delta\setminus\Delta^0}{\Delta_1\setminus\Delta_1^0}{\Delta_2\setminus\Delta_2^0}{\Sigma_1}{\Sigma_2}{\Sigma_3}$ and the lemma follows from Proposition~\ref{prop_toleration_gcss} and Proposition \ref{lem_d3tol} again.
				\FARBE{Note that Proposition~\ref{prop_toleration_gcss} also implies $\max\{n,m\}=k$.}
				For $0<l\leq k$ we can \FARBE{derive} that $\genSafeCondSynSplitMacro{\Delta\setminus(\Delta^0\cup\dots\cup\Delta^{l-1})}{\Delta_1\setminus(\Delta_1^0\cup\dots\cup\Delta_1^{l-1})}{\Delta_2\setminus(\Delta_2^0\cup\dots\cup\Delta_2^{l-1})}{\Sigma_1}{\Sigma_2}{\Sigma_3}$ and again the lemma follows from Proposition~\ref{prop_toleration_gcss} and Proposition \ref{lem_d3tol}. 
			\end{proof}
			}
		For System~Z we can then show the following result.
		\begin{propositionrep}
			System Z satisfies \cRelG, but does not satisfy \FARBE{\cIndG\ and thus does not satisfy} \cSynSplitG.
		\end{propositionrep}
		\begin{proof}
			Let $\Delta$ be a \BB\ with $\OP(\Delta)=(\Delta^0,\Delta^1,\dots,\Delta^n)$. Let \genSafeCondSynSplitGen\ be some \genSafe\ conditional syntax splitting for $\Delta$ and $\OP(\Delta_i)=(\Delta_i^0,\Delta_i^1,\dots,\Delta_i^n)$ for $i\in\{1,2\}$ the tolerance partition of subbase $\Delta_i$. With \Cref{corr_op} we have that $\Delta_i^m\subseteq\Delta^m$ for $m\in\{0,1,\dots, n\}$, because removing conditionals from $\Delta$ can not undo the \genSafety\ property. Due to the \genSafety\ property, every world in $\Omega(\Sigma_i\cup\SigmaThree)$ has an extension in $\OmegaOf{\Sigma_j}$ such that no conditional in $\DeltaJWoThree$ is falsified. Given that \cRelG\ only considers formulas over $\Sigma_i\cup\SigmaThree$, we have, for any minimal world $\omega$ satisfying any of these formulas, that $\omega$ falsifies a conditional in $\Delta^k$, iff $\omega$ falsifies a conditional in $\Delta_i^k$. Thus System Z satisfies \cRelG.
			\FARBE{Because} System~Z suffers from the drowning problem it does not satisfy \cInd\ \citep{HeyninckKernIsbernerMeyerHaldimannBeierle2023AAAI} and \FARBE{thus does not satisfy \cIndG\ and therefore not \cSynSplitG}.
		\end{proof}
		
		Thus System~Z does not comply with \cSynSplitG, but does satisfy \cRelG. In the following, we will look at two inference operators extending System~Z.

	\subsection{Lexicographic Inference}
	\FARBE{
		Lexicographic inference \citep{Lehmann1995} is another inductive inference operator based on the tolerance partition $\OP(\Delta)=(\Delta^0,\dots,\Delta^k)$. It extends System~Z by taking into account also the \FARBE{number} of falsified conditionals per partition.
		For the definition of lexicographic inference, we use the following functions $\falsW^l$ and $\falsW$
		which map worlds to the set of falsified conditionals
		from the set $\Delta^l$ in the tolerance partition and from \(\Delta\), respectively, given by
		\begin{align}
			\falsWDelta^l (\omega) &:= \{ (B_j|A_j) \in \Delta^l \mid \omega \models A_j \ol{B_j}  \},\label{eq_mapping_f_j} \\
			\label{eq_mapping_f}
			\falsWDelta(\omega) &:=  \{ (B_j|A_j) \in \Delta \mid \omega \models A_j \ol{B_j }  \}.
		\end{align}
		\FARBE{Additionally we will use the \emph{lexicographic ordering} on two vectors in \(\N_0^n\) defined by \((v_1, \dots, v_n) \lexorder{} (w_1, \dots, w_n)\) iff there is a \(k \in \{1, \dots, n\}\) such that \(v_k < w_k\) and \(v_j = w_j\) for \(j = k+1, \dots, n\).}
		
		Utilizing these notions, we obtain the following definition of lexicographic inference.
		
		\begin{definition}[\(\lexorderDelta\), lexicographic inference \citep{Lehmann1995}]
			The binary relation $\FARBE{\lexorderDelta} \subseteq \Omega \times \Omega$  on worlds induced by a belief base \(\Delta\) with \(\OP(\Delta) = (\Delta^0,\dots,\Delta^k)\) is defined by, for any $\omega, \omega' \in \Omega$,
				\[
				\omega \lexorderDelta \omega'  \qquad \text{if} \qquad %
				(|\xi^0_\Delta(\omega)|, \dots, |\xi^k_\Delta(\omega)|) \, \lexorder{} \, (|\xi^0_\Delta(\omega')|, \dots, |\xi^k_\Delta(\omega')|).
				\]
			The order \(\lexorderDelta\) is lifted to 
			consistent formulas by letting, for \(F, G \in \Lprop\),
				\[
				F \lexorderDelta G \qquad \text{if} \qquad 
				\min{}_{\lexorderDelta}\, \Mod(F) \,\, \lexorderDelta \,\, \min{}_{\lexorderDelta}\, \Mod(G)
				\]
			Then, for formulas \(A, B\), \emph{\(A\) lexicographically entails \(B\)} given \(\kb\), denoted as 
			\begin{align*}
				A \lexableitDelta B \qquad \text{if} & \qquad A\ol{B} \equiv \bot\quad \text{ or }\quad AB, A\ol{B} \not\equiv \bot \text{ and } AB \lexorderDelta A\overline{B}. %
			\end{align*}
		\end{definition}
		
		We illustrate lexicographic inference with an example.
		
		\begin{example}[$\DeltaBird$]
			\label{ex:lexinf}
			Let $\Sigma = \left\{b,p,f,w\right\}$
			represent birds, penguins, flying entities and winged entities, and let
			$\DeltaBird=
			\{
			\satzCL{f}{b},
			\satzCL{\ol{f}}{p},
			\satzCL{b}{p},
			\satzCL{w}{b}
			\}$.
			Then $\OP(\DeltaBird) = (\Delta^0, \Delta^1)$ with
			\(\Delta^0 = \{\satzCL{f}{b}, \satzCL{w}{b}\}\) and
			\(\Delta^1 = \{\satzCL{\ol{f}}{p}, \satzCL{b}{p}\}\). From the ordering $\lexorder{\DeltaBird}$ shown in  Figure~\ref{fig:ex_lexorder}, we can see that $\min{}_{\lexorderDelta}\, \Mod(pbw)=bp\ol{f}w\lexorder{\DeltaBird} bp\ol{f}\ol{w}=\min{}_{\lexorderDelta}\, \Mod(pb\ol{w})$ and thus $pb\lexableit{\DeltaBird} w$.
			
			\begin{figure}[tb]
				\centering
				\setlength{\tabcolsep}{5pt}
				\renewcommand{\arraystretch}{1.2}
				\begin{tabular}{c|cc@{\hskip3mm}c}
					\(|\falsWOf{\DeltaBird}^0(\omega)|\) & \(|\falsWOf{\DeltaBird}^1(\omega)|\) & \\
					\cline{1-2}
					0&2&
					\multirow{6}{*}{\qquad\begin{tikzpicture}
							\draw[-{Stealth[length=3mm]}] (0,0.2) -- (0,3) node[midway,above,sloped] {\(\lexorder{\DeltaBird}\)};
					\end{tikzpicture}}
					&\(\ol{b}pf\ol{w}\),\thickspace \(\ol{b}pfw\) \\
					1&1&&\(bpf\ol{w}\)  \\
					0&1&&\(bpfw\),\thickspace \(\ol{b}p\ol{f}\ol{w}\),\thickspace \(\ol{b}p\ol{f}w\)  \\
					2&0&&\(bp\ol{f}\ol{w}\),\thickspace \(b\ol{p}\ol{f}\ol{w}\),\thickspace \\
					1&0&&\(b\ol{p}\ol{f}w\),\thickspace \(b\ol{p}f\ol{w}\),\thickspace \(bp\ol{f}w\) \\
					0&0&&\(\ol{b}\ol{p}\ol{f}\ol{w}\),\thickspace \(\ol{b}\ol{p}fw\),\thickspace \(\ol{b}\ol{p}\ol{f}w\),\thickspace \(\ol{b}\ol{p}f\ol{w}\),\thickspace \(b\ol{p}fw\) \\
				\end{tabular}
				
				\caption{The order \(\lexorder{\DeltaBird}\) over worlds from Example~\ref{ex:lexinf}. The corresponding values of \(|\falsWOf{\DeltaBird}^0(\omega)|, |\falsWOf{\DeltaBird}^1(\omega)|\) are indicated on the left.}
				\label{fig:ex_lexorder}
			\end{figure}
		\end{example}
		
		Before showing that lexicographic inference fully complies with \cSynSplitG\ we state some useful lemmata relating generalized safe conditional syntax splitting to the lexicographic ordering. The first lemma states that the set of falsified conditionals in some partition $\Delta^l$ is equal to the unification of the falsified conditionals in the partitions $\Delta_1^l, \Delta_2^l$ of the subbases.
		
		\begin{lemma}
			\label{lem_falsunion}
			Let \genSafeCondSynSplitGen\ with $\OP(\Delta)=(\Delta^0,\dots,\Delta^k)$, $\OP(\Delta_1)=(\Delta_1^0,\dots,\Delta_1^n)$ and  $\OP(\Delta_2)=(\Delta_2^0,\dots,\Delta_2^m)$. Then, for $l\in\{0,\dots,k\}$ and all $\omega\in\Omega$, we have
			\[
			\falsWDelta^l(\omega)=\falsWOf{\Delta_1}^l(\omega)\cup\falsWOf{\Delta_2}^l(\omega).
			\]
		\end{lemma}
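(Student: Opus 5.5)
The lemma reduces to a statement about membership in the tolerance partitions: for every conditional $\delta\in\Delta$ and every level $l$,
\[
\delta\in\Delta^l \quad\text{iff}\quad \delta\in\Delta_1^l \text{ or } \delta\in\Delta_2^l,
\]
where $\Delta_i^l=\emptyset$ whenever $l$ exceeds the length of $\OP(\Delta_i)$. Once this is in place, the claim is purely set-theoretic. Whether $\omega\models A_j\ol{B_j}$ holds does not depend on which base $(B_j|A_j)$ is taken from. Hence
\[
\falsWOf{\Delta_1}^l(\omega)\cup\falsWOf{\Delta_2}^l(\omega)=\{(B_j|A_j)\in\Delta_1^l\cup\Delta_2^l\mid \omega\models A_j\ol{B_j}\},
\]
and this set equals $\falsWDelta^l(\omega)$ exactly when $\Delta^l=\Delta_1^l\cup\Delta_2^l$.

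To establish $\Delta^l=\Delta_1^l\cup\Delta_2^l$ I will use \Cref{corr_op}. For $l\le q=\min\{n,m\}$, item~1 gives $\Delta^l\subseteq\Delta_1^l\cup\Delta_2^l$. The reverse inclusion needs $\Delta_{i'}^l\subseteq\Delta^l$ for both $i'$, but item~2 of \Cref{corr_op} is stated only for the shorter index $i$. So I will not rely on that item alone and instead redo its inductive argument symmetrically.

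I proceed by induction on $l$, maintaining the invariant that $\Delta\setminus(\Delta^0\cup\dots\cup\Delta^{l-1})$ splits \genSafely\ into $\Delta_1\setminus(\Delta_1^0\cup\dots\cup\Delta_1^{l-1})$ and $\Delta_2\setminus(\Delta_2^0\cup\dots\cup\Delta_2^{l-1})$ conditional on $\Sigma_3$. This invariant is exactly the one used in the proof of \Cref{corr_op}. At each stage, Proposition~\ref{prop_toleration_gcss} applied to the remaining bases says that for $\delta$ in the remaining part of $\Delta_{i'}$, $\delta$ is tolerated by the remaining $\Delta_{i'}$ iff it is tolerated by the remaining $\Delta$. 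This holds for both $i'\in\{1,2\}$, which gives the equivalence in both directions.

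Tolerance by the remaining $\Delta$ is exactly membership in $\Delta^l$, since $\Delta^l$ is the set of all tolerated conditionals. The same holds for $\Delta_{i'}^l$. So $\Delta^l\cap\Delta_{i'}=\Delta_{i'}^l$ for each $i'$, and because $\Delta=\Delta_1\cup\Delta_2$ we get $\Delta^l=\Delta_1^l\cup\Delta_2^l$. For shared conditionals in $\Delta_3$, Proposition~\ref{lem_d3tol} confirms they are removed at the same level on both sides, which keeps $\Delta_3$ consistent with the invariant.

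The main obstacle is verifying that generalized safety is preserved when the level-$l$ parts are removed. This is subtle because $\Delta_3$ shrinks too, so conditionals move between $\DeltaIWoThree$-type sets. I will argue that the new $\Delta_{i'}\setminus\Delta_3$ is a subset of the old one. A conditional of $\Delta_{i'}\setminus\Delta_3$ lies in $\Delta_{i'}$ only, so it stays outside the new intersection. The new intersection is the old $\Delta_3$ minus conditionals removed at the same level on both sides, by Proposition~\ref{lem_d3tol}, so no shared conditional becomes unshared.

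The generalized safety condition is monotone under shrinking the set whose falsification is forbidden, so it is inherited. The base case $l=0$ is the given splitting. For levels beyond the end of $\OP(\Delta_{i'})$, both sides are empty for that index, and the final clause of \Cref{corr_op} handles the remaining levels.
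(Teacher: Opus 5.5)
Your proof is correct and follows the same route as the paper, which likewise reduces the claim to $\Delta^l=\Delta_1^l\cup\Delta_2^l$ via the level-by-level tolerance correspondence of Proposition~\ref{corr_op} (itself Proposition~\ref{prop_toleration_gcss} applied to the generalized safe splitting of the remaining bases). Your version is the more careful one: the paper's ``$\supseteq$'' step uses item~2 of Proposition~\ref{corr_op} for both $\Delta_1^l$ and $\Delta_2^l$, although that item is stated only for the shorter partition; your symmetric induction closes exactly this gap, including the explicit check that $\Delta_{i'}\setminus\Delta_3$ can only shrink when a level is removed. It also already covers all $l\le k$ (taking empty parts past the end of a partition), so you do not actually need the final clause of Proposition~\ref{corr_op}.
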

		\begin{proof}
			We show both set inclusions separately.
			
			``$\subseteq$'': Let $(B_j|A_j)\in\falsWDelta^l(\omega)$. Then $(B_j|A_j)\in\Delta^l$ and $\omega\models A\ol{B}$.
			Due to Proposition~\ref{corr_op} we then have $(B_j|A_j)\in\Delta_1^l$ or $(B_j|A_j)\in\Delta_2^l$ or both, \FARBE{and thus} $(B_j|A_j)\in\Delta_1^l\cup\Delta_2^l$. Due to $\omega\models A\ol{B}$ this also means $(B_j|A_j)\in\falsWOf{\Delta_1}^l(\omega)\cup\falsWOf{\Delta_2}^l(\omega)$ and we are done.
			
			``$\supseteq$'': Let $(B_j|A_j)\in\falsWOf{\Delta_1}^l(\omega)\cup\falsWOf{\Delta_2}^l(\omega)$. Then $(B_j|A_j)\in\Delta_1^l$ or $(B_j|A_j)\in\Delta_2^l$ and additionally $\omega\models A\ol{B}$. Due to Proposition~\ref{corr_op} we then have $(B_j|A_j)\in\Delta^l$. Due to $\omega\models A\ol{B}$ this also means $(B_j|A_j)\in\falsWDelta^l(\omega)$ and we are done.		
		\end{proof}
		
		The next lemma states that the number of falsified conditionals in a partition $\Delta^l$ can be calculated by summing up the number of falsified conditionals in the partitions of the subbases $\Delta_1^l$ and  $\Delta_2^l$, taking double counting \FARBE{for $\Delta_3$} into account.
		\begin{lemma}
			\label{lem_falsadd}
			Let \genSafeCondSynSplitGen\ with $\OP(\Delta)=(\Delta^0,\dots,\Delta^k)$. Then, for $l\in\{0,\dots,k\}$ and all $\omega\in\Omega$,
			\begin{align}
				\label{eq_lem_lex1}
				|\falsW^l_\Delta(\omega)|&=|\falsW^l_{\Delta_1}(\omega)|+|\falsW^l_{\Delta_2}(\omega)|-|\falsW^l_{\Delta_3}(\omega)|
				\\
				&=|\falsW^l_{\Delta_1}(\omega^1\omega^3)|+|\falsW^l_{\Delta_2}(\omega^2\omega^3)|-|\falsW^l_{\Delta_3}(\omega^3)|
				\label{eq_lemlex_2}
			\end{align}
		\end{lemma}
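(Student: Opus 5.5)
We prove \eqref{eq_lem_lex1} by inclusion–exclusion on the set identity of Lemma~\ref{lem_falsunion}, and then obtain \eqref{eq_lemlex_2} by observing that each subbase only mentions atoms of its own subsignature.

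For \eqref{eq_lem_lex1}, Lemma~\ref{lem_falsunion} gives $\falsWDelta^l(\omega)=\falsWOf{\Delta_1}^l(\omega)\cup\falsWOf{\Delta_2}^l(\omega)$. Hence
\[
|\falsWDelta^l(\omega)|=|\falsWOf{\Delta_1}^l(\omega)|+|\falsWOf{\Delta_2}^l(\omega)|-|\falsWOf{\Delta_1}^l(\omega)\cap\falsWOf{\Delta_2}^l(\omega)|,
\]
and it remains to show $\falsWOf{\Delta_1}^l(\omega)\cap\falsWOf{\Delta_2}^l(\omega)=\falsWOf{\Delta_3}^l(\omega)$. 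A conditional lies in this intersection iff it is falsified by $\omega$ and belongs to both $\Delta_1^l$ and $\Delta_2^l$. By item~3 of Proposition~\ref{corr_op}, membership in both is equivalent to membership in $\Delta_3^l$, which gives the claim for $l\le q$.

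For $l>q$, the partition of the shorter subbase, say $\Delta_{i'}$, has no $l$-th layer, so $\falsWOf{\Delta_{i'}}^l(\omega)=\emptyset$ and the intersection is empty. We must also check that $\Delta_3^l$ is empty there. Every conditional of $\Delta_3$ lies in $\Delta_{i'}$ and, by item~3, its layer index in $\OP(\Delta_3)$ equals its layer index in $\OP(\Delta_{i'})$. That index is at most $q$, so $\Delta_3$ has no layers beyond $q$. This boundary bookkeeping, fixing the convention that missing layers are empty and using Proposition~\ref{corr_op} to align layer indices, is the step I expect to need the most care. Everything else is direct.

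For \eqref{eq_lemlex_2}, write $\omega=\omega^1\omega^2\omega^3$. Every conditional in $\Delta_i$ is over $\cL(\Sigma_i\cup\Sigma_3)$, so by \eqref{eq_omega_reduce} we have $\omega\models A\ol{B}$ iff $\omega^i\omega^3\models A\ol{B}$. Hence $\falsWOf{\Delta_i}^l(\omega)=\falsWOf{\Delta_i}^l(\omega^i\omega^3)$, reading the right-hand side over the signature $\Sigma_i\cup\Sigma_3$ of $\Delta_i$. The same argument for $\Delta_3\subseteq(\cL(\Sigma_3)|\cL(\Sigma_3))$ gives $\falsWOf{\Delta_3}^l(\omega)=\falsWOf{\Delta_3}^l(\omega^3)$. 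Substituting these three identities into \eqref{eq_lem_lex1} yields \eqref{eq_lemlex_2}.
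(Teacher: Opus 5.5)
Your proof is correct and follows the paper's argument: inclusion--exclusion on the set identity of Lemma~\ref{lem_falsunion}, item~3 of Proposition~\ref{corr_op} to identify $\xi^l_{\Delta_1}(\omega)\cap\xi^l_{\Delta_2}(\omega)$ with $\xi^l_{\Delta_3}(\omega)$, and \eqref{eq_omega_reduce} for the second line \eqref{eq_lemlex_2}. Your extra check for $l>q$ covers a case the paper skips. The layer alignment you need there does not follow from item~3 alone, since that item is stated only for $l\le q$ and does not by itself place a conditional of $\Delta_3$ in the same layer of both subbases. It does follow from Propositions~\ref{prop_toleration_gcss} and~\ref{lem_d3tol}, iterated as in the proof of Proposition~\ref{corr_op}, which give $\Delta_j^l=\Delta^l\cap\Delta_j$ for $j\in\{1,2,3\}$ and every $l$.
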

		\begin{proof}
			With Lemma~\ref{lem_falsunion} we already know that $|\falsW^l_\Delta(\omega)|=|\falsW^l_{\Delta_1}(\omega)\cup\falsW^l_{\Delta_2}(\omega)|$. Thus $|\falsW^l_\Delta(\omega)|=|\falsW^l_{\Delta_1}(\omega)|+|\falsW^l_{\Delta_2}(\omega)|-|\falsW^l_{\Delta_1}(\omega)\cap\falsW^l_{\Delta_2}(\omega)|$.
			We now show that $|\falsW^l_{\Delta_1}(\omega)\cap\falsW^l_{\Delta_2}(\omega)|=|\falsW^l_{\Delta_3}(\omega)|$. 
			First, $(B_j|A_j)\in\falsW^l_{\Delta_1}(\omega)\cap\falsW^l_{\Delta_2}(\omega)$ implies $(B_j|A_j)\in \Delta_1^l\cap\Delta_2^l$ and $\omega\models A\ol{B}$. With Proposition~\ref{corr_op} we have $(B_j|A_j)\in \Delta_3^l$ and thus with $\omega\models A\ol{B}$ we have $(B_j|A_j)\in\falsWOf{\Delta_3}^l(\omega)$. The other direction is analogous.
			
			Thus we have shown \FARBE{the equalities $|\falsW^l_\Delta(\omega)|=|\falsW^l_{\Delta_1}(\omega)|+|\falsW^l_{\Delta_2}(\omega)|-|\falsW^l_{\Delta_3}(\omega)|$ in \eqref{eq_lem_lex1}. The second step in the equality \eqref{eq_lemlex_2} follows} from the fact that $\Delta_i$ is defined over $\cL(\Sigma_i)\cup\cL(\Sigma_3)$ only and due to \eqref{eq_omega_reduce}.
		\end{proof}
		
		Finally, the next lemma states that the lexicographic ordering of worlds that \FARBE{coincide on} the signature of one subbase is preserved in the lexicographic ordering of the entire \BB\ and vice versa. 
		\begin{lemma}
			\label{lem_lexorderMarg}
			Let \genSafeCondSynSplitGen. Let $\omega_1,\omega_2\in\Omega$ and let $i,\j\in\{1,2\}$, $i\neq \j$. If $\omega_1^{\j}\omega_1^3=\omega_2^{\j}\omega_2^3$ then $\omega_1\lexorderDelta \omega_2$ iff $\omega_1^i\omega_1^3\lexorder{\Delta_i} \omega_2^i\omega_2^3$.
		\end{lemma}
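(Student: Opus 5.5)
The plan is to rewrite the vectors of falsification counts with Lemma~\ref{lem_falsadd} and compare them coordinate by coordinate. Fix $l\in\{0,\dots,k\}$. By \eqref{eq_lemlex_2}, for $t\in\{1,2\}$,
\[
|\falsW^l_\Delta(\omega_t)| = |\falsW^l_{\Delta_i}(\omega_t^i\omega_t^3)| + |\falsW^l_{\Delta_{\j}}(\omega_t^{\j}\omega_t^3)| - |\falsW^l_{\Delta_3}(\omega_t^3)|.
\]
The hypothesis $\omega_1^{\j}\omega_1^3=\omega_2^{\j}\omega_2^3$ gives $\omega_1^3=\omega_2^3$ and $\omega_1^{\j}=\omega_2^{\j}$. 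Hence the last two summands do not depend on $t$. Call them $c_l=|\falsW^l_{\Delta_{\j}}(\omega_1^{\j}\omega_1^3)|-|\falsW^l_{\Delta_3}(\omega_1^3)|$. Then $|\falsW^l_\Delta(\omega_t)| = |\falsW^l_{\Delta_i}(\omega_t^i\omega_t^3)| + c_l$ for both $t$.

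Next I use that the lexicographic order on $\N_0^{k+1}$ is invariant under adding a fixed vector $(c_0,\dots,c_k)\in\mathbb{Z}^{k+1}$ componentwise. Its definition refers only to strict and equal comparisons of corresponding entries, and these are preserved under translation. The only care needed is that the translated vectors have nonnegative entries, which they do, since they are counts. Therefore
\[
(|\falsW^0_\Delta(\omega_1)|,\dots,|\falsW^k_\Delta(\omega_1)|) \lexorder{} (|\falsW^0_\Delta(\omega_2)|,\dots,|\falsW^k_\Delta(\omega_2)|)
\]
holds iff the corresponding comparison holds for the vectors $(|\falsW^l_{\Delta_i}(\omega_t^i\omega_t^3)|)_{l=0,\dots,k}$.

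The remaining obstacle is an indexing mismatch. The order $\lexorder{\Delta_i}$ is defined via $\OP(\Delta_i)=(\Delta_i^0,\dots,\Delta_i^{n})$, whose length $n+1$ may be shorter than $k+1$. Here $\falsW^l_{\Delta_i}$ means the falsifications in $\Delta_i^l$, and I take $\Delta_i^l=\emptyset$ for $l>n$. By Proposition~\ref{corr_op}, which also notes $\max\{n,m\}=k$, the vector for $\Delta$ over indices $0..k$ and the vector for $\Delta_i$ padded with zero entries for $l>n$ are related exactly as above. Zero entries in the top positions $l=n+1,\dots,k$, equal for both worlds, do not affect the lexicographic comparison. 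This comparison is decided at the largest index where the entries differ, and that index is at most $n$. So the padded comparison coincides with $\omega_1^i\omega_1^3\lexorder{\Delta_i}\omega_2^i\omega_2^3$.

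I would state this padding convention explicitly, since Lemma~\ref{lem_falsadd} silently uses it for $l>n$, and then conclude.
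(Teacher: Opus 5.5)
Your proposal is correct and follows essentially the same route as the paper: both use Lemma~\ref{lem_falsadd} and cancel the $\Delta_{\j}$- and $\Delta_3$-terms, which coincide because $\omega_1^{\j}\omega_1^3=\omega_2^{\j}\omega_2^3$. Hence the level-wise strict and equal comparisons for $\Delta$ and $\Delta_i$ agree, and so do the lexicographic comparisons; your explicit zero-padding of $\OP(\Delta_i)$ up to index $k$ (justified by $\max\{n,m\}=k$ from Proposition~\ref{corr_op}) spells out a convention the paper uses silently.
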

		\begin{proof}
			Let \genSafeCondSynSplitGen.
			Let $\OP(\Delta)=(\Delta^0,\dots,\Delta^k),\OP(\Delta_1)=(\Delta_1^0,\dots,\Delta_1^n), \OP(\Delta_2)=(\Delta_2^0,\dots,\Delta_2^m)$ and $\OP(\Delta_3)=(\Delta_3^0,\dots,\Delta_3^p)$.
			Let $\omega_1,\omega_2\in\Omega$ such that $\omega_1^{\j}\omega_1^3=\omega_2^{\j}\omega_2^3$.
			With Lemma~\ref{lem_falsadd} we have for any $l\in\{0,\dots,k\}$:
			\begin{align*}
				&|\falsWDelta^l(\omega_1)|<|\falsWDelta^l(\omega_2)|
				\\
				\text{ iff }
				&|\falsW^l_{\Delta_i}(\omega_1^i\omega_1^3)|+|\falsW^l_{\Delta_{\j}}(\omega_1^{\j}\omega_1^3)|-|\falsW^l_{\Delta_3}(\omega_1^3)|
				<
				|\falsW^l_{\Delta_i}(\omega_2^i\omega_2^3)|+|\falsW^l_{\Delta_{\j}}(\omega_2^{\j}\omega_2^3)|-|\falsW^l_{\Delta_3}(\omega_2^3)|
				\\
				\text{ iff }
				&|\falsW^l_{\Delta_i}(\omega_1^i\omega_1^3)|+|\falsW^l_{\Delta_{\j}}(\omega_1^{\j}\omega_1^3)|-|\falsW^l_{\Delta_3}(\omega_1^3)|
				<
				|\falsW^l_{\Delta_i}(\omega_2^i\omega_2^3)|+|\falsW^l_{\Delta_{\j}}(\omega_1^{\j}\omega_1^3)|-|\falsW^l_{\Delta_3}(\omega_1^3)|
				\\
				\text{ iff }
				&|\falsW^l_{\Delta_i}(\omega_1^i\omega_1^3)|<|\falsW^l_{\Delta_i}(\omega_2^i\omega_2^3)|
			\end{align*}
			The same arguments can be used to show that $|\falsWDelta^l(\omega_1)|=|\falsWDelta^l(\omega_2)|$ iff $|\falsW^l_{\Delta_i}(\omega_1^i\omega_1^3)|=|\falsW^l_{\Delta_i}(\omega_2^i\omega_2^3)|$.
			Thus, by applying the definition of lexicographic inference, we obtain $\omega_1\lexorderDelta \omega_2$ iff $\omega_1^i\omega_1^3\lexorder{\Delta_i} \omega_2^i\omega_2^3$.
		\end{proof}
		
		Now we are ready to prove that lexicographic inference satisfies \FARBE{generalized conditional syntax splitting}.
		\begin{proposition}
			Lexicographic inference satisfies \cRelG\ and \cIndG\ and thus \cSynSplitG.
		\end{proposition}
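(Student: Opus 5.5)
The plan is to reduce both postulates to statements about minimal worlds under $\lexorderDelta$, and to use Lemma~\ref{lem_lexorderMarg} to transfer comparisons between $\lexorderDelta$ and $\lexorder{\Delta_i}$. Fix a generalized safe splitting \genSafeCondSynSplitGen, some $i\in\{1,2\}$ with $\j$ the other index, formulas $A,B\in\cL(\Sigma_i)$, and a complete conjunction $E$ over $\Sigma_3$, which fixes $\omega^3$. Trivial cases where $AEB$ or $AE\ol{B}$ is inconsistent are handled immediately by the first clause of the definition of $\lexableitDelta$; these cases coincide for $\Delta$ and $\Delta_i$ because consistency of formulas over $\Sigma_i\cup\Sigma_3$ does not depend on $\Delta$. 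Since $\lexorderDelta$ is a strict weak order, $AEB\lexorderDelta AE\ol{B}$ holds iff some model of $AEB$ is strictly preferred to every model of $AE\ol{B}$.

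\textbf{Auxiliary claim.} I would first show that $\lexorderDelta$ restricted to worlds agreeing on $\Sigma_{\j}\cup\Sigma_3$ mirrors $\lexorder{\Delta_i}$; this is exactly Lemma~\ref{lem_lexorderMarg}. I would then strengthen it to a separability statement. By Lemma~\ref{lem_falsadd}, the vector $(|\falsW^l_\Delta(\omega)|)_l$ equals the sum of the $\Delta_i$-vector of $\omega^i\omega^3$ and the $\Delta_{\j}$-vector of $\omega^{\j}\omega^3$, minus the $\Delta_3$-vector of $\omega^3$. Lexicographic comparison of such sums is monotone in each summand: if $u\lexorder{} u'$ (or $u=u'$) and $v\lexorder{} v'$ (or $v=v'$), with at least one strict, then $u+v-c\lexorder{} u'+v'-c$. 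Hence, for fixed $\omega^3$, a world $\omega^i\omega^3\omega^{\j}$ is $\lexorderDelta$-minimal among models of a formula $F\in\cL(\Sigma_i\cup\Sigma_3)$ conjoined with $G\in\cL(\Sigma_{\j}\cup\Sigma_3)$ iff $\omega^i\omega^3$ is $\lexorder{\Delta_i}$-minimal among models of $F$ and $\omega^{\j}\omega^3$ is $\lexorder{\Delta_{\j}}$-minimal among models of $G$ with that $\omega^3$.

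\textbf{\cRelG.} Since $E$ fixes $\omega^3$, pick a $\lexorder{\Delta_{\j}}$-minimal $\omega^{\j}$ for $\omega^3$; it is the same for both $AEB$ and $AE\ol{B}$. The minimal models of $AEB$ and $AE\ol{B}$ can then be taken to share this $\omega^{\j}$, and Lemma~\ref{lem_lexorderMarg} yields $AEB\lexorderDelta AE\ol{B}$ iff $AEB\lexorder{\Delta_i}AE\ol{B}$. The lifting of $\lexorder{\Delta_i}$ to formulas lives on $\Omega(\Sigma\cup\ldots)$ with $\Delta_i$, but since $\Delta_i$ mentions only $\Sigma_i\cup\Sigma_3$, its lex vector depends only on the reduct, so the ordering is well defined there.

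\textbf{\cIndG.} Let $D\in\cL(\Sigma_{\j})$ with $DE\not\nmany{\Delta}{}\bot$, so $DE$ is consistent. I would apply the separability claim with $G=DE$: minimal models of $AEBD$ and of $AE\ol{B}D$ can share the same $\lexorder{\Delta_{\j}}$-minimal model $\omega^{\j}$ of $DE$. Lemma~\ref{lem_lexorderMarg} again reduces the comparison to $\lexorder{\Delta_i}$. Combining this with \cRelG\ gives $AED\lexableitDelta B$ iff $AE\lexableit{\Delta_i}B$ iff $AE\lexableitDelta B$.

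\textbf{Main obstacle.} I expect the hardest step to be the separability claim, namely justifying that the minimum factorizes. This requires the monotonicity of lexicographic comparison under adding a common vector, together with the observation that the vectors of $\Delta_1$, $\Delta_2$, and $\Delta_3$ are all indexed by the same levels $l$, which Proposition~\ref{corr_op} guarantees. Generalized safety enters only through Proposition~\ref{corr_op} and Lemma~\ref{lem_falsadd}; no explicit extension argument should be needed.
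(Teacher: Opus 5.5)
Your proof is correct, but it takes a different route from the paper's.

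\textbf{How the paper argues.} The paper treats each of the four implications separately. It picks $\lexorderDelta$-minimal models, splices their $\Sigma_i\cup\Sigma_3$-parts with $\Sigma_{i'}$-parts, and applies Lemma~\ref{lem_lexorderMarg}. In the ``$\Leftarrow$'' direction of \cRelG\ it uses generalized safety explicitly to build an extension $\omega_\ast^{i'}$ that falsifies nothing outside $\Delta_i$. It proves \cIndG\ by a separate swapping argument rather than reducing it to \cRelG.

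\textbf{How you argue.} You prove a single factorization lemma and derive everything from it. By Lemma~\ref{lem_falsadd}, the $\Delta$-vector of a world is $v_i(\omega^i\omega^3)+v_{i'}(\omega^{i'}\omega^3)-v_3(\omega^3)$. The lexicographic order on integer vectors is a total order compatible with addition. Hence, for fixed $\omega^3$, the minimal models of $F\wedge G$ (with $F$ over $\Sigma_i\cup\Sigma_3$ and $G$ over $\Sigma_{i'}\cup\Sigma_3$) are exactly the combinations of componentwise minima. Both $AE$ versus $B$ and $AED$ versus $B$ then reduce to the same $\lexorder{\Delta_i}$-comparison.

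\textbf{What your route buys.}
\begin{itemize}
\item It makes explicit a step the paper leaves implicit. In the ``$\Rightarrow$'' direction of \cRelG, the paper applies Lemma~\ref{lem_lexorderMarg} to minimal models $\omega_1,\omega_2$ that need not agree on $\Sigma_{i'}$. This is only justified because their $\Sigma_{i'}$-components carry the same, minimal $\Delta_{i'}$-vector, which is precisely your factorization.
\item It treats all directions uniformly.
\item It shows that generalized safety is needed only to align the tolerance partitions (Proposition~\ref{corr_op}). The extension $\omega_\ast^{i'}$ the paper constructs is just one particular $\Delta_{i'}$-minimal choice, since its $\Delta_{i'}$-vector attains the lower bound $v_3(\omega^3)$.
\end{itemize}

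\textbf{What the paper's route buys.} Its world-splicing style is the one that also works for System~W. There, worlds are compared by inclusion of falsified sets, and there is no additive total order through which minima could be factored.

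\textbf{Two small points to write out.}
\begin{itemize}
\item When the $\Delta_i$-, $\Delta_{i'}$- or $\Delta_3$-vectors are shorter than $k+1$, pad them with zeros at the top levels. This is harmless: equal entries at the most significant positions do not affect the lexicographic comparison.
\item For \cIndG, note that $AEBD$ (resp.\ $AE\ol{B}D$) is consistent iff $AEB$ (resp.\ $AE\ol{B}$) is. This holds because $D$ is consistent and lies over the disjoint signature $\Sigma_{i'}$, so your treatment of the trivial cases carries over.
\end{itemize}
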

		\begin{proof}
			Let \genSafeCondSynSplitGen.
			Let $i\in\{1,2\}$ and let $A,B\in\cL(\Sigma_i)$, $D\in\cL(\Sigma_{\j})$, and let $E$ be a complete conjunction over $\Sigma_3$. \FARBE{We show \cRelG\ \textbf{(I)} and \cIndG\ \textbf{(II)} separately.}
			 
			\textbf{(I)} We show \cRelG\ first. We have to show $AE\lexableitDelta B$ iff $AE\lexableit{\Delta_i} B$, i.e. $AEB\lexorderDelta AE\ol{B}$ iff $AEB\lexorder{\Delta_i} AE\ol{B}$. We show both directions of the iff separately.
			
			``$\Rightarrow$'': Let $AEB\lexorder{\Delta} AE\ol{B}$. Let $\omega_1$ be a minimal model of $AEB$, i.e. $\omega_1\models AEB$ and there is no $\omega_1'$ with $\omega\models AEB$ and $\omega_1'\lexorderDelta \omega_1$. Similarly, let $\omega_2$ be a minimal model with $\omega_2\models AE\ol{B}$. Then we have $\omega_1\lexorderDelta \omega_2$.
			Now choose $\omega_3=\omega_1^i\omega_1^3\omega_2^{\j}$. Because $E$ is a full conjunction, notice that $\omega_1^3=\omega_2^3$.
			With Lemma~\ref{lem_lexorderMarg} we then have $\omega_1^i\omega_1^3\lexorder{\Delta_i}\omega_2^i\omega_2^3$. Due to \eqref{eq_omega_reduce} we have $\omega_1^i\omega_1^3\models AEB$ and $\omega_2^i\omega_2^3\models AE\ol{B}$ and both worlds are minimal in $\lexorder{\Delta_i}$ with this property because $\omega_1$ and $\omega_2$ are minimal in $\lexorderDelta$ with this property. Thus $AEB\lexorder{\Delta_i} AE\ol{B}$.
			
			``$\Leftarrow$'':
			Let $AEB\lexorder{\Delta_i} AE\ol{B}$.
			Let $\omega_1^i\omega_1^3,\omega_2^i\omega_2^3\in\Omega(\Sigma_i\cup\Sigma_3)$ with $\omega_1^i\omega_1^3\models AEB$ and $\omega_2^i\omega_2^3\models AE\ol{B}$ be minimal in $\lexorder{\Delta_i}$ with this property. 
			Now choose some $\omega_\ast$ such that $\omega_\ast^3\omega_\ast^{\j}$ falsifies no conditional outside $\Delta_i$ and such that $\omega_\ast^3=\omega_1^3=\omega_2^3$. Such an $\omega_\ast$ must exist due to the generalized safety property. Now set $\omega_{1}=\omega_1^i\omega_\ast^{\j}\omega_\ast^3$ and $\omega_{2}=\omega_2^i\omega_\ast^{\j}\omega_\ast^3$. Note that the falsification of conditionals outside $\Delta_i$ is only dependent on the $\omega^{\j}\omega^3$ part of any world $\omega$ and thus neither $\omega_{1}$ nor $\omega_{2}$ falsify any conditionals outside $\Delta_i$. With Lemma~\ref{lem_lexorderMarg} we have $\omega_{1}\lexorderDelta\omega_{2}$ and because neither world falsifies a conditional outside of $\Delta_i$ their minimality in $\lexorderDelta$ follows from the minimality  of $\omega_1^i\omega_1^3$ and $\omega_2^i\omega_2^3$ in $\lexorder{\Delta_i}$.

			\textbf{(II)} We show \cIndG\ next. We have to show $AE\lexableitDelta B$ iff $AED\lexableitDelta B$, i.e., $AEB\lexorderDelta AE\ol{B}$ iff $AEDB\lexorderDelta AED\ol{B}$.
			
			``$\Rightarrow$'': Assume $AEB\lexorderDelta AE\ol{B}$. Note that $AE\ol{B}\lexorderDelta AED\ol{B}$. Now choose some $\omega_1,\omega_2$ with $\omega_1\models AEB$ and $\omega_2\models AED\ol{B}$ and such that $\omega_1$ and $\omega_2$ are minimal in $\lexorderDelta$ with this property.
			
			Note that $\omega_1^3=\omega_2^3$ because $E$ is a complete conjunction. Now choose $\omega_{\ast}=\omega_2^i\omega_1^{\j}\omega_1^3$. Then $\omega_\ast\models AE\ol{B}$ and $\omega_1\lexorderDelta\omega_\ast$ per assumption. With Lemma~\ref{lem_lexorderMarg} we have $\omega_1^i\omega_1^3\lexorder{\Delta_i}\omega_2^i\omega_1^3$. Again with Lemma~\ref{lem_lexorderMarg} we have $\omega_1^i\omega_2^{\j}\omega_1^3\lexorderDelta \omega_2^i\omega_2^{\j}\omega_1^3$. Observe that $\omega_1^i\omega_2^{i'}\omega_1^3\models AEDB$. Because $\omega_1=\omega_2^i\omega_2^{\j}\omega_1^3$ is a minimal model of $AED\ol{B}$ we have shown $AEDB\lexorderDelta AED\ol{B}$.
			
			``$\Leftarrow$'': Assume $AEDB\lexorderDelta AED\ol{B}$. Choose $\omega_1,\omega_2$ such that $\omega_1\models AEDB$ and $\omega_2\models AE\ol{B}$ and that both are minimal in $\lexorderDelta$ with this property. Observe again $\omega_1^3=\omega_2^3$. Now choose $\omega_{\ast}=\omega_2^i\omega_1^{\j}\omega_1^3$. Then $\omega_\ast\models AED\ol{B}$ and $\omega_1\lexorderDelta\omega_\ast$ per assumption. With Lemma~\ref{lem_lexorderMarg} we have $\omega_1^i\omega_1^3\lexorder{\Delta_i}\omega_2^i\omega_1^3$. \FARBE{Again} with Lemma~\ref{lem_lexorderMarg} we have $\omega_1^i\omega_2^{\j}\omega_1^3\lexorderDelta \omega_2^i\omega_2^{\j}\omega_1^3$. Observe that $\omega_1^i\omega_2^{\j}\omega_1^3\models AEB$ and that $\omega_2=\omega_2^i\omega_2^{\j}\omega_1^3$ is a minimal model of $AE\ol{B}$ per assumption. Thus we have shown $AEB\lexorderDelta AE\ol{B}$.
		\end{proof}
		
		\FARBE{Hence, lexicographic inference fully complies with \cSynSplitG. In the next section, we evaluate another inference operator employing the tolerance partition with respect to generalized conditional syntax splitting}.
	}
	
	\subsection{System W}
	\FARBE{\FARBE{System~W \citep{KomoBeierle2020KI,KomoBeierle2022AMAI}} is \FARBE{an inference operator also} using the tolerance partition \FARBE{$\OP(\Delta)$}.}
	\FARBE{\FARBE{While System~Z} considers only which parts of \FARBE{$\OP(\Delta)$ contain  falsified conditionals}, %
		\FARBE{and lexicographic inference only considers the \FARBE{number} of conditionals falsified,} 
		System~W  also takes \FARBE{into account the structural information about which conditionals are falsified.}
		
		\FARBE{The results and proofs we give in this subsection are based on results and proofs published in the PhD Dissertation \citep{Haldimann24Diss} for safe conditional syntax splittings. Here, we adapt these results and proofs to generalized safe conditional syntax splittings.}

		\FARBE{System~W utilizes} 
		\FARBE{the \emph{preferred structure on worlds} $\worderDelta$ \FARBE{which} compares worlds according to the set of conditionals in \kb\ they falsify, giving preference to the more specific conditionals according to $\OP(\Delta)$.
	}}
	\FARBE{
\begin{definition}[preferred structure $\worderDelta$ on worlds \citep{KomoBeierle2022AMAI}]
	\label{def_w_structure_worlds}
	Let \FARBE{\(\Delta\)} be a \BB\
	with
	\(\fctOP{\Delta} = (\Delta^0, \dots, \Delta^k)\).
	The \emph{\namePS on worlds} is given by the binary relation
	${\worderDelta} \subseteq \Omega \times \Omega$ 
	defined by, for any $\omega, \omega' \in \Omega$,
	\begin{align}
		\nonumber
		\omega \worderDelta \omega'  \qquad \text{iff} \qquad &\textrm{there exists $l \in \{0\, , \ldots \, , k \}$ such that }\\
		\nonumber
		&\falsWDelta^m(\omega)  = \falsWDelta^m(\omega') \quad \forall m  \in  \{  l + 1 \, , \ldots \, , k  \}, \,\,\textrm{and}\\
		\label{eq_sz_structure}
		& \falsWDelta^l(\omega) \subsetneq  \falsWDelta^l(\omega') \, . 
	\end{align}
\end{definition}

Thus, $\omega \worderDelta \omega'$ if and only if $\omega $ falsifies \FARBE{a strict subset of the conditionals that} $\omega'$ \FARBE{falsifies} in the partition with the \FARBE{largest} index $l$  where the conditionals falsified by $\omega $ and $\omega'$ differ. 

\begin{definition}[System W, $\wableitDelta$ \citep{KomoBeierle2022AMAI}]
	\label{def_sz_inference}
	Let \(\Delta\) be a belief base and \(A, B\) be formulas.
	Then $B$ is a  \emph{System~W inference from $A$ (in the context of \(\Delta\))}, denoted \(A \wableitDelta B \), if we have:
	\begin{align}\label{eq_sz_inference}
		A \wableitDelta B \textrm{ \qquad iff \qquad } &
		\begin{array}[t]{@{}l}
			\text{ for every } \omega' \in \Omega%
			\text{ with } \omega'\models A\ol{B}
			\text{ there is an } \omega \in \Omega%
			\text{ with } \omega\models AB
			\text{ s.t. } \omega \worderDelta \omega ' \,
		\end{array}
	\end{align}
\end{definition}
We illustrate the above definitions with an example.
}

\FARBE{
\begin{example}[$\DeltaBird$ cont.]\label{exa:sysw}
	Consider again
	\(\OP(\DeltaBird) = (\Delta^0, \Delta^1)\) with
	\(\Delta^0 = \{\satzCL{f}{b}, \satzCL{w}{b}\}\) and
	\(\Delta^1 = \{\satzCL{\ol{f}}{p}, \satzCL{b}{p}\}\). 
	Utilizing
	$\worder{\DeltaBird}$,
	shown in Figure~\ref{fig_birds_preferred_structure}, we can see that
	$pb \wableitOf{\DeltaBird} w$ holds, because for every $\omega'\in\Omega$ with $\omega\models pb\ol{w}$ there is the world $\omega=bp\ol{f}w$ such that $\omega\worderOf{\DeltaBird} \omega'$. 

  \begin{figure}%
  \centering
\normalsize
\begin{tikzpicture}
	\tikzstyle{world}=[inner sep=.5mm,]
	\tikzstyle{conn} = [->]
	\tikzstyle{conn1} = []

	\def\xdiff{1.7}
	\def\txdiff{1.7}
	\def\ydiff{-1}
	
	\node (11) at (0,0) [world] {\omegaJbpfw};
	\node (12) at (\xdiff,0) [world] {\omegaIbpfw};
	\node (21) at (-\xdiff,\ydiff) [world] {\omegaBbpfw};
	\node (31) at (-\xdiff,2*\ydiff) [world] {\omegaAbpfw};
	\node (32) at (0,2*\ydiff) [world] {\omegaLbpfw};
	\node (33) at (\xdiff,2*\ydiff) [world] {\omegaKbpfw};
	\node (41) at (-.5*\xdiff,3*\ydiff) [world] {\omegaDbpfw};
	\node (42) at (.5*\xdiff,3*\ydiff) [world] {\omegaHbpfw};
	\node (51) at (-\xdiff,4*\ydiff) [world] {\omegaGbpfw};
	\node (52) at (0,4*\ydiff) [world] {\omegaFbpfw};
	\node (53) at (\xdiff,4*\ydiff) [world] {\omegaCbpfw};
	\node (61) at (-2*\txdiff,5.5*\ydiff) [world] {\omegaMbpfw};
	\node (62) at (-\txdiff,5.5*\ydiff) [world] {\omegaObpfw};
	\node (63) at (0,5.5*\ydiff) [world] {\omegaNbpfw};
	\node (64) at (\txdiff,5.5*\ydiff) [world] {\omegaPbpfw};
	\node (65) at (2*\txdiff,5.5*\ydiff) [world] {\omegaEbpfw};
	
	\draw[conn] (21) -- (11);
	\draw[conn] (21) -- (12);
	\draw[conn] (31) -- (21);
	\draw[conn] (32) -- (11);
	\draw[conn] (32) -- (12);
	\draw[conn] (33) -- (11);
	\draw[conn] (33) -- (12);
	\draw[conn] (41) -- (31);
	\draw[conn] (41) -- (32);
	\draw[conn] (41) -- (33);
	\draw[conn] (42) -- (31);
	\draw[conn] (42) -- (32);
	\draw[conn] (42) -- (33);
	\draw[conn] (51) -- (41);
	\draw[conn] (51) -- (42);
	\draw[conn] (52) -- (41);
	\draw[conn] (52) -- (42);
	\draw[conn] (53) -- (41);
	\draw[conn] (53) -- (42);
	\draw[conn] (61) -- (51);
	\draw[conn] (61) -- (52);
	\draw[conn] (61) -- (53);
	\draw[conn] (62) -- (51);
	\draw[conn] (62) -- (52);
	\draw[conn] (62) -- (53);
	\draw[conn] (63) -- (51);
	\draw[conn] (63) -- (52);
	\draw[conn] (63) -- (53);
	\draw[conn] (64) -- (51);
	\draw[conn] (64) -- (52);
	\draw[conn] (64) -- (53);
	\draw[conn] (65) -- (51);
	\draw[conn] (65) -- (52);
	\draw[conn] (65) -- (53);
\end{tikzpicture}
  \caption{%
  	The preferred structure on worlds \(\worder{\DeltaBird}\) in Example~\ref{exa:sysw}.
    An edge \(\omega \rightarrow \omega'\) indicates that \(\omega \worder{\DeltaBird} \omega'\); edges that can be obtained from transitivity are omitted.}
\label{fig_birds_preferred_structure}
\end{figure}
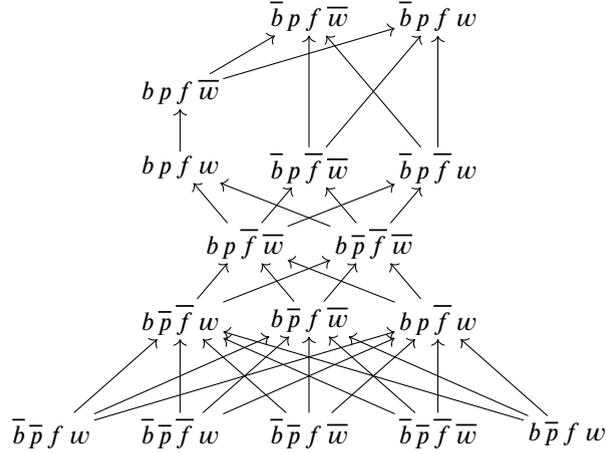

 \end{example}
}
\FARBE{
Before showing that System~W fully complies with \cSynSplitG, we show some lemmata regarding generalized safe conditional syntax splitting and the preferred structure on \FARBE{worlds first.
The first} lemma states that the order between two worlds must be \FARBE{reflected by} the order of one of the subbases.
\begin{lemma}%
	\label{lem_syswp_either}
	Let \genSafeCondSynSplitGen\ and let $\omega,\omega'\in\Omega$. If $\omega\worderDelta \omega'$ then $\omega\worderOf{\Delta_1} \omega'$ or $\omega\worderOf{\Delta_2} \omega'$. 
\end{lemma}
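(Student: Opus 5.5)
The plan is to push the witnessing index $l$ of $\omega\worderDelta\omega'$ down to the subbases, using Lemma~\ref{lem_falsunion}. The key intermediate claim is
\[
\falsWOf{\Delta_i}^m(\omega)=\falsWDelta^m(\omega)\cap\Delta_i \quad\text{for all } m\in\{0,\dots,k\},\ i\in\{1,2\},\ \omega\in\Omega, \tag{$\ast$}
\]
where $\Delta_i^m=\emptyset$ for indices beyond the length of $\OP(\Delta_i)$.

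To prove $(\ast)$, first note that Lemma~\ref{lem_falsunion} gives $\falsWOf{\Delta_i}^m(\omega)\subseteq\falsWDelta^m(\omega)$, and trivially $\falsWOf{\Delta_i}^m(\omega)\subseteq\Delta_i$. For the converse, let $c\in\falsWDelta^m(\omega)\cap\Delta_i$. Then $c\in\Delta_i$ lies in some part $\Delta_i^{m'}$, and $\omega$ falsifies $c$, so $c\in\falsWOf{\Delta_i}^{m'}(\omega)$. Applying Lemma~\ref{lem_falsunion} at index $m'$ gives $c\in\falsWDelta^{m'}(\omega)$. Since the parts of $\OP(\Delta)$ are disjoint, $m'=m$, hence $c\in\falsWOf{\Delta_i}^m(\omega)$. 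Only falsified conditionals are involved, so self-fulfilling conditionals cause no trouble.

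With $(\ast)$ in hand, let $l$ witness $\omega\worderDelta\omega'$. That is, $\falsWDelta^m(\omega)=\falsWDelta^m(\omega')$ for all $m>l$, and $\falsWDelta^l(\omega)\subsetneq\falsWDelta^l(\omega')$.

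1. Intersecting the equalities for $m>l$ with $\Delta_i$ and applying $(\ast)$ gives $\falsWOf{\Delta_i}^m(\omega)=\falsWOf{\Delta_i}^m(\omega')$ for both $i$ and all $m>l$.
2. At index $l$, intersecting with $\Delta_i$ gives $\falsWOf{\Delta_i}^l(\omega)\subseteq\falsWOf{\Delta_i}^l(\omega')$ for both $i$.
3. By Lemma~\ref{lem_falsunion}, $\falsWDelta^l(\omega')=\falsWOf{\Delta_1}^l(\omega')\cup\falsWOf{\Delta_2}^l(\omega')$. So a conditional $c\in\falsWDelta^l(\omega')\setminus\falsWDelta^l(\omega)$ lies in $\Delta_i$ for some $i$.
4. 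For that $i$, the inclusion at index $l$ is strict, because $c\in\falsWOf{\Delta_i}^l(\omega')\setminus\falsWOf{\Delta_i}^l(\omega)$ by $(\ast)$.

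Since $c\in\Delta_i^l$, the index $l$ lies within the range of $\OP(\Delta_i)$. Indices above the length of $\OP(\Delta_i)$ have empty parts and impose no condition. Thus $l$ witnesses $\omega\worderOf{\Delta_i}\omega'$ by Definition~\ref{def_w_structure_worlds}.

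The only delicate step is $(\ast)$: Proposition~\ref{corr_op} states the correspondence of parts asymmetrically, so I derive the needed index-matching from Lemma~\ref{lem_falsunion}, which holds for all $l$ and both subbases, together with disjointness of the tolerance partition. The rest is bookkeeping.
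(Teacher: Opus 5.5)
Your proof is correct and follows essentially the paper's argument. You pick a conditional in $\falsWDelta^l(\omega')\setminus\falsWDelta^l(\omega)$, place it in a subbase $\Delta_i$, and transfer the equalities above $l$ and the strict inclusion at $l$ through the level-wise correspondence of the tolerance partitions; the paper cites $\Delta_i^j\subseteq\Delta^j$ from Proposition~\ref{corr_op} and tacitly uses the disjointness that you make explicit in $(\ast)$. One fact you should state explicitly, since you rely on it twice, is that $\OP(\Delta_i)$ has at most $k+1$ parts (noted in the proof of Proposition~\ref{corr_op}): you need it to apply Lemma~\ref{lem_falsunion} at the index $m'$, and to ensure that step~1 covers every index above $l$ required by the definition of $\worderOf{\Delta_i}$.
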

\begin{proof}
	Let $OP(\Delta)=(\Delta^0,\dots,\Delta^k)$. Let $\omega,\omega'\in\Omega$ with $\omega\worderDelta\omega'$. By definition there is $n\in\{0,\dots,k\}$ such that $\falsWDelta^j(\omega)=\falsWDelta^j(\omega')$ for all $j\in\{n+1,\dots,k\}$ and $\falsWDelta^n(\omega)\subsetneq\falsWDelta^n(\omega')$. \FARBE{Then there is $\delta\in\falsWDelta^n(\omega')$ with $\delta\notin\falsWDelta^n(\omega)$. We have $\delta\in \Delta$, implying $\delta\in\Delta_1$ or $\delta\in\Delta_2$.} Let \FARBE{$i\in\{1,2\}$ such that $\delta\in\Delta_i$} and let $OP(\Delta_i)=(\Delta_i^1,\dots,\Delta_i^m)$. With Proposition~\ref{corr_op} we have that $\Delta_i^j\subseteq\Delta^i$. Thus due to $\falsWDelta^j(\omega)=\falsWDelta^j(\omega')$ we also have $\falsWDeltaI^j(\omega)=\falsWDeltaI^j(\omega')$ and because $\falsWDelta^n(\omega)\subsetneq\falsWDelta^n(\omega')$ and $\delta\in\Delta_i$ we have $\falsWDeltaI^n(\omega)\subsetneq\falsWDeltaI^n(\omega')$. Thus $\omega\worderOf{\Delta_i}\omega'$.
\end{proof}

\FARBE{The next lemma 
states that the order between two worlds in the preferred structure of $\Delta_i$ is preserved in the preferred structure of $\Delta$ if the two worlds \FARBE{coincide} on the signature of $\Delta_{\j}$.}
\begin{lemma}%
	\label{lem_syswp_match}
	Let \genSafeCondSynSplitGen, let $\omega,\omega'\in\Omega$ and let $i,i'\in\{1,2\}$ with $i\neq i'$. If $\omega\worderOf{\Delta_i} \omega'$ and $\omega_{\mid \Sigma_{i'}\cup\Sigma_3}=\omega'_{\mid \Sigma_{i'}\cup\Sigma_3}$ then $\omega\worderOf{\Delta} \omega'$.
\end{lemma}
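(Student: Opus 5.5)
The plan is to unfold the definition of $\worderOf{\Delta_i}$ and then transfer the witness index to $\Delta$ via the layer-wise decomposition of falsified sets in Lemma~\ref{lem_falsunion}. Assume $\omega\worderOf{\Delta_i}\omega'$. Then there is an index $l$ with $\falsWDeltaI^m(\omega)=\falsWDeltaI^m(\omega')$ for all $m>l$ and $\falsWDeltaI^l(\omega)\subsetneq\falsWDeltaI^l(\omega')$. By Proposition~\ref{corr_op}, the layers of $\OP(\Delta_i)$ and $\OP(\Delta_{i'})$ have the same indices as those of $\OP(\Delta)$, up to the shorter one being padded with empty layers. So $l$ is also a valid layer index for $\Delta$, and it is the candidate witness for $\omega\worderDelta\omega'$.

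Next, use the hypothesis $\omega_{\mid\Sigma_{i'}\cup\Sigma_3}=\omega'_{\mid\Sigma_{i'}\cup\Sigma_3}$. Every conditional in $\Delta_{i'}$ lies in $(\cL(\Sigma_{i'}\cup\Sigma_3)\mid\cL(\Sigma_{i'}\cup\Sigma_3))$, so by \eqref{eq_omega_reduce} whether it is falsified depends only on the reduct to $\Sigma_{i'}\cup\Sigma_3$. Hence $\falsW^m_{\Delta_{i'}}(\omega)=\falsW^m_{\Delta_{i'}}(\omega')$ for every layer $m$.

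Lemma~\ref{lem_falsunion} gives $\falsWDelta^m(\omega)=\falsWDeltaI^m(\omega)\cup\falsW^m_{\Delta_{i'}}(\omega)$, and similarly for $\omega'$. For $m>l$ both components agree, so $\falsWDelta^m(\omega)=\falsWDelta^m(\omega')$. For $m=l$, inclusion $\subseteq$ of the unions is immediate.

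Strictness at layer $l$ is the step I expect to be the main obstacle. Pick $\delta\in\falsWDeltaI^l(\omega')\setminus\falsWDeltaI^l(\omega)$; it remains to show $\delta\notin\falsW^l_{\Delta_{i'}}(\omega)$. If $\delta\in\Delta_{i'}$, then $\delta\in\Delta_3$. By Proposition~\ref{corr_op}(3) it then sits in layer $l$ of both subbases, and it is falsified by $\omega'$. Since $\delta\in\Delta_3$ only mentions $\Sigma_3$ atoms, and $\omega,\omega'$ agree on $\Sigma_3$, $\omega$ falsifies $\delta$ too, contradicting $\delta\notin\falsWDeltaI^l(\omega)$. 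So $\delta\notin\falsW^l_{\Delta_{i'}}(\omega)$, hence $\delta\in\falsWDelta^l(\omega')\setminus\falsWDelta^l(\omega)$, and the inclusion is strict.

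This yields $\omega\worderDelta\omega'$ with witness $l$. The only delicate bookkeeping is the index range when the partitions of $\Delta_1,\Delta_2$ have different lengths. I would handle this by treating missing layers as empty, which is consistent with the closing clause of Proposition~\ref{corr_op}.
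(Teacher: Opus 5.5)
Your proposal is correct and follows essentially the same route as the paper's proof: unfold $\worderOf{\Delta_i}$, decompose each layer via Lemma~\ref{lem_falsunion}, and use that $\omega$ and $\omega'$ falsify exactly the same conditionals of $\Delta_{i'}$. Two of your steps are only asserted in the paper, namely the padding of the shorter partition and the check that the witness $\delta$ does not lie in $\falsW^l_{\Delta_{i'}}(\omega)$; the second is needed because a strict inclusion is not automatically preserved under union with a common set, and it can be settled even more directly than you do, since $\delta\in\Delta_i^l$ together with $\omega$ falsifying $\delta$ would already force $\delta\in\falsWDeltaI^l(\omega)$.
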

\begin{proof}
	Let $i,i'\in\{1,2\}, i\neq i'$, and let
	$OP(\Delta)=(\Delta^0,\dots,\Delta^k)$. Let $\omega,\omega'\in\Omega$ with $\omega\worderOf{\Delta_i}\omega'$ and $\omega_{\mid\Sigma_{i'}\cup\Sigma_3}=\omega'_{\mid \Sigma_{i'}\cup\Sigma_3}$. Let $OP(\Delta_i)=(\Delta^0_i,\dots,\Delta^m_i)$. Per definition there is $n\in\{0,\dots,m\}$ such that $\falsWDeltaI^j(\omega)=\falsWDeltaI^j(\omega')$ for $j\in\{n+1,\dots,m\}$ and $\falsWDeltaI^n(\omega)\subsetneq\falsWDeltaI^n(\omega').$ 
	With Lemma~\ref{lem_falsunion} we have $\falsWDelta^l(\omega^*)=\falsWDeltaI^l(\omega^*)\cup\falsWDeltaJ^k(\omega^*)$ for all worlds $\omega^*$ and $l\in\{0,\dots,m\}$. 
	Because $\omega_{\mid\Sigma_{i'}\cup\Sigma_3}=\omega'_{\mid \Sigma_{i'}\cup\Sigma_3}$ we have $\falsWDeltaJ(\omega)=\falsWDeltaJ(\omega')$. Then $\falsWDeltaI^j(\omega)=\falsWDeltaI^j(\omega')$ implies $\falsWDelta^j(\omega)=\falsWDelta^j(\omega')$ and $\falsWDeltaI^j(\omega)\subsetneq\falsWDeltaI^j(\omega')$ implies $\falsWDelta^j(\omega)\subsetneq\falsWDelta^j(\omega')$, and thus $\omega\worderOf{\Delta_i}\omega'$ implies $\omega\worderDelta\omega'$.
\end{proof}

Finally, the next lemma shows that the order of worlds \FARBE{with respect to} a subbase is only dependent on the signature relevant to that subbase.
\begin{lemma}%
	\label{lem_syswp_match2}
	Let \genSafeCondSynSplitGen, let $\omega,\omega',\omega^*\in\Omega$ and let $i\in\{1,2\}$ with $\omega_{\mid\Sigma_i\cup\Sigma_3}=\omega'_{\mid\Sigma_i\cup\Sigma_3}$. Then we have $\omega\worderOf{\Delta_i}\omega^*$ iff $\omega'\worderOf{\Delta_i}\omega^*$ and $\omega^*\worderOf{\Delta_i}\omega$ iff $\omega^*\worderOf{\Delta_i}\omega'$.
\end{lemma}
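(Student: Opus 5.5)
The plan is to reduce everything to the falsification sets of $\Delta_i$, since $\worderOf{\Delta_i}$ is defined purely through the sets $\falsWDeltaI^l(\cdot)$ for $l\in\{0,\dots,m\}$, where $\OP(\Delta_i)=(\Delta_i^0,\dots,\Delta_i^m)$. The key observation is that for every world $\omega\in\Omega$ and every $l$, the set $\falsWDeltaI^l(\omega)$ depends only on the reduct of $\omega$ to $\Sigma_i\cup\Sigma_3$. This is because every conditional $(B_j|A_j)\in\Delta_i$ lies in $(\cL(\Sigma_i\cup\Sigma_3)\mid\cL(\Sigma_i\cup\Sigma_3))$ by the definition of a conditional syntax splitting. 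Then \eqref{eq_omega_reduce}, applied with $\Sigma_i\cup\Sigma_3$ in the role of the relevant subsignature, gives $\omega\models A_j\ol{B_j}$ iff $\omega_{\mid\Sigma_i\cup\Sigma_3}\models A_j\ol{B_j}$.

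First I will record this as a claim: if $\omega_{\mid\Sigma_i\cup\Sigma_3}=\omega'_{\mid\Sigma_i\cup\Sigma_3}$, then $\falsWDeltaI^l(\omega)=\falsWDeltaI^l(\omega')$ for all $l\in\{0,\dots,m\}$. The proof is a direct membership check: a conditional of $\Delta_i^l$ is in $\falsWDeltaI^l(\omega)$ iff $\omega\models A_j\ol{B_j}$, and this holds iff the common reduct satisfies $A_j\ol{B_j}$, which in turn holds iff $\omega'\models A_j\ol{B_j}$. No property of generalized safety is needed here. Only the signature restriction on $\Delta_i$ is used, and the tolerance partition $\OP(\Delta_i)$ is fixed independently of any world.

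Second, I will unfold Definition~\ref{def_w_structure_worlds} for $\Delta_i$. We have $\omega\worderOf{\Delta_i}\omega^*$ iff there is an $n$ such that $\falsWDeltaI^j(\omega)=\falsWDeltaI^j(\omega^*)$ for $j>n$ and $\falsWDeltaI^n(\omega)\subsetneq\falsWDeltaI^n(\omega^*)$. Replacing each $\falsWDeltaI^j(\omega)$ by the equal set $\falsWDeltaI^j(\omega')$ turns this condition into exactly the condition for $\omega'\worderOf{\Delta_i}\omega^*$, with the same witness $n$. This gives the first equivalence. The second equivalence, $\omega^*\worderOf{\Delta_i}\omega$ iff $\omega^*\worderOf{\Delta_i}\omega'$, follows by the same substitution on the right-hand argument.

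I expect no real obstacle here. The only point needing care is notational: worlds of $\Omega(\Sigma)$ are compared via $\worderOf{\Delta_i}$, which the paper implicitly evaluates on the full worlds, and I need to make explicit that only their $\Sigma_i\cup\Sigma_3$-part matters. This is what the claim in the first step settles.
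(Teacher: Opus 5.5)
Your proof is correct and takes the same route as the paper. Both arguments observe that $\omega$ and $\omega'$ falsify exactly the same conditionals of $\Delta_i$, because $\Delta_i$ is over $\Sigma_i\cup\Sigma_3$, and that the preferred structure for $\Delta_i$ depends only on these falsification sets; you just write out the membership check and the substitution into Definition~\ref{def_w_structure_worlds}, which the paper leaves implicit.
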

\begin{proof}
	Let $\omega,\omega',\omega^*\in\Omega$ as above. Then $\omega$ and $\omega'$ falsify the same conditionals in $\Delta_i$. Because $\worderOf{\Delta_i}$ is defined solely based on the falsification of conditionals in $\Delta_i$ the lemma follows.
\end{proof}
}

\FARBE{Now we are ready to show that system~W fully complies with 
\FARBE{generalized conditional syntax splitting.}}
\begin{proposition}
\label{prop:Wind:satisfies:c:syn:split}
\FARBE{System W satisfies 
	\cRelG\ and  \cIndG\ and thus
	\cSynSplitG.}
	\end{proposition}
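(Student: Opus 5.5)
We prove \cRelG\ and \cIndG\ separately, mirroring the proof for lexicographic inference. Throughout we fix \genSafeCondSynSplitGen, $i\neq\j$, formulas $A,B\in\cL(\Sigma_i)$ and $D\in\cL(\Sigma_{\j})$, and a complete conjunction $E$ over $\Sigma_3$. All worlds considered therefore share the same $\Sigma_3$-part $\omega^3$. The three tools are Lemma~\ref{lem_syswp_either} (an order in $\Delta$ is witnessed in some $\Delta_m$), Lemma~\ref{lem_syswp_match} (an order in $\Delta_i$ lifts to $\Delta$ when the $\Sigma_{\j}\cup\Sigma_3$ parts agree), and Lemma~\ref{lem_syswp_match2} (the order $\worderOf{\Delta_i}$ depends only on the $\Sigma_i\cup\Sigma_3$ part). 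We also use that $\worderDelta$ is a strict partial order on a finite set, so every nonempty set of worlds contains $\worderDelta$-minimal elements.

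\textbf{(I) \cRelG.} Interpret $AE\wableitOf{\Delta_i}B$ over $\Omega(\Sigma_i\cup\Sigma_3)$. By Lemma~\ref{lem_syswp_match2} this is equivalent to reading it over $\Omega$ with $\worderOf{\Delta_i}$.

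For ``$\Leftarrow$'', take $\omega'\models AE\ol{B}$. Its restriction has some $\nu\models AEB$ with $\nu\worderOf{\Delta_i}\omega'^i\omega^3$. Set $\omega=\nu\,\omega'^{\j}$. This world agrees with $\omega'$ on $\Sigma_{\j}\cup\Sigma_3$, so Lemma~\ref{lem_syswp_match} gives $\omega\worderDelta\omega'$.

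For ``$\Rightarrow$'', take $\nu'\models AE\ol{B}$ over $\Sigma_i\cup\Sigma_3$. Using \genSafety, extend it by some $\omega^{\j}_*$ so that no conditional in $\DeltaJWoThree$ is falsified; call the result $\omega'$. Since $\worderDelta$ is a strict partial order on a finite set, we may choose a $\worderDelta$-minimal $\omega\models AEB$ below $\omega'$. Then apply Lemma~\ref{lem_syswp_either}: either $\omega\worderOf{\Delta_i}\omega'$, and we are done, or $\omega\worderOf{\Delta_{\j}}\omega'$.

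The second case is the main obstacle. My plan for it is as follows. Because $\omega'$ falsifies nothing in $\DeltaJWoThree$, its $\Delta_{\j}$-falsifications lie in $\Delta_3\subseteq\Delta_i$. Consider $\hat\omega=\omega^i\omega^3\omega^{\j}_*$. By Lemma~\ref{lem_falsunion}, $\hat\omega$ falsifies, level by level, the union of its $\Delta_i$-falsifications (which equal those of $\omega$) and a subset of $\Delta_3$-conditionals that $\omega$ also falsifies. Hence either $\hat\omega=\omega$ in the relevant comparison, or $\hat\omega\worderDelta\omega$ via Lemma~\ref{lem_syswp_match}, unless $\hat\omega=\omega$. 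Minimality of $\omega$ then forces $\omega^{\j}$ to add nothing outside $\Delta_3$. At that point the strict difference between $\omega$ and $\omega'$ at the top differing level must come from $\Delta_i$-conditionals, including those in $\Delta_3$. This yields $\omega\worderOf{\Delta_i}\omega'$, and restricting to $\Sigma_i\cup\Sigma_3$ gives the claim. I expect the careful level-by-level bookkeeping with Proposition~\ref{corr_op} here to be the delicate part.

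\textbf{(II) \cIndG.} Using (I), $AE\wableitDelta B$ iff $AE\wableitOf{\Delta_i}B$. So it suffices to show that $AED\wableitDelta B$ is also equivalent to the $\Delta_i$-statement.

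For the direction from $\Delta_i$ to $AED$, take $\omega'\models AED\ol{B}$ and pick a witness $\nu\models AEB$ below $\omega'^i\omega^3$ in $\Delta_i$. The world $\nu\omega'^{\j}$ satisfies $AEDB$ and is $\worderDelta$-below $\omega'$ by Lemma~\ref{lem_syswp_match}.

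For the converse, take $\nu'\models AE\ol{B}$ over $\Sigma_i\cup\Sigma_3$. Choose an extension $\omega^{\j}\models D$ which is $\worderOf{\Delta_{\j}}$-minimal among $DE$-worlds; this is possible since $DE\not\nmany{\Delta}{}\bot$. Then repeat the argument of (I) ``$\Rightarrow$'' with $D$ added: a minimal witness below $\nu'\omega^{\j}$ cannot be better only in $\Delta_{\j}$, by the choice of $\omega^{\j}$ and the same $\Delta_3$ bookkeeping. Hence it is better in $\Delta_i$, which proves the $\Delta_i$-statement and therefore \cIndG.
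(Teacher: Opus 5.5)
Your proof is correct. It uses the same toolkit as the paper: Lemmas~\ref{lem_syswp_either}, \ref{lem_syswp_match} and \ref{lem_syswp_match2}, plus a swap-and-minimize argument. The difference is where you place the minimality.

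Your ``$\Leftarrow$'' direction of \cRelG\ is the paper's argument. Your step from the $\Delta_i$-statement to the $AED$-inference is the same construction with $D$ carried along.

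For ``$\Rightarrow$'' of \cRelG, the paper takes an arbitrary $AE\ol{B}$-world and passes to a $\worderDelta$-minimal world $\omega'_{min}$ with the same $\Sigma_i\cup\Sigma_3$-part. It then excludes $\omega\worderOf{\Delta_{i'}}\omega'_{min}$ by swapping the witness's $\Sigma_{i'}$-part into $\omega'_{min}$. You instead invoke \genSafety\ explicitly to fix the falsifying world as the safe completion, and you minimize the witness.

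For \cIndG, the paper proves both directions directly on $\Delta$ with the same device. You reduce to \cRelG, so only the direction from $AED$ back to the $\Delta_i$-statement needs a new idea. There your $\worderOf{\Delta_{i'}}$-minimal $DE$-completion plays the role of the paper's $\omega'_{min}\models D$.

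Your version is more modular. The paper's uses one uniform device and never calls on \genSafety\ outside the lemmas.

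The case you flagged as the main obstacle in fact cannot occur. The safe completion $\omega'$ falsifies in $\Delta_{i'}$ exactly the $\Delta_3$-conditionals falsified by $\omega^3$. Every world with $\Sigma_3$-part $\omega^3$ falsifies these too. So at each level, the $\Delta_{i'}$-falsifications of $\omega'$ are contained in those of any witness $\omega$, which rules out $\omega\worderOf{\Delta_{i'}}\omega'$. Lemma~\ref{lem_syswp_either} then gives $\omega\worderOf{\Delta_i}\omega'$ for \emph{every} witness, with no minimal witness and no $\hat\omega$ needed.

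The same thing happens in your part (II): the choice of $\omega^{i'}$ alone excludes the bad case. The $\hat\omega$-bookkeeping would not transfer literally there, since a merely minimal $DE$-completion need not falsify a subset of what the witness falsifies. The ``$\Rightarrow$'' direction of (I) is simply the instance $D=\top$ of that argument.
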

	\begin{proof}
\FARBE{
	\FARBE{Let $\FARBE{i,i'\in\{1,2\}, i\neq i'}$, and let $A,B\in\cL(\Sigma_i)$, $D\in\cL(\Sigma_{\j})$, and let $E$ be a complete conjunction over $\Sigma_3$. We show \cRelG\ \textbf{(I)} and \cIndG\ \textbf{(II)} separately.}
	
	\textbf{(I)} We show \cRelG\ first. We need to show 
	\[
	AE\wableitDelta B\quad \text{iff}\quad AE\wableitOf{\Delta_i} B.
	\]
	Thus, we need to show that for every $\omega'\in\Mod_\Sigma(AE\ol{B})$ there is an $\omega\in\Mod_\Sigma(AEB)$ with $\omega\worderOf{\Delta}\omega'$ iff for every $\omega'\in\Mod_\Sigma(AE\ol{B})$ there is an $\omega\in\Mod_\Sigma(AEB)$ with $\omega\worderOf{\Delta_i}\omega'$.
	
	"$\Rightarrow$":
	Assume that $AE\wableitDelta B$, i.e., $AEB\worderDelta AE\ol{B}$. We need to show $AEB\worderOf{\Delta_i} AE\ol{B}$. Let $\omega'$ be any world with $\omega'\models AE\ol{B}$. Choose $\omega'_{min}$ such that
	\begin{align}
		&\omega'_{min}\leq_\Delta^{\sf w} \omega',\label{eq_wrel_1}
		\\ &\omega'\marg_{\Sigma_i\cup\Sigma_3}=\omega'_{min}\marg_{\Sigma_i\cup\Sigma_3},\text{ and}\label{eq_wrel_2}
		\\
		&\text{there is no }\omega'_{min2} \text{ with } \omega'_{min2}\worderDelta\omega'_{min} \text{ fulfilling \eqref{eq_wrel_1} and \eqref{eq_wrel_2}.} \label{eq_wrel_3}
	\end{align}
	Such an $\omega'_{min}$ exists because $\omega'$ exists. With \eqref{eq_wrel_2} and $\omega'\models AE\ol{B}$  we have $\omega'_{min}\models AE\ol{B}$. Then there must be $\omega$ with $\omega\models AEB$ and $\omega\worderDelta\omega_{min}'$. With Lemma~\ref{lem_syswp_either} either $\omega\worderOf{\Delta_i}\omega_{min}'$ or $\omega\worderOf{\Delta_{i'}}\omega_{min}'$. The second case is not possible, if it were the case that $\omega\worderOf{\Delta_{i'}}\omega_{min}'$ then Lemma~\ref{lem_syswp_match2} implies $\omega'_{min2}=(\omega'_{min}\marg_{\Sigma_{i}}\omega\marg_{\Sigma_{i'}\cup\Sigma_3})\worderOf{\Delta_{i'}}\omega'_{min}$. But with Lemma~\ref{lem_syswp_match} and the fact that $\omega_{min2}'\marg_{\Sigma_i\cup\Sigma_3}=\omega_{min}'\marg_{\Sigma_i\cup\Sigma_3}$ we would have $\omega_{min2}'\worderDelta\omega_{min}'$ which contradicts \eqref{eq_wrel_3}.
	Therefore $\omega\worderOf{\Delta_i}\omega'_{min}$. Then with \eqref{eq_wrel_2} and Lemma~\ref{lem_syswp_match2} we get $\omega\worderOf{\Delta_i}\omega'$ and thus we have $AEB\worderOf{\Delta_i} AE\ol{B}$.
	
	"$\Leftarrow$":
	Assume that $AE\wableitOf{\Delta_i}B$, i.e., $AEB\worderOf{\Delta_i} AE\ol{B}$. We need to show $AEB\worderOf{\Delta} AE\ol{B}$. Let $\omega'$ be any world with $\omega'\models AE\ol{B}$.  $\omega^*$ with $\omega^*\models AEB$ and $\omega^*\worderOf{\Delta_i}\omega'$. Let $\omega=(\omega^*\marg_{\Sigma_i\cup\Sigma_3}\omega'\marg_{\Sigma_{i'}})$. Then $\omega\models AEB$. With Lemma~\ref{lem_syswp_match2} and the fact that $\omega\marg_{\Sigma_{i}\cup\Sigma_3}=\omega^*\marg_{\Sigma_{i}\cup\Sigma_3}$ we have $\omega\worderOf{\Delta_i}\omega'$. With $\omega\marg_{\Sigma_{i'}\cup\Sigma_3}=\omega'\marg_{\Sigma_{i'}\cup\Sigma_3}$ and with Lemma~\ref{lem_syswp_match} we get $\omega\worderDelta\omega'$ and therefore $AEB\worderDelta AE\ol{B}$.
	
	\textbf{(II)} Next we show \cIndG. We need to show 
	\[
	AE\wableitDelta B\quad \text{iff}\quad AED\wableitDelta B.
	\]
	Thus, we need to show that for every $\omega'\in\Mod_\Sigma(AE\ol{B})$ there is an $\omega\in\Mod_\Sigma(AEB)$ with $\omega\worderDelta\omega'$ iff for every $\omega'\in\Mod_\Sigma(AED\ol{B})$ there is an $\omega\in\Mod_\Sigma(AEDB)$ with $\omega\worderDelta\omega'$.
	
	"$\Rightarrow$:
	Assume that $AE\wableitDelta B$, i.e., $AEB\worderDelta AE\ol{B}$. We need to show $AEDB\worderOf{\Delta} AED\ol{B}$. Let $\omega'$ be any world with $\omega'\models AED\ol{B}$.
	Define $\omega'_{min}$ again, such that equations \eqref{eq_wrel_1}, \eqref{eq_wrel_2} and \eqref{eq_wrel_3} are satisfied. Because $\omega'\models AED\ol{B}$ and \eqref{eq_wrel_2} we have that $\omega'_{min}\models AE\ol{B}$. Due to $AEB\worderDelta AE\ol{B}$ there is $\omega^*$ with $\omega^*\models AEB$ and $\omega^*\worderDelta\omega'_{min}$. 
	Then, with Lemma~\ref{lem_syswp_either}, either $\omega^*\worderOf{\Delta_{i}}\omega'_{min}$ or $\omega^*\worderOf{\Delta_{i'}}\omega'_{min}$.
	\FARBE{Utilizing the same arguments as the $\Rightarrow$ direction of the proof for \cRelG,
	$\omega^*\worderOf{\Delta_{i'}}\omega'_{min}$ is not possible due to \eqref{eq_wrel_3}.}
	Now let $\omega=(\omega^*\marg_{\Sigma_{i}\cup\Sigma_3}\omega'\marg_{\Sigma_{i'}})$. Then also $\omega\marg_{\Sigma_{i'}\cup\Sigma_3}=\omega'\marg_{\Sigma_{i'}\cup\Sigma_3}$ because $\omega\models E$ and $\omega'\models E$. With \eqref{eq_wrel_2} and Lemmata~\ref{lem_syswp_match} and \ref{lem_syswp_match2} we get $\omega\worderDelta\omega'$. Because $\omega\models AEDB$ and $\omega'\models AED\ol{B}$ we are done.
	
	"$\Leftarrow$":
	Assume that $AED\worderDelta B$, i.e., $AEDB\worderDelta AED\ol{B}$. We need to show $AEB\worderOf{\Delta} AE\ol{B}$. Let $\omega'$ be any world with $\omega'\models AE\ol{B}$. Choose $\omega'_{min}$ such that
	\begin{align}
		&\omega'_{min}\models D\label{eq_wind_1},
		\\ &\omega'\marg_{\Sigma_i\cup\Sigma_3}=\omega'_{min}\marg_{\Sigma_i\cup\Sigma_3}\label{eq_wind_2},\text{ and}
		\\
		&\text{there is no }\omega'_{min2} \text{ with } \omega'_{min2}\worderDelta\omega'_{min} \text{ fulfilling \eqref{eq_wind_1} and \eqref{eq_wind_2}.}\label{eq_wind_3}
	\end{align}
	Again such a $\omega'_{min}$ exists because $C\not\equiv\bot$, $\worderDelta$ is irreflexive and transitive and $\Sigma$ is finite. With \eqref{eq_wind_1}, \eqref{eq_wind_2} and $\omega'\models AE\ol{B}$ we have $\omega_{min}'\models AED\ol{B}$. Due to $AEDB\worderDelta AED\ol{B}$ there is then $\omega^*$ with $\omega^*\models AEDB$ and $\omega^*\worderDelta\omega_{min}'$. With Lemma~\ref{lem_syswp_either} again either $\omega^*\worderOf{\Delta_{i}}\omega'_{min}$ or $\omega^*\worderOf{\Delta_{i'}}\omega'_{min}$. 
	\FARBE{Utilizing the same arguments as the $\Rightarrow$ direction of the proof for \cRelG,
	$\omega^*\worderOf{\Delta_{i'}}\omega'_{min}$ is not possible due to \eqref{eq_wind_3}.}
	Now let $\omega=(\omega^*\marg_{\Sigma_{i}\cup\Sigma_3}\omega'\marg_{\Sigma_{i'}})$. Then also $\omega\marg_{\Sigma_{i'}\cup\Sigma_3}=\omega'\marg_{\Sigma_{i'}\cup\Sigma_3}$ because $\omega\models E$ and $\omega'\models E$. With \eqref{eq_wind_3} and Lemmata~\ref{lem_syswp_match} and \ref{lem_syswp_match2} we get $\omega\worderDelta\omega'$. Because $\omega\models AEB$ and $\omega'\models AE\ol{B}$ we are done.}
\end{proof}

\FARBE{Thus, while System~Z satisfies \cRelG\ but not \cIndG, both lexicographic inference and also System~W fully comply with \cSynSplitG. In the next section we will evaluate \FARBE{several inference operators based on a special subclass of ranking functions}.}

\section{Evaluating Inductive Inference Operators based on c-Representations}
\label{sec_cinf}
\FARBE{This section addresses several \iiopnames employing c-representations \citep{KernIsberner2001,KernIsberner2004AMAI}. In Sect.~\ref{ssec_crepprelim}, we present propositions and lemmata regarding c-representations and \FARBE{present the notion of conditional} $\kappa$-independence that will be helpful for the following sections. In Sect.~\ref{ssec_singlecrep}, we show that inference with respect to single c-representations selected by an appropriate strategy \citep{BeierleKernIsberner2021FLAIRS} satisfies generalized conditional syntax splitting. In Sect.~\ref{ssec_ccore}, we show that
\FARBE{c-core closure inference \citep{WilhelmKernIsbernerBeierle2024FoIKScb} fully complies with generalized conditional syntax splitting.}		
In Sect. 
\ref{ssec_cinf}, we show that also inference with respect to all c-representations \citep{BeierleEichhornKernIsbernerKutsch2018AMAI} satisfies generalized conditional syntax splitting.}
\subsection{c-Representations and $\kappa$-Independence}\label{ssec_crepprelim}
\FARBE{
Among the OCF models of \Rdelta, c-representations are special
ranking models obtained by assigning individual integer impacts to
the conditionals in \Rdelta\
and generating 
the world ranks as the sum of impacts of falsified conditionals \citep{KernIsberner2001,KernIsberner2004AMAI}.}

\begin{definition}[c-representation \citep{KernIsberner2001,KernIsberner2004AMAI}]\label{def:c-representation}
	\FARBE{A \emph{c-representation} of}
	$\Rdelta = \{\satzCL{B_1}{A_1},\ldots,\satzCL{B_n}{A_n}\}$
	is an OCF
	$\kappa$ constructed from \FARBE{non-negative impacts} $\kappaiminus{j}\in\mathbb{N}_0$ assigned to each
	$\satzCL{B_j}{A_j}$ such that $\kappa$ accepts \Rdelta\ and is given by: 
	\begin{align}\label{form:def:c-representation}
		\kappa(\omega) 
		=\sum\limits_{\substack{1 \leq j \leq n\\\omega\models A_j\ol{B}_j}}\kappaiminus{j}
	\end{align}
\end{definition}

c-Representations can conveniently be specified using a constraint satisfaction problem
\FARBE{(for detailed explanations, see \citep{KernIsberner2001,KernIsberner2004AMAI}):}
\begin{definition}[\cspR, \citep{KernIsberner2001,BeierleEichhornKernIsbernerKutsch2018AMAI}]
	\label{def_csp_fuer_r}
	\FARBE{The  \emph{constraint satisfaction problem \cspR\ for c-representations of}
	\(
	\Rdelta = \{\satzCL{B_1}{A_1},\ldots,\satzCL{B_n}{A_n}\}
	\)}
	is given by the conjunction of the constraints,
	for all \(j \in \{1,\ldots,n\}\):
	\begin{align}
		\label{eq_kappaiminus_positive}
		&
		\kappaiminus{j} \geq 0
		\\
		\label{eq_kappa_accepts_r_with_kappaiminus}
		&
		\kappaiminus{j}  >  
		\min_{\omega \models A_j B_j}
		\sum_{\substack{k \neq j \\ \omega \models A_k \ol{B_k}}} \kappaiminus{k} 
		- 
		\min_{\omega \models A_j \notB_j}
		\sum_{\substack{k \neq j \\ \omega \models A_k \ol{B_k}}} \kappaiminus{k} 
	\end{align}
\end{definition}

\FARBE{Note that}
(\ref{eq_kappaiminus_positive}) expresses that falsification of conditionals should make worlds \FARBE{not more plausible}, and
(\ref{eq_kappa_accepts_r_with_kappaiminus}) ensures that $\kappa$ as specified by
(\ref{form:def:c-representation}) accepts $\Delta$.
\FARBE{A solution} of \cspR\ is a %
vector
\(
\kappaiminusVektor = (\kappaiminus{1}, \ldots,  \kappaiminus{n})
\)
of
natural numbers.
\solutionsR\ denotes
the set of all solutions of \cspR.
For 
\(
\kappaiminusVektor 
\in  \solutionsR
\) 
and \(\kappa\) as in Equation~\eqref{form:def:c-representation}, \(\kappa\) is the \emph{OCF induced by \kappaiminusVektor} and is denoted by 
\(\induzierteOCF{\kappaiminusVektor}\).
	\cspR\ is sound and complete
	\citep{KernIsberner2001,BeierleEichhornKernIsbernerKutsch2018AMAI}:
	For every \(\kappaiminusVektor \in \solutionsR\), \(\induzierteOCF{\kappaiminusVektor}\) is a c-representation with \(\induzierteOCF{\kappaiminusVektor} \models \Rdelta\), and for every c-representation \(\kappa\) with \(\kappa \models \Rdelta\), there is  \(\kappaiminusVektor \in \solutionsR\) such that \(\kappa = \induzierteOCF{\kappaiminusVektor}\).
	For 
	\FARBE{an impact vector} 
	\kappaiminusVektor, we will simply
	write $\kappaiminusVektorsubj{1}$ and $\kappaiminusVektorsubj{2}$ for the corresponding projections
	$\marg{\kappaiminusVektor}{\Rdelta_1}$ and $\marg{\kappaiminusVektor}{\Rdelta_2}$,
	and $\composeVektor{\kappaiminusVektorsubj{1}}{\kappaiminusVektorsubj{2}}$ for their composition. 
	\FARBE{We illustrate these notions with an example.}
	\begin{example}[$\DeltaBird$ cont.]
		\label{exa_birds}
		For the \BB\ $\DeltaBird$ \FARBE{from Example~\ref{ex:lexinf}},		
		\FARBE{$\cspOf{\DeltaBird}$ contains $ \kappaiminus{i} \geq 0$ for $i \in \{1,2,3,4\}$ \FARBE{as well as} the following \FARBE{constraints}:
			\newlength{\zeilenabstand}
			\setlength{\zeilenabstand}{.0cm}
			\newlength{\abstandklein}
			\setlength{\abstandklein}{.5cm}
			\newlength{\abstandbig}
			\setlength{\abstandbig}{.7cm}
			\begin{align*}
				\kappaiminus{1}  >& \hspace{\abstandbig} 
				\min_{\substack{\omega \in \Omega_\Sigma \\ \omega \models bf}} \sum_{\substack{j \neq 1 \\ \omega \models A_j \ol{B_j}}} \kappaiminus{j} 
				\hspace{\abstandklein} - \hspace{\abstandklein} \min_{\substack{\omega \in \Omega_\Sigma \\ \omega \models b\ol{f}}} \sum_{\substack{j \neq 1 \\ \omega \models A_j \ol{B_j}}} \kappaiminus{j}
				\\[\zeilenabstand]
				\kappaiminus{2}  >&  \hspace{\abstandbig} 
				\min_{\substack{\omega \in \Omega_\Sigma \\ \omega \models p\ol{f}}} \sum_{\substack{j \neq 2 \\ \omega \models A_j \ol{B_j}}} \kappaiminus{j} 
				\hspace{\abstandklein} - \hspace{\abstandklein} \min_{\substack{\omega \in \Omega_\Sigma \\ \omega \models pf}} \sum_{\substack{j \neq 2 \\ \omega \models A_j \ol{B_j}}} \kappaiminus{j}
				\\[\zeilenabstand]
				\kappaiminus{3}  >&  \hspace{\abstandbig} 
				\min_{\substack{\omega \in \Omega_\Sigma \\ \omega \models pb}} \sum_{\substack{j \neq 3 \\ \omega \models A_j \ol{B_j}}} \kappaiminus{j} 
				\hspace{\abstandklein} - \hspace{\abstandklein} \min_{\substack{\omega \in \Omega_\Sigma \\ \omega \models p\ol{b}}} \sum_{\substack{j \neq 3 \\ \omega \models A_j \ol{B_j}}} \kappaiminus{j}
				\\[\zeilenabstand]
				\kappaiminus{4}  >&  \hspace{\abstandbig} 
				\min_{\substack{\omega \in \Omega_\Sigma \\ \omega \models bw}} \sum_{\substack{j \neq 4 \\ \omega \models A_j \ol{B_j}}} \kappaiminus{j} 
				\hspace{\abstandklein} - \hspace{\abstandklein} \min_{\substack{\omega \in \Omega_\Sigma \\ \omega \models b\ol{w}}} \sum_{\substack{j \neq 4 \\ \omega \models A_j \ol{B_j}}} \kappaiminus{j}
		\end{align*}}
		Table \ref{tab_birds} shows some solutions for $\DeltaBird$ as well as their corresponding \FARBE{induced} c-representations. For example $\kappaiminusVektor_1=(1,2,2,1)\in\solutionsRof{\DeltaBird}$,   $\kappaiminusVektor_1^1=(1,2,2)\in\solutionsRof{\DeltaIWoJof{1}{3}{\DeltaBird}}$ and $\kappaiminusVektor_1^2=(1)\in\solutionsRof{\DeltaIWoJof{2}{3}{\DeltaBird}}$.
	\end{example}

\begin{table}[t!]
\centering
\addtolength{\spalteAbst}{0mm}
\addtolength{\spalteAbstGr}{0mm}
\addtolength{\spalteAbstGGr}{0mm}
\renewcommand{\verify}{\ensuremath{\textsf{v}}}
\renewcommand{\falsif}{\ensuremath{\textsf{f}}}
\(
\begin{array}{@{}c@{\hspace*{\spalteAbstGr}}c@{\hspace*{\spalteAbst}}c@{\hspace*{\spalteAbst}}c@{\hspace*{\spalteAbst}}c@{\hspace*{\spalteAbstGGr}}c@{\hspace*{\spalteAbstGr}}c@{\hspace*{\spalteAbstGr}}c@{\hspace*{\spalteAbstGr}}c@{}}
\toprule
\omega
&  
\begin{array}{c}
\condLabel{\delta_1}\\
\satzCL{f}{b}
\end{array}
&  
\begin{array}{c}
\condLabel{\delta_2}\\
\satzCL{\ol{f}}{p}
\end{array}
&  
\begin{array}{c}
\condLabel{\delta_3}\\
\satzCL{b}{p}
\end{array}
&  
\begin{array}{c}
\condLabel{\delta_4}\\
\satzCL{w}{b}
\end{array}
&
\begin{array}{c}
\textrm{impact}\\ \textrm{ on }\omega
\end{array}
&
\begin{array}{c}
 \induzierteOCF{\kappaiminusVektor_1}\\(\omega)
\end{array}
&
\begin{array}{c}
 \induzierteOCF{\kappaiminusVektor_2}\\(\omega)
\end{array}
&
\begin{array}{c}
 \induzierteOCF{\kappaiminusVektor_3}\\(\omega)
\end{array}
\\
\midrule
\omegaAbpfw  & \verify & \falsif & \verify & \verify  & \kappaiminus{2}                    & 2    & 4   & 5   
\\
\omegaBbpfw  & \verify & \falsif & \verify & \falsif  & \kappaiminus{2} + \kappaiminus{4}  & 3    & 7   & 12
\\
\omegaCbpfw  & \falsif & \verify & \verify & \verify  & \kappaiminus{1}                    & 1    & 3   & 4
\\
\omegaDbpfw  & \falsif & \verify & \verify & \falsif  & \kappaiminus{1} + \kappaiminus{4}  & 2    & 6   & 11
\\%
\omegaEbpfw  & \verify & \neutra & \neutra & \verify  & 0                                  & 0    & 0   & 0
\\
\omegaFbpfw  & \verify & \neutra & \neutra & \falsif  & \kappaiminus{4}                    & 1    & 3   & 7
\\
\omegaGbpfw  & \falsif & \neutra & \neutra & \verify  & \kappaiminus{1}                    & 1    & 3   & 4
\\
\omegaHbpfw  & \falsif & \neutra & \neutra & \falsif  & \kappaiminus{1} + \kappaiminus{4}  & 2    & 6   & 11
\\%
\omegaIbpfw  & \neutra & \falsif & \falsif & \neutra  & \kappaiminus{2} + \kappaiminus{3}  & 4    & 8   & 11
\\
\omegaJbpfw  & \neutra & \falsif & \falsif & \neutra  & \kappaiminus{2} + \kappaiminus{3}  & 4    & 8   & 11
\\
\omegaKbpfw  & \neutra & \verify & \falsif & \neutra  & \kappaiminus{3}                    & 2    & 4   & 6
\\
\omegaLbpfw  & \neutra & \verify & \falsif & \neutra  & \kappaiminus{3}                    & 2    & 4   & 6
\\%
\omegaMbpfw  & \neutra & \neutra & \neutra & \neutra  & 0                                  & 0    & 0   & 0
\\
\omegaNbpfw  & \neutra & \neutra & \neutra & \neutra  & 0                                  & 0    & 0   & 0
\\
\omegaObpfw  & \neutra & \neutra & \neutra & \neutra  & 0                                  & 0    & 0   & 0
\\
\omegaPbpfw  & \neutra & \neutra & \neutra & \neutra  & 0                                  & 0    & 0   & 0
\\
\midrule
\textrm{impacts:}  & \kappaiminus{1}   &  \kappaiminus{2} & \kappaiminus{3}   &  \kappaiminus{4}      
\\                         %
\kappaiminusVektor_1  &        1          &        2         &         2         &        1        
\\ 
\kappaiminusVektor_2  &        3          &        4         &         4         &        3         
\\
\kappaiminusVektor_3  &        4          &        5         &         6         &        7         
 \\
\bottomrule
\end{array}
\)
\caption{Verification and falsification with induced impacts for $\DeltaBird$ 
in Example~\protect\ref{exa_birds}. \FARBE{The impact vectors $\protect\kappaiminusVektor_1, \protect\kappaiminusVektor_2$, and $\protect\kappaiminusVektor_3$ are solutions of $\protect\CR(\DeltaBird)$ and $\protect\induzierteOCF{\protect\kappaiminusVektor_1},\protect\induzierteOCF{\protect\kappaiminusVektor_2},\protect\induzierteOCF{\protect\kappaiminusVektor_3}$ are their induced ranking functions according to Definition~\protect\ref{def:c-representation}.}}
\label{tab_birds}
\end{table}
 	
	A fundamental property of c-representations \FARBE{is that for any syntax splitting} \FARBE{\(\kb = \kb_1 \splitcup \kb_2\)}
	the composition of \emph{any}
	impact vectors for \FARBE{the} subbases %
	yields an impact vector for \Rdelta,
	and vice versa \citep{KernIsbernerBeierleBrewka2020KR}.
	\FARBE{This property was also shown to extend to \FARBE{safe} conditional syntax splittings \citep{BeierleSpiegelHaldimannWilhelmHeyninckKernIsberner2024KR}. However, a key part in showing this was \Cref{lem_selffulfilling} which no longer holds for generalized safe conditional syntax splittings. Indeed the \FARBE{composition} property no longer holds, as the impacts assigned to the conditionals in $\Delta_3$ can be vastly different between the two subbases. Thus, we show a slightly weaker property here. While it still states that any impact vector for $\Delta$ can be split into impact vectors for the subbases, impact vectors for the subbases may only yield an impact vector for $\Delta$ if  they match on the \FARBE{impacts} assigned to the conditionals \FARBE{in $\Delta_3$}.}

	\FARBE{
	}
		\begin{propositionrep}
			\label{prop_solprop}
			Let \genSafeCondSynSplitGen. The following two properties hold \FARBE{for $i, i'\in\{1,2\}, i\neq i'$:}		
			\begin{itemize}
				\item For every $\kappaiminusVektor\in\solutionsRof{\Delta}$ there are $\kappaiminusVektorsubjAlt{i}\in\solutionsRof{\DeltaI}$, $\kappaiminusVektorsubjAlt{i'}\in\solutionsRof{\DeltaJ}$ and $\kappaiminusVektorsubjAlt{3}\in\solutionsRof{\DeltaThree}$ with $\kappaiminusVektorsubjAlt{i}|_{\DeltaThree}=\kappaiminusVektorsubjAlt{i'}|_{\DeltaThree} = \kappaiminusVektorsubjAlt{3}$, such that $\kappaiminusVektor=(\kappaiminusVektorsubjAlt{i}|_{\DeltaIWoThree},\kappaiminusVektorsubjAlt{i'}|_{\DeltaJWoThree},\kappaiminusVektorsubjAlt{3})$\FARBE{, i.e., $\kappaiminusVektor|_{\Delta_{i}}=\kappaiminusVektorsubjAlt{i}$ and $\kappaiminusVektor|_{\Delta_{i'}}=\kappaiminusVektorsubjAlt{i}$.}
				
				\item For every $\kappaiminusVektorsubjAlt{i}\in\solutionsRof{\DeltaI},  \kappaiminusVektorsubjAlt{i'}\in\solutionsRof{\DeltaJ}$ and $\kappaiminusVektorsubjAlt{3}\in\solutionsRof{\DeltaThree}$ with $\kappaiminusVektorsubjAlt{i}|_{\DeltaThree}=\kappaiminusVektorsubjAlt{i'}|_{\DeltaThree} = \kappaiminusVektorsubjAlt{3}$ there is $\kappaiminusVektor\in\solutionsRof{\Delta}$ such that  $\kappaiminusVektor=(\kappaiminusVektorsubjAlt{i}|_{\DeltaIWoThree},\kappaiminusVektorsubjAlt{i'}|_{\DeltaJWoThree},\kappaiminusVektorsubjAlt{3})$\FARBE{, i.e., $\kappaiminusVektor|_{\Delta_{i}}=\kappaiminusVektorsubjAlt{i}$ and $\kappaiminusVektor|_{\Delta_{i'}}=\kappaiminusVektorsubjAlt{i}$.}
				
			\end{itemize}
		\end{propositionrep}
		\begin{proof}
			For both properties we will construct solution vectors $\kappaiminusVektor=(\kappaiminusVektorsubjAlt{i}|_{\DeltaIWoThree},\kappaiminusVektorsubjAlt{i'}|_{\DeltaJWoThree},\kappaiminusVektorsubjAlt{3})$ where $\kappaiminusVektorsubjAlt{i}|_{\DeltaThree}=\kappaiminusVektorsubjAlt{i'}|_{\DeltaThree}=\kappaiminusVektorsubjAlt{3}$, such that $\kappaiminusVektor|_{\DeltaI}=\kappaiminusVektorsubjAlt{i}, \kappaiminusVektor|_{\DeltaJ}=\kappaiminusVektorsubjAlt{\j}$ and $\kappaiminusVektor|_{\DeltaThree}=\kappaiminusVektorsubjAlt{3}$.
			For the first property, we define $\kappaiminusVektorsubjAlt{i}$ as $\kappaiminusVektor|_{\DeltaI}$, $\kappaiminusVektorsubjAlt{\j}$ as $\kappaiminusVektor|_{\DeltaJ}$ and $\kappaiminusVektorsubjAlt{3}$ as $\kappaiminusVektor|_{\DeltaThree}$. Then clearly $\kappaiminusVektorsubjAlt{i}|_{\DeltaThree}=\kappaiminusVektorsubjAlt{i'}|_{\DeltaThree}=\kappaiminusVektorsubjAlt{3}$.
			For the second property we define $\kappaiminusVektor=(\kappaiminusVektorsubjAlt{i}|_{\DeltaIWoThree},\kappaiminusVektorsubjAlt{i'}|_{\DeltaJWoThree},\kappaiminusVektorsubjAlt{3})$.
			Then it is sufficient to show that $\kappaiminusVektor\in\solutionsRof{\Delta}$ iff $\kappaiminusVektorsubjAlt{l}\in\solutionsRof{\Delta_l}$, for $l\in\{1,2,3\}$ \FARBE{for both points of Proposition~\ref{prop_solprop}.}
			
			We show the hard direction $\Leftarrow$ first:
			Assume $\kappaiminusVektorsubjAlt{l}\in\solutionsRof{\Delta_l}$, for $l\in\{1,2,3\}$, towards a contradiction, now assume $\kappaiminusVektor\notin\solutionsRof{\Delta}$.
			Then there is some conditional $(B_j|A_j)\in\Delta$ whose constraints in $\cspRof{\Delta}$ are not satisfied by $\kappaiminusVektor$: 
			\begin{align}
				\label{eq_kappaiminus_positive2}
				&
				\kappaiminus{j} \geq 0
				\\
				\label{eq_kappa_accepts_r_with_kappaiminus2}
				&
				\kappaiminus{j}  >  
				\underbrace{
					\min_{\omega \models A_j B_j}
					\sum_{\substack{k \neq j \\ \omega \models A_k \ol{B_k} \\ (B_k|A_k)\in\Delta}} \kappaiminus{k}
				}_{\Vmin} 
				- 
				\underbrace{
					\min_{\omega \models A_j \notB_j}
					\sum_{\substack{k \neq j \\ \omega \models A_k \ol{B_k} \\ (B_k|A_k)\in\Delta}} \kappaiminus{k} 
				}_{\Fmin}
			\end{align}
			The unsatisfied constraint must either be of form \eqref{eq_kappaiminus_positive2} or \eqref{eq_kappa_accepts_r_with_kappaiminus2}. If it is of form \eqref{eq_kappaiminus_positive2}, then the solution vector for the subbase containing $(B_j|A_j)$ also falsifies this constraint, leading to a contradiction.		
			So the violated constraint is of form \eqref{eq_kappa_accepts_r_with_kappaiminus2}. Assume that $(B_j|A_j)\in\DeltaI$, the case where $(B_j|A_j)\notin\DeltaI$ implies $(B_j|A_j)\in\DeltaJ$ and is then analogous. There must be a corresponding constraint
			\begin{align}
				\label{eq_proof_solprop_12}
				\kappaiminusAlt{j}^{i}  >  
				\underbrace{
					\min_{\omega \models A_j B_j}
					\sum_{\substack{k \neq j \\ \omega \models A_k \ol{B_k} \\ (B_k|A_k)\in\DeltaI}} \kappaiminusAlt{k}^{i} 
				}_{\VminI}
				- 
				\underbrace{
					\min_{\omega \models A_j \notB_j}
					\sum_{\substack{k \neq j \\ \omega \models A_k \ol{B_k} \\ (B_k|A_k)\in\DeltaI}} \kappaiminusAlt{k}^{i} 
				}_{\FminI}
			\end{align} 
			in \cspRof{\Delta_{i}}.
			\FARBE{Since the impact vectors are constructed such that $\kappaiminusVektorsubjAlt{i}|_{\DeltaThree}=\kappaiminusVektorsubjAlt{i'}|_{\DeltaThree}=\kappaiminusVektorsubjAlt{3}$, we have that $\kappaiminusVektor|_{\Delta_{i}}=\kappaiminusVektorsubjAlt{i}=(\kappaiminusVektorsubjAlt{i}|_{\DeltaIWoThree},\kappaiminusVektorsubjAlt{3})$ and $\kappaiminusVektor|_{\Delta_{i'}}=\kappaiminusVektorsubjAlt{\j}=(\kappaiminusVektorsubjAlt{\j}|_{\DeltaJWoThree},\kappaiminusVektorsubjAlt{3})$. This way, by construction of $\kappaiminusVektor$, we have that $\kappaiminus{j}=\kappaiminus{j}^{i}=\mu^i_j$ for all $(B_j|A_j)\in\DeltaI$.
				
			Now we show that, if  \eqref{eq_kappa_accepts_r_with_kappaiminus2} is not satisfied, then 
			\eqref{eq_proof_solprop_12} is not satisfied either.
			Because $(B_j|A_j)\in\DeltaI$, we have that $(B_j|A_j)\in (\cL(\Sigma_i\cup\Sigma_3)|\cL(\Sigma_i\cup\Sigma_3))$. Thus there is some $\omega^i\omega^3$ with $\omega\models A_jB_j$ iff $\omega^i\omega^3\models A_jB_j$.
			Due to the \genSafety\ property we have that any $\omegaI\omega^3$ can be extended by an $\omegaJ$ such that no conditional in $\DeltaJWoThree$ is falsified. 
			This means that the worlds minimizing $\Vmin$ and $\Fmin$ do not falsify any conditional in $\DeltaJWoThree$ and \FARBE{that they falsify as few conditionals in $\DeltaI$ as possible}. 
			Thus these worlds also minimize $\VminI$ and $\FminI$. Therefore, the restrictions imposed on $\kappaiminus{j}$ and $\mu^i_j$ are the same. %
			Since $\kappaiminus{j}=\mu^i_j$ for all $(B_j|A_j)\in\DeltaI$ this means that if \eqref{eq_kappa_accepts_r_with_kappaiminus2} is not satisfied, then \eqref{eq_proof_solprop_12} is also not satisfied. Thus, $\kappaiminusVektorsubjAlt{i}\notin\solutionsRof{\Delta_i}$, contrary to our assumption.}

			The other direction of the proof uses similar arguments.
		\end{proof}

		\FARBE{We give an example illustrating Proposition~\ref{prop_solprop}.}
\FARBE{\begin{example}[$\DeltaBird$ cont.]\label{ex:tweety:cinf}
	\FARBE{Consider $\kappaiminusVektor_1\in\solutionsRof{\DeltaBird}$ with $\kappaiminusVektor_1=(1,2,2,1)$ from Example~\ref{exa_birds}.
	Because $\genSafeCondSynSplitMacro{\Delta}{\{(f|b),(\ol{f}|p),(b|p)\}}{\{(w|b)\}}{\{p,f\}}{\{w\}}{\{b\}}$ and $\Delta_3=\emptyset$ we can obtain the solution $\kappaiminusVektor_1$ for $\DeltaBird$ by combining the solutions $\kappaiminusVektor_1^1=(1,2,2)$ and $\kappaiminusVektor_1^2=(1)$ for $\DeltaBird_1$ and $\DeltaBird_2$ utilizing Proposition~\ref{prop_solprop}. Vice versa, we can also utilize Proposition~\ref{prop_solprop} to split $\kappaiminusVektor_3=(4,5,6,7)$ into $\kappaiminusVektor^1_3=(4,5,6)$ and $\kappaiminusVektor^2_3=(7)$, obtaining solutions for $\DeltaBird_1$ and $\DeltaBird_2$ from a solution for $\DeltaBird$.}
\end{example}}

To show that nonmonotonic reasoning with c-representations satisfies \cSynSplitG\, we %
	employ
	the concept of conditional $\kappa$-independence.
\begin{definition}[conditional $\kappa$-independence \citep{HeyninckKernIsbernerMeyerHaldimannBeierle2023AAAI},\citep{spohn12}]
	\label{def_ocf_condind}
	Let \FARBE{$\Sigma_1,\Sigma_2,\Sigma_3\subseteq \Sigma$ where $\Sigma_1,\Sigma_2$ and $\Sigma_3$ are pairwise disjoint} and
	let $\kappa$ be an OCF. \jhii{$\Sigma_1, \Sigma_2$ are} \FARBE{\emph{conditionally $\kappa$-independent \jhii{given $\Sigma_3$}}}, in symbols $\Ind{\kappa}{\Sigma_1}{\Sigma_2}{\Sigma_3}$,  if for all $\omega^1 \in \Omega(\Sigma_1), \omega^2 \in \Omega(\Sigma_2)$, and $\omega^3 \in \Omega(\Sigma_3)$,  
	\FARBE{it holds that} $\kappa(\omega^1 | \omega^{2} \omega^3) = \kappa(\omega^1 |\omega^3)$.
\end{definition}

\FARBE{Given a c-representation and a safe conditional syntax splitting, the subsignatures defined by this splitting \FARBE{are} $\kappa$-independent \citep{BeierleSpiegelHaldimannWilhelmHeyninckKernIsberner2024KR}. \FARBE{This result is extended to the case of generalized safe conditional syntax splitting in the following proposition; its proof largely follows the proof of \citep[Proposition 26]{BeierleSpiegelHaldimannWilhelmHeyninckKernIsberner2024KR}, but has been adapted in the last few steps to hold also for generalized safe splittings.}}
\begin{propositionrep}%
	\label{prop_crep_condkind}
	Let \FARBE{\genSafeCondSynSplitGen}, and $\kappa$ a c-representation with $\kappa\models\Delta$. Then $\Ind{\kappa}{\Sigma_1}{\Sigma_2}{\Sigma_3}$.
\end{propositionrep}
\begin{proof}
	\FARBE{Let \genSafeCondSynSplitGen.}
	Let $\omega=\omega^1\omega^2\omega^3$ and let $\kappaiminusVektor\in \solutionsOf{\Delta}$ such that $\kappa=\induzierteOCF{\kappaiminusVektor}$. 
	Recall the definition of c-representations \eqref{form:def:c-representation}.
	We can rewrite \eqref{form:def:c-representation} to
	\begin{align}
		\label{eq_semsplit_crep_proof_2}
		\induzierteOCF{\kappaiminusVektor}(\omega) 
		=
		\sum_{\substack{\omega \models A_j \notB_j \\ \satzCL{B_j}{A_j} \in \DeltaOneWoThree}} \!\! \kappaiminussubj{j}{}
		+
		\sum_{\substack{\omega \models A_j \notB_j \\ \satzCL{B_j}{A_j} \in \DeltaTwoWoThree}} \!\! \kappaiminussubj{j}{}
		+
		\sum_{\substack{\omega \models A_j \notB_j \\ \satzCL{B_j}{A_j} \in \Rdelta_3}} \!\! \kappaiminussubj{j}{}
	\end{align}
	By simply adding and subtracting the last sum of \eqref{eq_semsplit_crep_proof_2} we obtain the following equation.
	\begin{align}
		\label{eq_semsplit_crep_proof_3}
		\induzierteOCF{\kappaiminusVektor}(\omega) 
		=
		\sum_{\substack{\omega \models A_j \notB_j \\ \satzCL{B_j}{A_j} \in \DeltaOneWoThree}} \!\! \kappaiminussubj{j}{}
		+
		\sum_{\substack{\omega \models A_j \notB_j \\ \satzCL{B_j}{A_j} \in \DeltaTwoWoThree}} \!\! \kappaiminussubj{j}{}
		+
		\sum_{\substack{\omega \models A_j \notB_j \\ \satzCL{B_j}{A_j} \in \Rdelta_3}} \!\! \kappaiminussubj{j}{}
		+
		\sum_{\substack{\omega \models A_j \notB_j \\ \satzCL{B_j}{A_j} \in \Rdelta_3}} \!\! \kappaiminussubj{j}{}
		-
		\sum_{\substack{\omega \models A_j \notB_j \\ \satzCL{B_j}{A_j} \in \Rdelta_3}} \!\! \kappaiminussubj{j}{}
	\end{align}
	Then we can combine the sums for $\DeltaOneWoThree$ and $\DeltaTwoWoThree$ with the sum for $\Delta_3$ to obtain sums for $\Delta_1$ and $\Delta_2$ respectively.
	\begin{align}
		\label{eq_semsplit_crep_proof_4}
		\induzierteOCF{\kappaiminusVektor}(\omega) 
		=
		\sum_{\substack{\omega \models A_j \notB_j \\ \satzCL{B_j}{A_j} \in \Rdelta_1}} \!\! \kappaiminussubj{j}{}
		+
		\sum_{\substack{\omega \models A_j \notB_j \\ \satzCL{B_j}{A_j} \in \Rdelta_2}} \!\! \kappaiminussubj{j}{}
		-
		\sum_{\substack{\omega \models A_j \notB_j \\ \satzCL{B_j}{A_j} \in \Rdelta_3}} \!\! \kappaiminussubj{j}{}
	\end{align}
	
	Since $\Delta_1$ is in ${\cal L}(\Sigma_1\cup \Sigma_{3})$, $\Delta_2$ is in ${\cal L}(\Sigma_2\cup\Sigma_3)$ and $\Delta_3$ is in ${\cal L}(\Sigma_3)$ we can use \eqref{eq_omega_reduce} to simplify \eqref{eq_semsplit_crep_proof_4}.
	\begin{align}
		\label{eq_cind_crep_proof_5}
		\induzierteOCF{\kappaiminusVektor}(\omega) 
		=
		\sum_{\substack{\omega^1\omega^3\models A_j \notB_j \\ \satzCL{B_j}{A_j} \in \Rdelta_1}} \!\! \kappaiminussubj{j}{}
		+
		\sum_{\substack{\omega^2\omega^3 \models A_j \notB_j \\ \satzCL{B_j}{A_j} \in \Rdelta_2}} \!\! \kappaiminussubj{j}{}
		-
		\sum_{\substack{\omega^3 \models A_j \notB_j \\ \satzCL{B_j}{A_j} \in \Rdelta_3}} \!\! \kappaiminussubj{j}{}
	\end{align}
	\FARBE{
		Due to the \genSafety\ property $\omegaIThree$ can not falsify any conditional in $\Delta\setminus\DeltaI$ for $i,\in{1,2}$. Thus \eqref{eq_cind_crep_proof_5} can be rewritten to 
		\begin{align}
			\label{eq_cind_crep_proof_6}
			\induzierteOCF{\kappaiminusVektor}(\omega) 
			=
			\sum_{\substack{\omega^1\omega^3\models A_j \notB_j \\ \satzCL{B_j}{A_j} \in \Delta}} \!\! \kappaiminussubj{j}{}
			+
			\sum_{\substack{\omega^2\omega^3 \models A_j \notB_j \\ \satzCL{B_j}{A_j} \in \Delta}} \!\! \kappaiminussubj{j}{}
			-
			\sum_{\substack{\omega^3 \models A_j \notB_j \\ \satzCL{B_j}{A_j} \in \Delta}} \!\! \kappaiminussubj{j}{}
		\end{align}
		By applying the definition of c-representations again, we obtain
	}
	\begin{equation}
		\induzierteOCF{\kappaiminusVektor}(\omega^1\omega^2\omega^3)=\induzierteOCF{\kappaiminusVektor}(\omega^1\omega^3)
		+
		\induzierteOCF{\kappaiminusVektor}(\omega^2\omega^3)
		-
		\induzierteOCF{\kappaiminusVektor}(\omega^3)
	\end{equation}
	which is equivalent to $\induzierteOCF{\kappaiminusVektor}\satzCL{\omega^1}{\omega^2\omega^3}=\induzierteOCF{\kappaiminusVektor}\satzCL{\omega^1}{\omega^3}$, completing the proof.
\end{proof}

\begin{toappendix}
\FARBE{The following useful lemma will aid us in proofs.}
\begin{lemma}[\citep{BeierleSpiegelHaldimannWilhelmHeyninckKernIsberner2024KR}]
	\label{lemma_ocf_condint_formula}
	Let %
	$\Sigma_1,\Sigma_2,\Sigma_3\subseteq \Sigma$ where $\Sigma_1,\Sigma_2$ and $\Sigma_3$ are pairwise disjoint 
	and
	let $\kappa$ be an OCF. $\Sigma_1, \Sigma_2$ are conditionally $\kappa$-independent given $\Sigma_3$ iff for all $A \in \cL(\Sigma_1), B \in \cL(\Sigma_2)$ and complete conjunctions $\FARBE{E}\in \cL(\Sigma_3)$ it holds that
	\[\kappa(ABE) = \kappa(AE) + \kappa(BE) -\kappa(E).\]
\end{lemma}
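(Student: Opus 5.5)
The plan is to prove the two directions of the equivalence separately, reducing both to Definition~\ref{def_ocf_condind}. Throughout I use $\kappa(\omega^1|\omega^2\omega^3)=\kappa(\omega^1\omega^2\omega^3)-\kappa(\omega^2\omega^3)$. I will assume $\kappa(E)<\infty$ for the complete conjunctions involved, or else adopt the usual convention for $\infty$.

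\emph{Direction ``$\Leftarrow$''.} Instantiate the formula with $A=\omega^1$ (a complete conjunction over $\Sigma_1$), $B=\omega^2$ and $E=\omega^3$. This gives $\kappa(\omega^1\omega^2\omega^3)=\kappa(\omega^1\omega^3)+\kappa(\omega^2\omega^3)-\kappa(\omega^3)$. Rearranging yields $\kappa(\omega^1\omega^2\omega^3)-\kappa(\omega^2\omega^3)=\kappa(\omega^1\omega^3)-\kappa(\omega^3)$, which is exactly $\kappa(\omega^1|\omega^2\omega^3)=\kappa(\omega^1|\omega^3)$.

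\emph{Direction ``$\Rightarrow$''.} First rewrite independence as the world-level identity
\[
\kappa(\omega^1\omega^2\omega^3)=\kappa(\omega^1\omega^3)+\kappa(\omega^2\omega^3)-\kappa(\omega^3).
\]
Now fix $A\in\cL(\Sigma_1)$, $B\in\cL(\Sigma_2)$ and a complete conjunction $E=\omega^3$. Because $A$ and $B$ only mention atoms in $\Sigma_1$ and $\Sigma_2$, the models of $ABE$ restricted to $\Sigma_1\cup\Sigma_2\cup\Sigma_3$ are exactly the combinations $\omega^1\omega^2\omega^3$ with $\omega^1\models A$ and $\omega^2\models B$; this follows from \eqref{eq_omega_reduce}. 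The remaining atoms of $\Sigma$ outside $\Sigma_1\cup\Sigma_2\cup\Sigma_3$ are already absorbed by treating each $\omega^1\omega^2\omega^3$ as a formula, since its rank is a minimum over its extensions. Hence
\[
\kappa(ABE)=\min_{\omega^1\models A,\ \omega^2\models B}\kappa(\omega^1\omega^2\omega^3).
\]
Substituting the world-level identity, the minimum separates because the summands depend on disjoint variables:
\[
\kappa(ABE)=\min_{\omega^1\models A}\kappa(\omega^1\omega^3)+\min_{\omega^2\models B}\kappa(\omega^2\omega^3)-\kappa(\omega^3).
\]
The first term is $\kappa(AE)$ and the second is $\kappa(BE)$.

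\emph{Degenerate cases.} If $A$ or $B$ is unsatisfiable, both sides are $\infty$. The main point needing care is the infinite-rank bookkeeping: when $\kappa(\omega^3)=\infty$ the subtraction is undefined. I would dismiss this case by the convention that conditional ranks given rank-$\infty$ formulas are vacuous, or by the strong-consistency assumption that all worlds have finite rank.
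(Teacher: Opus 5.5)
Your proof is correct. The paper does not prove this lemma itself: it imports it with a citation from the earlier KR~2024 paper and states it in the appendix without proof, so there is no in-paper argument to compare against. Your route is the natural one. For ``$\Leftarrow$'' you instantiate $A,B,E$ with complete conjunctions. For ``$\Rightarrow$'' you turn independence into the pointwise identity $\kappa(\omega^1\omega^2\omega^3)=\kappa(\omega^1\omega^3)+\kappa(\omega^2\omega^3)-\kappa(\omega^3)$, write $\kappa(ABE)$ as a minimum over $\omega^1\models A$, $\omega^2\models B$, and split that minimum because the summands depend on disjoint variables. Your remark that atoms outside $\Sigma_1\cup\Sigma_2\cup\Sigma_3$ are absorbed by treating $\omega^1\omega^2\omega^3$ as a formula correctly covers the case where the three subsignatures do not exhaust $\Sigma$.

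Two small points on the $\infty$ bookkeeping. First, suppose you adopt the convention that conditioning on a rank-$\infty$ formula is vacuous. Then in ``$\Rightarrow$'', for pairs with $\kappa(\omega^2\omega^3)=\infty$ but $\kappa(\omega^3)<\infty$, the pointwise identity does not come from rewriting the independence condition. You have to check it directly, which is immediate: both sides are $\infty$ because $\omega^1\omega^2\omega^3\models\omega^2\omega^3$. Second, strong consistency of a belief base only guarantees that some finite-rank model exists, not that a given OCF is finite everywhere. The cleaner justification for restricting to finite ranks is that every OCF the paper applies this lemma to is a c-representation, hence finite on all worlds.
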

\end{toappendix}
\FARBE{Lemma~\ref{lemma_ocf_condint_formula} allows us to use the arithmetics provided by OCFs to calculate the ranks of formulas over disjoint and $\kappa$-independent subsignatures \FARBE{which we will exploit in the following subsections}.}

\subsection{Inference with Single c-Representations}\label{ssec_singlecrep}
\FARBE{In this section we look at inference with respect to a single c-representation, obtained by assigning one c-representation to each belief base, yielding an OCF-based inductive inference operator.}
\FARBE{For this, it will be useful to introduce \FARBE{an} alternative characterization of \cIndG\ and \cRelG\ for OCF-based inductive inference operators.} 
\FARBE{Corresponding propositions for safe conditional syntax splittings have been \FARBE{introduced by Heyninck et al.} \citep{HeyninckKernIsbernerMeyerHaldimannBeierle2023AAAI}; here we extend \FARBE{them} to generalized safe conditional syntax splittings.}
\begin{proposition}\label{prop:Cind:for:ocfs}
	An inductive inference operator for OCFs ${\bf C}^{ocf}: \Delta\mapsto \kappa_\Delta$ satisfies \cIndG\ \FARBE{if} for any $\Delta=\Delta_1\bigcup^{\sf \FARBE{gs}}_{\Sigma_1,\Sigma_2}\Delta_2\mid \Sigma_3$ we have $\Ind{\kappa_\Delta}{\Sigma_1}{\Sigma_2}{\Sigma_3}$.
\end{proposition}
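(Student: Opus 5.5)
The plan is to derive \cIndG\ directly from the additive decomposition of ranks given by Lemma~\ref{lemma_ocf_condint_formula}. Fix a generalized safe splitting $\genSafeCondSynSplitGen$ with $\Ind{\kappa_\Delta}{\Sigma_1}{\Sigma_2}{\Sigma_3}$. Fix $i\neq j$, formulas $A,B\in\cL(\Sigma_i)$, $D\in\cL(\Sigma_j)$, and a complete conjunction $E\in\cL(\Sigma_3)$ with $DE\not\nmany{\Delta}{}\bot$. Write $\kappa=\kappa_\Delta$. The premise $DE\not\nmany{\Delta}{}\bot$ means, by \eqref{eq_nmocf}, that $DE\not\equiv\bot$ and $\kappa(DE)<\infty$. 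In particular $\kappa(E)\le\kappa(DE)<\infty$, so all subtractions below are of finite quantities.

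Apply Lemma~\ref{lemma_ocf_condint_formula}, with the roles of $\Sigma_1,\Sigma_2$ swapped if $i=2$ (conditional $\kappa$-independence is symmetric in this additive form). For the formulas $AB$ and $A\ol{B}$ over $\Sigma_i$ and $D$ over $\Sigma_j$ this gives
\[
\kappa(ABDE)=\kappa(ABE)+\kappa(DE)-\kappa(E), \qquad \kappa(A\ol{B}DE)=\kappa(A\ol{B}E)+\kappa(DE)-\kappa(E).
\]
Since $\kappa(DE)-\kappa(E)$ is a finite constant, $\kappa(AEB)<\kappa(AE\ol{B})$ holds iff $\kappa(AEDB)<\kappa(AED\ol{B})$. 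This holds also when one side is $\infty$, because adding a finite constant preserves $\infty$ and the strict order.

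It remains to handle the degenerate clause ``$A\equiv\bot$'' in \eqref{eq_nmocf}. If $AE\equiv\bot$ then $AED\equiv\bot$, and both inferences hold. Conversely, suppose $AE\not\equiv\bot$. Since $A$, $D$, $E$ are over pairwise disjoint signatures and $DE\not\equiv\bot$, we get $AED\not\equiv\bot$. So the two antecedents are either both contradictory or both consistent, and in the consistent case the rank comparison above decides both inferences identically. Hence $AE\nc_\Delta B$ iff $AED\nc_\Delta B$, which is \cIndG.

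The main obstacle is this bookkeeping of infinite ranks and contradictory antecedents. Independence only gives the additive identity, and one must make sure the cancelled term $\kappa(DE)-\kappa(E)$ is finite. That finiteness is exactly what the side condition $DE\not\nmany{\Delta}{}\bot$ of \cIndG\ supplies.
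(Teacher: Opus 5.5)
Your argument is the paper's: both use Lemma~\ref{lemma_ocf_condint_formula} to get $\kappa_\Delta(A\dot{B}DE)=\kappa_\Delta(A\dot{B}E)+\kappa_\Delta(DE)-\kappa_\Delta(E)$ and then cancel the common term. Your handling of the antecedent clause $AE\equiv\bot$ in \eqref{eq_nmocf} is correct and fills in a detail the paper omits.

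One correction concerns finiteness. Since $\kappa(\bot)=\min\emptyset=\infty$, definition \eqref{eq_nmocf} gives $DE\nmany{\Delta}{}\bot$ iff $DE\equiv\bot$. So the side condition of \cIndG\ yields only $DE\not\equiv\bot$, not $\kappa_\Delta(DE)<\infty$. The finiteness you need is therefore a tacit assumption that $\kappa_\Delta$ has finite ranks, which the paper's proof also makes without comment. Without it, $\kappa_\Delta(DE)=\infty$ makes both $\kappa_\Delta(A\dot{B}DE)$ infinite, and the equivalence can break.
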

\FARBE{
\begin{proof}
	Let $\Delta$ be a \BB, $C^{ocf}:\Delta\mapsto \kappa_\Delta$ be an inductive inference operator for OCFs, and let $\genSafeCondSynSplitGen$.
	Assume $\Ind{\kappa_\Delta}{\Sigma_1}{\Sigma_2}{\Sigma_3}$. W.l.o.g. we assume $i=1, i'=2$, the other case is analogous. We need to show that $C^{ocf}$ satisfies \cIndG, i.e., for all $A,B\in\cL_1, D\in\cL_2$ and \FARBE{every complete conjunction} $E\in\cL_3$ we have $\kappa_\Delta(ABE)<\kappa_\Delta(A\ol{B}E)$ iff $\kappa_\Delta(ABDE)<\kappa_\Delta(A\ol{B}DE)$. 
	With Lemma~\ref{lemma_ocf_condint_formula} we have $\kappa_\Delta(ABDE)=\kappa_\Delta(ABE)+\kappa_\Delta(DE)-\kappa_\Delta(E)$. Then clearly $\kappa_\Delta(ABE)<\kappa_\Delta(A\ol{B}E)$ implies $\kappa_\Delta(ABDE)<\kappa_\Delta(A\ol{B}DE)$. On the other hand we can rearrange \FARBE{$\kappa_\Delta(ABDE)=\kappa_\Delta(ABE)+\kappa_\Delta(DE)-\kappa_\Delta(E)$} to $\kappa_\Delta(ABE)=\kappa_\Delta(ABDE)-\kappa_\Delta(DE)+\kappa_\Delta(E)$. Thus $\kappa_\Delta(ABDE)<\kappa_\Delta(A\ol{B}DE)$ implies $\kappa_\Delta(ABE)<\kappa_\Delta(A\ol{B}E)$ and we are done.
\end{proof}

Thus, an inductive inference operator for OCFs satisfies generalized conditional independence if the subsignatures of any generalized safe conditional syntax splitting are $\kappa$-independent \wrt\ the conditional pivot.
}
\begin{proposition}\label{prop:Crel:for:ocfs}
	An inductive inference operator for OCFs ${\bf C}^{ocf}: \Delta\mapsto \kappa_\Delta$ satisfies \cRelG\ if for any $\Delta=\Delta_1\bigcup^{\sf gs}_{\Sigma_1,\Sigma_2}\Delta_2\mid \Sigma_3$ and $i \in \{1,2\}$
	we have $\kappa_{\Delta_i}=\kappa_{\Delta}\marg_{\Sigma_i\cup\Sigma_3}$.
\end{proposition}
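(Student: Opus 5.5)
The plan is to show directly that, under the assumption $\kappa_{\Delta_i}=\kappa_\Delta\marg_{\Sigma_i\cup\Sigma_3}$, the two inference relations coincide on all queries of the form required by \cRelG.

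\textbf{Setup.} Fix $\genSafeCondSynSplitGen$, an index $i\in\{1,2\}$, formulas $A,B\in\cL(\Sigma_i)$ and a complete conjunction $E\in\cL(\Sigma_3)$. By \eqref{eq_nmocf}, we must show
\[AE\nmany{\kappa_\Delta}{}B \quad\text{iff}\quad AE\nmany{\kappa_{\Delta_i}}{}B.\]
Both sides are defined via $AE\equiv\bot$ or a strict comparison of the ranks of $AEB$ and $AE\ol{B}$.

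\textbf{Degenerate case.} The condition $AE\equiv\bot$ is purely propositional, so it does not depend on the OCF. The only subtlety is that $\kappa_{\Delta_i}$ lives on $\Omega(\Sigma_i\cup\Sigma_3)$ while $\kappa_\Delta$ lives on $\Omega(\Sigma)$. Since $A$ and $E$ mention only atoms of $\Sigma_i\cup\Sigma_3$, $AE$ is unsatisfiable over $\Sigma$ iff it is unsatisfiable over $\Sigma_i\cup\Sigma_3$. So the disjunct $AE\equiv\bot$ holds on one side iff it holds on the other.

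\textbf{Main step.} For any formula $F\in\cL(\Sigma_i\cup\Sigma_3)$, I would show
\[\kappa_\Delta(F)=\kappa_\Delta\marg_{\Sigma_i\cup\Sigma_3}(F),\]
where the right-hand side is the rank of $F$ in the marginal OCF over $\Omega(\Sigma_i\cup\Sigma_3)$. The argument is:
\begin{enumerate}
\item Every $\omega\in\Omega$ decomposes uniquely as $\omega=\omega^{i}\omega^3\omega^{i'}$.
\item By \eqref{eq_omega_reduce}, $\omega\models F$ iff $\omega^i\omega^3\models F$.
\item Hence
\[\kappa_\Delta(F)=\min_{\omega^i\omega^3\models F}\ \min_{\omega^{i'}}\kappa_\Delta(\omega^i\omega^3\omega^{i'})=\min_{\omega^i\omega^3\models F}\kappa_\Delta(\omega^i\omega^3)=\min_{\omega^i\omega^3\models F}\kappa_\Delta\marg_{\Sigma_i\cup\Sigma_3}(\omega^i\omega^3),\]
which is exactly the rank of $F$ under the marginal.
\end{enumerate}
Applying this with $F=AEB$ and with $F=AE\ol{B}$, and then using the hypothesis $\kappa_{\Delta_i}=\kappa_\Delta\marg_{\Sigma_i\cup\Sigma_3}$, gives $\kappa_\Delta(AEB)=\kappa_{\Delta_i}(AEB)$ and $\kappa_\Delta(AE\ol{B})=\kappa_{\Delta_i}(AE\ol{B})$. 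The strict comparisons are therefore identical, and \cRelG\ follows.

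\textbf{Expected obstacle.} The only delicate point is the bookkeeping between formulas evaluated over $\Sigma$ and over the subsignature $\Sigma_i\cup\Sigma_3$ in the marginalization identity. Generalized safety itself is not needed here: it is absorbed into the hypothesis of the proposition, and it will matter only later, when verifying that hypothesis for specific operators.
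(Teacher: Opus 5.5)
Your proof is correct and follows the same route as the paper. Both arguments reduce (CRel\textsuperscript{g}) to the identity $\kappa_\Delta(AEB)=\kappa_\Delta|_{\Sigma_i\cup\Sigma_3}(AEB)$ (and likewise for $AE\overline{B}$), then apply the hypothesis $\kappa_{\Delta_i}=\kappa_\Delta|_{\Sigma_i\cup\Sigma_3}$, and neither uses generalized safety. The paper simply asserts that identity, whereas you verify it via the decomposition $\omega=\omega^i\omega^3\omega^{i'}$ and also handle the degenerate case $AE\equiv\bot$ explicitly, so your version is slightly more complete.
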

\FARBE{
	\begin{proof}
		Let $\Delta$ be a \BB, $C^{ocf}:\Delta\mapsto \kappa_\Delta$ be an inductive inference operator for OCFs, and let $\genSafeCondSynSplitGen$.
		Assume $\kappa_{\Delta_i}=\kappa_{\Delta}\marg_{\Sigma_i\cup\Sigma_3}$. We need to show that $C^{ocf}$ satisfies $\cRelG$, i.e., for $i\in\{1,2\}$, for all $A,B\in\cL_i$ and \FARBE{every} complete conjunction $E\in\cL_3$ we have $\kappa_\Delta(ABE)<\kappa_\Delta(A\ol{B}E)$ iff $\kappa_{\Delta_i}(ABE)<\kappa_{\Delta_i}(A\ol{B}E)$.
		Since $ABE\in\mathcal{L}_{\Sigma_i\cup\Sigma_3}$ we have that $\kappa_{\Delta}(ABE)=\kappa_{\Delta}\marg_{\Sigma_i\cup\Sigma_3}(ABE)$ which means $\kappa_{\Delta}(ABE)=\kappa_{\Delta_i}(ABE)$ \FARBE{because $\kappa_{\Delta_i}=\kappa_{\Delta}\marg_{\Sigma_i\cup\Sigma_3}$ per our assumption}.
	\end{proof}

Thus, an inductive inference operator for OCFs satisfies generalized conditional relevance if, for every generalized safe conditional syntax splitting, the marginalization of the operator's image \FARBE{of a conditional \BB} to the language of one subbase \FARBE{coincides with applying the operator to that subbase directly}.
}

We will now
define model-based inductive inference operators assigning a c-representation
$\kappa$ to each $\Delta$,
\FARBE{by employing the concept of selection strategies.}
\begin{definition}[selection strategy \selectionStrategy, \citep{BeierleKernIsberner2021FLAIRS}]
	\label{def_selection_strategy}
	A \emph{selection strategy (for c-representations)} is a function
	\FARBE{\(
	\selectionStrategy:  \Delta \mapsto \kappaiminusVektor
	\)}
	assigning to each conditional belief base \Rdelta\ an impact vector
	$\kappaiminusVektor \in \solutionsR$.
\end{definition}

\FARBE{Each selection strategy \FARBE{yields} an inductive inference operator} \FARBE{%
	\(
	\indreascrepSelect:  \Delta \mapsto %
	\induzierteOCF{\selectionStrategy(\Delta)}
	\)
	where %
$\nmableit_{\!\!\induzierteOCF{\selectionStrategy(\Delta)}}$ 
is obtained via Equation (\ref{eq_nmocf}) from $\induzierteOCF{\selectionStrategy(\Delta)}$. }
\FARBE{Note that 
\FARBE{\indreascrepSelect\ is %
an inductive inference operator because} 
each $\nmableit_{\!\!\induzierteOCF{\selectionStrategy(\Delta)}}$} satisfies both (Direct Inference) and (Trivial Vacuity).
A recent example for a specific selection strategy are \emph{minimal core c-representations} \citep{WilhelmKernIsbernerBeierle2024FoIKScb} \FARBE{which we will investigate in \FARBE{Section~\ref{ssec_ccore}}}.

In principle, for every $\Delta$, a selection strategy may choose some
impact vector
independently from the choices for all other belief bases.
\FARBE{The following property generalizes a corresponding postulate \FARBE{(IP-CSP)} for safe conditional splittings~\citep{BeierleSpiegelHaldimannWilhelmHeyninckKernIsberner2024KR} and characterizes selection strategies that preserve the impacts
chosen for subbases of a generalized safe conditional syntax splitting}.
\begin{description}
	\item[ \ipCSPG ] %
	A selection strategy  \selectionStrategy\
	is \emph{impact preserving \wrt\ generalized safe conditional syntax splitting} if, for every %
	\(\genSafeCondSynSplitGen\),
	for $i \in \{1,2\}$, we have
	$\selectionStrategy(\Rdelta_i) =  \marg{\selectionStrategy(\Rdelta)}{\Rdelta_i}$
.
\end{description}

\FARBE{It has been shown that any inductive inference operator based on a selection strategy that is impact reserving according to (IP-CSP) satisfies \cSynSplit\  \citep{BeierleSpiegelHaldimannWilhelmHeyninckKernIsberner2024KR}; we extend this result to \ipCSPG\ and \cSynSplitG.}
\begin{propositionrep}
	\label{prop_selstrat_csynsplit}
	Let $\selectionStrategy$ be a selection strategy \FARBE{satisfying} \FARBE{\ipCSPG}. Then \indreascrepSelect\ satisfies 
	\FARBE{\cRelG\ and  \cIndG\ \FARBE{and thus
		\cSynSplitG.}}
\end{propositionrep}
\begin{proof}
		\FARBE{The proof is obtained by adapting the proof of the proposition for (IP-CSP) and (CSynSplit) \citep[Proposition 27]{BeierleSpiegelHaldimannWilhelmHeyninckKernIsberner2024KR} by observing that the prerequisites of the steps in the proof are also satisfied by generalized safe conditional syntax splittings.}
\end{proof}
\FARBE{Thus, inference based on a single c-representation satisfies \cSynSplitG\ if the underlying selection strategy satisfies \ipCSPG.} \FARBE{In the next section we give an example of an inference operator based on a specific selection strategy.}

\FARBE{
\subsection{c-Core closure Inference}\label{ssec_ccore}
A special subclass of c-representations are \emph{core c-representations} \citep{WilhelmKernIsbernerBeierle2024FoIKScb}. Core c-representations \FARBE{stand out} from the class of all c-representations in the fact that each strongly consistent \BB\ always has a uniquely determined minimal core c-representation. \FARBE{Choosing this minimal core c-representation yields} an OCF-based inductive inference operator via Equation~\eqref{eq_nmocf}.
The definition of core c-representations makes use of a constraint reduction system in the form of transformation rules, simplifying the set of constraints $\CR(\Delta)$ without altering it's solutions. To express these rules \FARBE{compactly, an alternative notation of the constraint system $\CR(\Delta)$ is used by employing}, for each conditional $\satzCL{B_i}{A_i} \in \Delta$, the following sets of sets of verified and falsified conditionals \citep{BeierleKutschSauerwald2019AMAIcompilation}:  
\begin{align}
	\label{constraint-inducing_set_V}
	\VSet_i=\{\{(B_j|A_j)\in\Delta\setminus\{\delta_i\}\mid \omega\models A_j\ol{B_j}\}\mid \omega\in\ver(B_i|A_i)\}\\
	\label{constraint-inducing_set_F}
	\FSet_i=\{\{(B_j|A_j)\in\Delta\setminus\{\delta_i\}\mid \omega\models A_j\ol{B_j}\}\mid \omega\in\fal(B_i|A_i)\}
\end{align}
\FARBE{Employing the constraint-inducing sets \eqref{constraint-inducing_set_V} and \eqref{constraint-inducing_set_F}, the}
constraint satisfaction problem $\CR(\Delta) = \{C_1, \ldots, C_n\}$ can then be specified as follows for \FARBE{all
	$(B_i|A_i)\in\Delta$:} 
\begin{equation}
	\label{eq_crep_vf}
	C_i: \eta_i>\min\{\sum_{\delta_j\in S} \eta_j\mid S\in \VSet_i\} - \min\{\sum_{\delta_j\in S} \eta_j\mid S\in \FSet_i\}
\end{equation}
The positive and negative parts of \eqref{eq_crep_vf} correspond to the positive and negative parts of \eqref{form:def:c-representation}.

Figure~\ref{fig:opt_transformationrules} shows the set $\{R1,\dots, R6\}$ of transformation rules for simplifying $\CR(\Delta)$ employed for the definition of core c-representations \citep{WilhelmKernIsbernerBeierle2024FoIKScb}. These rules were first given in \citep{BeierleKutschSauerwald2019AMAIcompilation} for speeding up the computation of c-representations \FARBE{and of c-inference \citep{BeierleEichhornKernIsbernerKutsch2018AMAI}} and then extended in \citep{BeierleHaldimannKernIsberner2021Boeger75Festschrift,WilhelmSezginKernIsbernerHaldimannBeierleHeyninck2023JELIA}. Since they only make use of general arithmetic properties of the minimum, they do not change the set of solutions $\solutionsR$. Furthermore, $\{R1,\dots, R6\}$ is terminating and confluent; thus, exhaustive application of $\{R1,\dots, R6\}$ yields a uniquely determined constraint system. \FARBE{For a constraint system $\CR$, a constraint $C$, and constraint inducing sets $\VSet$ and $\FSet$, we denote with $\hat{\CR}, \hat{C},  \hat{\VSet}$, and $\hat{\FSet}$ the constraint system, the constraint, and the constraint inducing sets, respectively, after applying $\{R1,\dots, R6\}$ exhaustively.}

\begin{figure}[tb]
\newlength{\abstandInTabelle}
\newcommand{\calD}{\ensuremath{\mathcal{D}}}
\setlength{\abstandInTabelle}{6mm}
\renewcommand{\nsrV}{\VSet}
\renewcommand{\nsrF}{\FSet}
\[
\begin{array}{l@{\,\,}c@{\quad}l}
\displayTransformationRule%
{\refSSVker\ \nameSSV:}%
{\displaystyle\frac{\nsrPaarKER{\nsrV \cup\{S, S'\}}{\nsrF}{i}}%
{\nsrPaarKER{\nsrV \cup\{S\}}{\nsrF}{i}}}%
{S \subsetneq S'}
\\[\abstandInTabelle]

\displayTransformationRule%
{\refSSFker\ \nameSSF:}%
{\displaystyle\frac{\nsrPaarKER{\nsrV}{\nsrF \cup\{S, S'\}}{i}}%
{\nsrPaarKER{\nsrV}{\nsrF \cup\{S\}}{i}}}%
{S \subsetneq S'}
\\[\abstandInTabelle]

\displayTransformationRule%
{\refELMker\ \nameELM:}%
{\displaystyle\frac{\nsrPaarKER{\left\{V_1 \cup \{\delta\},\ldots, V_p \cup \{\delta\}\right\}}{\left\{F_1 \cup \{\delta\},\ldots, F_q \cup \{\delta\}\right\}}{i}}%
		{\nsrPaarKER{\left\{V_1,\ldots,V_p\right\}}{\left\{F_1,\ldots,F_q\right\}}{i}}}%
{}
\\[\abstandInTabelle]

\displayTransformationRule%
{\refTRIVIAL\ \nameTRIVIAL:}%
{\displaystyle\frac{\nsrPaarKER{\nsrV}{\nsrF}{i}}%
{\nsrPaarKER{\{\leereMenge\}}{\{\leereMenge\}}{i}}}%
{\nsrV = \nsrF}
\\[\abstandInTabelle]

\displayTransformationRule%
{\refSUBSETS\ \nameSUBSETS:}%
{\displaystyle\frac{\nsrPaarKER{\left\{S_1 \dotcup T,\ldots, S_p \dotcup T\right\}}{\left\{S_1 \dotcup T',\ldots, S_p \dotcup T'\right\}}{i}}%
		{\nsrPaarKER{\left\{T\right\}}{\left\{T'\right\}}{i}}}%
{}
\\[\abstandInTabelle]

\displayTransformationRule%
{\refCIRCLE\ \nameCIRCLE:}%
{\displaystyle\frac{\nsrPaarKER{\calD \dotcup\{\{\delta_j\}\}}{\{\leereMenge\}}{i} \quad {\nsrPaarKER{\calD \dotcup\{\{\delta_i\}\}}{\{\leereMenge\}}{j}}}%
{\nsrPaarKER{\calD}{\{\leereMenge\}}{i} \quad \nsrPaarKER{\calD}{\{\leereMenge\}}{j}}}
{i \neq j}
\\[\abstandInTabelle]

\end{array}
\]
\caption{Transformation rules \FARBE{$\{R1,\dots, R6\}$} for simplifying \FARBE{(the constraint-inducing sets of)}
$\CR(\Delta)$. A pair~\(\nsrPaarKER{\nsrV}{\nsrF}{i}\) represents the sets of constraint variables in the minimum expressions associated to the verification and the falsification, respectively, of the $i$-th conditional
$\delta_i \in \Delta$
in the constraint $C_i \in \CR(\Delta)$ modeling the acceptance condition of $\delta_i$.} 
\label{fig:opt_transformationrules}
\end{figure}
 
\begin{definition}[Core c-Representation \citep{WilhelmKernIsbernerBeierle2024FoIKScb}]
	\label{def:core_c-representation}
	Let $\Delta$ be a belief base, let $i\in\{1,\dots, n\}$ and let $\CR^+(\Delta) = \{\hat{C}_1^+, \ldots, \hat{C}_n^+ \}$ where
	\begin{equation}
		\label{eq:core_c-representation}
		\hat{C}^+_i \colon \quad
		\eta_i 
		> \min \{ \sum_{\delta_j \in S} \eta_j 
		\mid S \in \hat{V}_i \}
	\end{equation}
	If $\kappaiminusVektor \in \mathbb{N}_0^n$ is a solution of $\CR^+(\Delta)$, \FARBE{then the c-representation $\induzierteOCF{\kappaiminusVektor}$ determined from $\kappaiminusVektor$ via equation \eqref{form:def:c-representation} is called a \emph{core c-representation} of $\Delta$.}
\end{definition}
Thus core c-representations are defined by first applying the transformation rules $\{R_1,\dots,R_6\}$ \FARBE{exhaustively} and then focusing only on the positive part of the reduced constraint system. 
\FARBE{While in general $\CR(\Delta)$ can have different pareto-minimal solutions, $\CR^+(\Delta)$ always has a unique pareto-minimal solution $\kappaiminusVektor^{mc}_\Delta$ \citep{WilhelmKernIsbernerBeierle2024FoIKScb}. The c-representation $\kappa^{mc}_\Delta=\induzierteOCF{\kappaiminusVektor^{mc}_\Delta}$ induced by $\kappaiminusVektor^{mc}_\Delta$ is called the \emph{minimal core c-representation} of $\Delta$ \citep{WilhelmKernIsbernerBeierle2024FoIKScb}.}
\FARBE{A method for constructing the minimal core c-representation involving multiple other concepts, such as generalized tolerance partitions and the notion of base functions is given in \citep{WilhelmKernIsbernerBeierle2024FoIKScb}. According to \citep{WilhelmKernIsbernerBeierle2025KERlocal}, $\kappa^{mc}_\Delta$ can be characterized as \FARBE{in the following proposition}.}
\begin{proposition}[\citep{WilhelmKernIsbernerBeierle2025KERlocal}, adapted]
	\label{def:minimal_core_c-representation}
	\FARBE{Let $\Delta$ be a belief base 
	and let
	$\kappaiminusVektor^{mc}_\Delta = (\eta^{mc}_1,\ldots, \eta^{mc}_n)$ be the impact vector inducing the minimal core c-representation of $\Delta$. %
	Then}
	\begin{align}
		\label{eq_min_ccore}
		\eta^{mc}_i
		= \min \{ \sum_{\delta_j \in S} \eta^{mc}_j \mid S \in \hat{V}_i \} + 1.
	\end{align}
\end{proposition}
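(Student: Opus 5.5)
We need to show that the unique Pareto-minimal solution $\kappaiminusVektor^{mc}_\Delta$ of $\CR^+(\Delta)$ satisfies each reduced constraint $\hat{C}^+_i$ with equality plus one, i.e., $\eta^{mc}_i = \min\{\sum_{\delta_j\in S}\eta^{mc}_j \mid S\in\hat{V}_i\}+1$. The approach is a local-decrease argument exploiting Pareto-minimality. Since all impacts are natural numbers and $\hat{C}^+_i$ is a strict inequality, every solution satisfies $\eta_i \geq \min\{\sum_{\delta_j\in S}\eta_j\mid S\in\hat V_i\}+1$. So the ``$\geq$'' direction of \eqref{eq_min_ccore} is immediate, and the whole work lies in excluding strict inequality.

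For the reverse direction, suppose towards a contradiction that for some $i$ we have $\eta^{mc}_i > m_i+1$, where $m_i$ denotes the minimum on the right-hand side evaluated at $\kappaiminusVektor^{mc}_\Delta$. I would then define the vector $\kappaiminusVektor'$ that agrees with $\kappaiminusVektor^{mc}_\Delta$ except that $\eta'_i=\eta^{mc}_i-1\geq m_i+1\geq 1$. The goal is to show that $\kappaiminusVektor'$ still solves $\CR^+(\Delta)$. This contradicts Pareto-minimality, since $\kappaiminusVektor'$ is componentwise $\le$ and strictly smaller in one coordinate.

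To check that $\kappaiminusVektor'$ is a solution, take any constraint $\hat C^+_k$. If $k=i$, then $\eta_i$ does not occur on its right-hand side: the sets in $V_i$ are subsets of $\Delta\setminus\{\delta_i\}$, and the rules R1 to R6 only remove elements, shrink sets, or replace them by subsets, so this property is preserved in $\hat V_i$. Hence $\hat C^+_i$ reduces to $\eta'_i>m_i$, which holds by the choice above. If $k\neq i$, the left-hand side $\eta'_k=\eta^{mc}_k$ is unchanged. The right-hand side is a minimum of sums of nonnegative impacts, which is monotone nondecreasing in every variable, so lowering $\eta_i$ can only lower it. Thus $\hat C^+_k$ remains satisfied, and nonnegativity is clear.

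The main obstacle is the bookkeeping in the $k=i$ case: one must justify that the reduced sets $\hat V_i$ still never contain $\delta_i$. The circle rule R6 is the one worth inspecting, since it deletes $\{\delta_j\}$ from $\hat V_i$ and $\{\delta_i\}$ from $\hat V_j$ and so could in principle interact with $\delta_i$. Checking the rule shapes in Figure~\ref{fig:opt_transformationrules} shows that it never inserts elements, so the invariant holds. Once this is in place, monotonicity does the rest. A secondary point is that the argument relies on the stated existence and uniqueness of the Pareto-minimal solution from \citep{WilhelmKernIsbernerBeierle2024FoIKScb}, which is taken as given.
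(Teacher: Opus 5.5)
Your argument is correct, but it takes a different route from the paper. The paper gives no direct argument. It obtains the identity by applying Proposition~9 of Wilhelm et al.\ to their original definition of core c-representations, and so imports the characterization together with the machinery behind it (generalized tolerance partitions, base functions).

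You derive it from Pareto-minimality alone. Integrality and the strict inequality in $\hat{C}^+_i$ give $\eta^{mc}_i \geq m_i+1$. Lowering the single coordinate $i$ by one then rules out $\eta^{mc}_i \geq m_i+2$. This step uses only two facts: every right-hand side is monotone nondecreasing in all variables, and $\delta_i$ never occurs in a set of $\hat{V}_i$. You correctly check that the second fact survives R1--R6, including the circle rule, since none of the rules inserts elements.

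This yields a short, self-contained proof. It also shows the identity holds for every Pareto-minimal solution of $\mathit{CR}^+(\Delta)$, so you need only existence of such a solution, not uniqueness.

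What your argument does not provide is the extra structure the cited result carries, which the paper stresses right after the proposition. That structure is that $\eta^{mc}$ can be computed stratum by stratum, with the right-hand side of \eqref{eq_min_ccore} referring only to impacts already fixed. This is what makes the identity usable as a recursive computation rule, as in Example~\ref{exa_corec}. For the statement as given, however, your proof is complete.
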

\begin{proof}
	\FARBE{The proof is obtained %
	by applying a result from Wilhelm et. al.  \citep[Proposition~9]{WilhelmKernIsbernerBeierle2025KERlocal} to the original definition of core c-representations \citep[Definition~11]{WilhelmKernIsbernerBeierle2025KERlocal}}.
\end{proof}
\FARBE{Crucial to Proposition~\ref{def:minimal_core_c-representation} is the fact that $\kappaiminusVektor^{mc}_\Delta$ can be computed in a stratified manner such that the right-hand-side of \eqref{eq_min_ccore} only mentions those impacts that have already been determined in a previous stratum; for details we refer to \citep{WilhelmKernIsbernerBeierle2025KERlocal}.}
	
Because the minimal core c-representation is
\FARBE{uniquely determined %
\citep{WilhelmKernIsbernerBeierle2024FoIKScb}, it}
yields the OCF-based inductive inference operator \emph{c-core closure}.

\begin{definition}[c-Core closure \citep{WilhelmKernIsbernerBeierle2024FoIKScb}]
	\label{def:c-core_closure}
	Let $\Delta$ be a belief base, and let $A, B \in \mathcal{L}(\Sigma)$. The \emph{c-core closure} inference operator $C^{mc} \colon \Delta \mapsto \nmableit^{mc}_{\!\Delta\,}$ is defined by $A \nmableit^{mc}_{\!\Delta\,} B$ iff $A
	\nmableit_{\!\kappa^{mc}_\Delta\,} B$ where $\kappa^{mc}_\Delta$ is the minimal core c-representation of $\Delta$.
\end{definition}

We illustrate these notions with an example.

\begin{example}[$\DeltaBird$ cont.]
	\label{exa_corec}
	Table~\ref{tab_corec} shows the sets $\VSet_i$ and $\FSet_i$ and their reductions $\hat{\VSet_i}$ and $\hat{\FSet_i}$ of $\DeltaBird$. 
	\FARBE{Thus $\CR^+(\DeltaBird)$ consists of the following constraints:
	\begin{align*}
		&\hat{C}_1^+: \eta_1>0& &\hat{C}_2^+: \eta_2>\eta_1\\
		&\hat{C}_3^+: \eta_3>\eta_1& &\hat{C}_4^+: \eta_4>0
	\end{align*}
	To compute the minimal core c-representation $\kappa^{mc}_{\DeltaBird}$ of $\DeltaBird$, we need to find the pareto-minimal solution $\kappaiminusVektor^{mc}_{\!\DeltaBird}=(\eta^{mc}_1,\eta^{mc}_2,\eta^{mc}_3,\eta^{mc}_4)$ of $\CR^+(\DeltaBird)$. By utilizing Proposition~\ref{def:minimal_core_c-representation}, we obtain
	\begin{align*}
		&\eta_1^{mc}=1& &
		\eta_2^{mc}=\eta_1^{mc}+1\\
		&\eta_3^{mc}=\eta_1^{mc}+1& &
		\eta_4^{mc}=1.
	\end{align*}
	Now $\kappaiminusVektor^{mc}_{\!\DeltaBird}$ can be computed in a stratified manner: First, we obtain $\eta^{mc}_1=1$ and $\eta^{mc}_4=1$ immediately. Then we can determine $\eta^{mc}_2=2$ and $\eta^{mc}_2=2$, yielding $\kappaiminusVektor^{mc}_{\!\DeltaBird}=(1,2,2,1)$. Note that $\eta^{mc}_2$ and $\eta^{mc}_3$ can be determined by taking only the previously computed impacts $\eta^{mc}_1$ and $\eta^{mc}_4$ into account.
	The minimal core c-representation 
	$\induzierteOCF{\DeltaBird}^{mc}$ of $\DeltaBird$ is then given by $\induzierteOCF{\DeltaBird}^{mc}=\induzierteOCF{\kappaiminusVektor^{mc}_{\!\DeltaBird}}$ and can be seen in Table~\ref{tab_birds}, where $\kappaiminusVektor^{mc}_{\!\DeltaBird}=\kappaiminusVektor_1$.} We have $\induzierteOCF{\DeltaBird}^{mc}(pbw)=1$ and $\induzierteOCF{\DeltaBird}^{mc}(pb\ol{w})=2$ and thus $pb\nmableit^{mc}_{\!\DeltaBird\,} w$.
\end{example}

\begin{table}[tb]
	\centering
	\(
	\begin{array}{c|c|c|c|c}
		&
		\VSet_i
		&
		\hat{\VSet_i}
		&
		\FSet_i
		&
		\hat{\FSet_i}
		\\
		\hline
		\begin{array}{c}
			\condLabel{\delta_1}
			\satzCL{f}{b}
		\end{array}
		&
		\{\emptyset,\{\delta_2\},\{\delta_4\},\{\delta_2,\delta_4\}\}
		&
		\{\emptyset\}
		&
		\{\emptyset,\{\delta_4\}\}
		&
		\{\emptyset\}
		\\
		\begin{array}{c}
			\condLabel{\delta_2}
			\satzCL{\ol{f}}{p}
		\end{array}
		&
		\{\{\delta_1\},\{\delta_3\},\{\delta_1,\delta_4\}\}
		&
		\{\{\delta_1\}\}
		&
		\{\emptyset,\{\delta_3\},\{\delta_4\}\}
		&
		\{\emptyset\}
		\\
		\begin{array}{c}
			\condLabel{\delta_3}
			\satzCL{b}{p}
		\end{array}
		&
		\{\{\delta_1\},\{\delta_2\},\{\delta_1,\delta_4\},\{\delta_2,\delta_4\}\}
		&
		\{\{\delta_1\}\}
		&
		\{\emptyset,\{\delta_2\}\}
		&
		\{\emptyset\}
		\\
		\begin{array}{c}
			\condLabel{\delta_4}
			\satzCL{w}{b}
		\end{array}
		&
		\{\emptyset,\{\delta_1\},\{\delta_2\}\}
		&
		\{\emptyset\}
		&
		\{\emptyset,\{\delta_1\},\{\delta_2\}\}
		&
		\{\emptyset\}
	\end{array}
	\)
	\caption{Sets $\VSet_i$ and $\FSet_i$ and their reductions $\hat{\VSet_i}$ and $\hat{\FSet_i}$ for $\DeltaBird$ 
		in Example~\protect\ref{exa_corec}.}
	\label{tab_corec}
\end{table}

\FARBE{In order to show that c-core closure fully complies with generalized conditional syntax splitting, we first first show that the selection strategy assigning to each belief base $\Delta$ the impact vector $\kappaiminusVektor^{mc}_{\Delta}$ satisfies \ipCSPG.}
\FARBE{
\begin{proposition}
	\label{prop_coreselstrat_ipcspg}
	The selection strategy $\sigma^{mc}:\Delta\rightarrow \kappaiminusVektor^{mc}_{\Delta}$ satisfies \ipCSPG.
\end{proposition}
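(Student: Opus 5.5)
The plan is to show that, for a generalized safe splitting \genSafeCondSynSplitGen\ and $i\in\{1,2\}$, every conditional $\delta_j\in\Delta_i$ ends up with the \emph{same} reduced verification set $\hat{\VSet}_j$ whether the constraint system is built from $\Delta$ or from $\Delta_i$. Proposition~\ref{def:minimal_core_c-representation} then lets us compare the impact vectors stratum by stratum. So the argument has two parts: equality of the reduced constraint-inducing sets, then an induction on the stratified computation of $\kappaiminusVektor^{mc}$.

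\textbf{Step 1 (raw sets).} Fix $\delta_j\in\Delta_i$ and a world $\omega=\omega^i\omega^3\omega^{i'}\in\ver(\delta_j)$ (the same works for $\fal(\delta_j)$).
\begin{itemize}
\item By \eqref{eq_omega_reduce}, whether $\omega$ verifies or falsifies $\delta_j$ depends only on $\omega^i\omega^3$.
\item The set of conditionals of $\Delta\setminus\{\delta_j\}$ falsified by $\omega$ splits as $S\cup T$. Here $S\subseteq\Delta_i$ depends only on $\omega^i\omega^3$, and $T\subseteq\DeltaJWoThree$ depends only on $\omega^{i'}\omega^3$.
\item By the \genSafety\ property \eqref{eq_gsafety}, there is an $\omega^{i'}_\ast$ with $T=\emptyset$. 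Hence $S$ itself belongs to $\VSet_j$ computed in $\Delta$.
\end{itemize}
It follows that
\[
\VSet_j^{\Delta}=\VSet_j^{\Delta_i}\cup\{S\cup T\mid S\in \VSet_j^{\Delta_i},\ T \text{ realisable}\},
\]
and analogously for $\FSet_j$. Every added set with $T\neq\emptyset$ strictly contains its $S$. Exhaustive use of \refSSVker\ and \refSSFker\ therefore removes all of them, giving the same pair $\nsrPaarKER{\VSet}{\FSet}{j}$ as after \refSSVker/\refSSFker\ on the $\Delta_i$ side.

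\textbf{Step 2 (remaining rules).} Since $\{R1,\dots,R6\}$ is terminating and confluent, we may apply \refSSVker\ and \refSSFker\ first and the other rules afterwards without changing the normal form. After Step 1, the constraints $C_j$ for $\delta_j\in\Delta_i$ mention only conditionals of $\Delta_i$. Rules \refELMker, \refTRIVIAL\ and \refSUBSETS\ act on a single constraint, so they act identically in both systems. Rule \refCIRCLE\ pairs $C_j$ with a constraint $C_l$ where $\{\delta_l\}\in\hat{\VSet}_j$, so $\delta_l\in\Delta_i$ as well; symmetrically, $C_l$ contains $\{\delta_j\}$. Both partners therefore exist with identical content in both systems. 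Conditionals of $\DeltaJWoThree$ never occur in these constraints, so their constraints cannot interact with the ones for $\Delta_i$. For $\delta_j\in\Delta_3$ the same argument applies from both sides, which gives consistency.

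I expect this step to be the main obstacle. One has to argue carefully that confluence permits this rule ordering, and that a rewrite step in the big system never becomes applicable only because of constraints of $\DeltaJWoThree$. In particular one must check that \refCIRCLE\ cannot fire between a $\Delta_i$-constraint and a $\DeltaJWoThree$-constraint. If a direct argument fails, I would instead give an explicit simulation: every rewrite sequence on $\CR(\Delta_i)$ lifts to $\CR(\Delta)$ after the \refSSVker/\refSSFker\ prefix.

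\textbf{Step 3 (impacts).} With $\hat{\VSet}_j^{\Delta}=\hat{\VSet}_j^{\Delta_i}$ for all $\delta_j\in\Delta_i$, Proposition~\ref{def:minimal_core_c-representation} gives the same defining equations \eqref{eq_min_ccore} for these impacts on both sides. These equations refer only to impacts of conditionals in $\Delta_i$. Computing $\kappaiminusVektor^{mc}$ in the stratified manner, induction on the strata shows $\eta^{mc,\Delta}_j=\eta^{mc,\Delta_i}_j$ for all $\delta_j\in\Delta_i$. Uniqueness of the pareto-minimal solution then yields $\sigma^{mc}(\Delta_i)=\marg{\sigma^{mc}(\Delta)}{\Delta_i}$, i.e.\ \ipCSPG.
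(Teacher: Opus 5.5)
Your proposal is correct and follows the paper's proof. Generalized safety lets R1/R2 remove from the constraint-inducing sets of every $\delta_j\in\Delta_i$ all sets that contain conditionals of $\DeltaJWoThree$; confluence and termination of $\{R1,\dots,R6\}$ then give $\hat{\CR}(\Delta)|_{\Delta_i}=\hat{\CR}(\Delta_i)$; and Proposition~\ref{def:minimal_core_c-representation} carries this over to the impacts.

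Your Step~2 is more careful than the paper on one point. The paper appeals to confluence without checking that no rule, in particular R6 (\emph{circle}), couples a $\Delta_i$-constraint with a constraint of $\DeltaJWoThree$; your argument makes that check explicit.
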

\begin{proof}
	Let $\sigma^{mc}$ be the selection strategy assigning to each \BB\ $\Delta$ \FARBE{the impact vector $\kappaiminusVektor^{mc}_{\Delta}$, yielding the minimal core c-representation $\kappa^{mc}_\Delta=\induzierteOCF{\kappaiminusVektor^{mc}_{\Delta}}$ of $\Delta$}.	
	Let $\Delta=\{(B_1|A_1),\dots,(B_n|A_n)\}$ with \genSafeCondSynSplitGen\ and let $i,i'\in\{1,2\}, i\neq i'$.
	Let $\kappa^{mc}_\Delta$ be the minimal core c-representation of $\Delta$ based on the impact vector $\kappaiminusVektor^{mc}_{\Delta}$, i.e., $\sigma^{mc}(\Delta)=\kappaiminusVektor^{mc}_{\Delta}$ and $\induzierteOCF{\kappaiminusVektor^{mc}_{\Delta}} = \kappa^{mc}_\Delta$.
	We need to show $\sigma^{mc}(\Delta)\marg_{\Delta_{i}}=\sigma^{mc}(\Delta_i)$, i.e., $\kappaiminusVektor^{mc}_{\Delta}\marg_{\Delta_{i}} = \kappaiminusVektor^{mc}_{{\Delta_i}}$. 
	
	First we show $\hat{\CR}(\Delta)\marg_{\Delta_{i}}=\hat{\CR}(\Delta_i)$.
	Consider the constraint $C_j$  for the conditional $(B_j|A_j)\in\Delta$:
	\begin{equation}
		\label{eq_ccore_proof1}
		C_j:
		\kappaiminus{j}  >  
		\min_{\omega \models A_j B_j}
		\sum_{\substack{k \neq j \\ \omega \models A_k \ol{B_k} \\ (B_k|A_k)\in\Delta}} \kappaiminus{k} 
		- 
		\min_{\omega \models A_j \notB_j}
		\sum_{\substack{k \neq j \\ \omega \models A_k \ol{B_k}\\ (B_k|A_k)\in\Delta}} \kappaiminus{k} 
	\end{equation}	
	Assume now $(B_j|A_j)\in\Delta_i$ and consider the constraint $C_j^i$ corresponding to $C_j$:
	\begin{equation}
		\label{eq_ccore_proof3}
		C_j^i:
		\kappaiminus{j}  >  
		\min_{\omega \models A_j B_j}
		\sum_{\substack{k \neq j \\ \omega \models A_k \ol{B_k} \\ (B_k|A_k)\in\Delta_i}} \kappaiminus{k} 
		- 
		\min_{\omega \models A_j \notB_j}
		\sum_{\substack{k \neq j \\ \omega \models A_k \ol{B_k}\\ (B_k|A_k)\in\Delta_i}} \kappaiminus{k} 
	\end{equation}	
	We show that $\hat{C_j}$ and $\hat{C_j^i}$ are equivalent.
	We have $\omega\models A_j\ol{B_j}$ iff $\omega^i\omega^3\models A_j\ol{B_j}$, and $\omega\models A_jB_j$ iff $\omega^i\omega^3\models A_jB_j$. Then due to the generalized safety property, for each $\omega$ with $\omega\models A_j\ol{B_j}$ there is $\omega_2$ with $\omega^i\omega^3=\omega_2^i\omega_2^3$ such that $\omega_2$ falsifies no conditional outside of $\Delta_i$. Because $\omega^i\omega^3=\omega_2^i\omega_2^3$, we have that $\omega$ and $\omega_2$ falsify exactly the same conditionals in $\Delta_i$. 
	\FARBE{Thus $\omega_2$ falsifies only a subset of conditionals that $\omega$ falsifies and therefore 
		\[
		\sum_{\substack{k \neq j \\ \omega_2 \models A_k \ol{B_k} \\ (B_k|A_k)\in\Delta}} \kappaiminus{k}
		\leq
		\sum_{\substack{k \neq j \\ \omega \models A_k \ol{B_k} \\ (B_k|A_k)\in\Delta}} \kappaiminus{k}.
		\]
		Because \eqref{eq_ccore_proof1} utilizes only the minimal worlds and $\omega_2$ only falsifies conditionals in $\Delta_i$, the transformation rules R1 and R2 can be used to transform \eqref{eq_ccore_proof1} into}
	\begin{equation}
		\label{eq_ccore_proof2}
		\kappaiminus{j}^i  >  
		\min_{\omega \models A_j B_j}
		\sum_{\substack{k \neq j \\ \omega \models A_k \ol{B_k} \\ (B_k|A_k)\in\Delta_i}} \kappaiminus{k} 
		- 
		\min_{\omega \models A_j \notB_j}
		\sum_{\substack{k \neq j \\ \omega \models A_k \ol{B_k}\\ (B_k|A_k)\in\Delta_i}} \kappaiminus{k} 
	\end{equation}
	which is the definition of $C_j^i$.
	Therefore $C_j$ %
	can be transformed into %
	$C_j^i$ 
	by applying transformation rules R1 and R2 to each constraint $C_j$. 
	Thus, $\CR(\Delta)\marg_{\Delta_i}$
	can be transformed into 
	$\CR(\Delta_i)$ by applying R1 and R2. 
	\FARBE{Hence, because the set of transformation rules $\{R_1,\dots R_6\}$ is confluent and terminating,
	this means that
	$\hat{\CR}(\Delta)\marg_{\Delta_i}$ and 
	$\hat{\CR}(\Delta_i)$ coincide and also
	that $\CR^+(\Delta)\marg_{\Delta_i}$ and $\CR^+(\Delta_i)$ coincide.}
	
	\FARBE{Because the positive parts of the relevant constraint systems after applying transformation rules $\{R_1,\dots R_6\}$} are the same, $\sigma^{mc}$ assigns the same value to $\eta_j$ and to $\eta_j^i$ (cf. Definition \ref{def:minimal_core_c-representation}) and thus $\sigma^{mc}(\Delta)\marg_{\Delta_{i}}=\sigma^{mc}(\Delta_i)$. Thus, $\sigma^{mc}$ satisfies \ipCSPG.
\end{proof}
\FARBE{Utilizing this selection strategy we can now show that c-core closure fully complies with generalized conditional syntax splitting.}
\begin{proposition}
	c-Core closure satisfies \cRelG\ and \cIndG\ and thus \cSynSplitG.
\end{proposition}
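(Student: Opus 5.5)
The plan is to reduce the claim to results already in place for selection strategies. By Definition~\ref{def:c-core_closure}, c-core closure is exactly the OCF-based inductive inference operator $\indreascrepSelect$ obtained from the selection strategy $\sigma^{mc}:\Delta\mapsto\kappaiminusVektor^{mc}_\Delta$. Indeed, $A \nmableit^{mc}_{\!\Delta\,} B$ iff $A\nmableit_{\!\kappa^{mc}_\Delta\,}B$, and $\kappa^{mc}_\Delta=\induzierteOCF{\sigma^{mc}(\Delta)}$. I will first note that $\sigma^{mc}$ really is a selection strategy in the sense of Definition~\ref{def_selection_strategy}. This requires $\kappaiminusVektor^{mc}_\Delta\in\solutionsR$ for every (strongly consistent) $\Delta$. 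It holds because every core c-representation is a c-representation: solutions of $\CR^+(\Delta)$ solve the reduced system $\hat{\CR}(\Delta)$, since the negative part of each reduced constraint is nonnegative. The transformation rules $\{R1,\dots,R6\}$ preserve $\solutionsR$. This is cited from \citep{WilhelmKernIsbernerBeierle2024FoIKScb} rather than reproved.

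Second, I invoke Proposition~\ref{prop_coreselstrat_ipcspg}, which states that $\sigma^{mc}$ satisfies \ipCSPG. Third, I apply Proposition~\ref{prop_selstrat_csynsplit}: every selection strategy satisfying \ipCSPG\ yields an operator $\indreascrepSelect$ satisfying \cRelG\ and \cIndG. Instantiating it with $\sigma=\sigma^{mc}$ gives the claim, and \cSynSplitG\ follows by definition as the conjunction of both.

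For a more self-contained argument I would also spell out the two halves through the OCF characterizations. For \cIndG, Proposition~\ref{prop_crep_condkind} applies because $\kappa^{mc}_\Delta$ is a c-representation with $\kappa^{mc}_\Delta\models\Delta$. This gives $\Ind{\kappa^{mc}_\Delta}{\Sigma_1}{\Sigma_2}{\Sigma_3}$ for every generalized safe splitting, and Proposition~\ref{prop:Cind:for:ocfs} then yields \cIndG. For \cRelG, I would use Proposition~\ref{prop:Crel:for:ocfs}, so it suffices to show $\kappa^{mc}_{\Delta_i}=\kappa^{mc}_\Delta\marg_{\Sigma_i\cup\Sigma_3}$. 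Take $\omega^i\omega^3$. By generalized safety it extends by some $\omega^{i'}$ falsifying nothing in $\DeltaJWoThree$; this extension minimizes the rank over all extensions, since the $\Delta_i$-part of the sum is fixed by $\omega^i\omega^3$. Its rank is then $\sum$ of impacts of falsified conditionals of $\Delta_i$ under $\kappaiminusVektor^{mc}_\Delta\marg_{\Delta_i}$. By \ipCSPG\ these impacts equal $\kappaiminusVektor^{mc}_{\Delta_i}$, giving $\kappa^{mc}_{\Delta_i}(\omega^i\omega^3)$.

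The only delicate point I expect is the marginalization step for \cRelG. One must make sure the minimizing extension falsifies no conditional of $\DeltaJWoThree$ while conditionals of $\Delta_3$ are counted only once. This works because $\Delta_3\subseteq\Delta_i$ and their falsification depends only on $\omega^3$. Everything else is a direct chaining of the cited propositions.
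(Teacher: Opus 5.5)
Your proposal is correct and takes essentially the same approach as the paper, whose proof simply chains Proposition~\ref{prop_coreselstrat_ipcspg} ($\sigma^{mc}$ satisfies \ipCSPG) with Proposition~\ref{prop_selstrat_csynsplit}. Your additional steps are sound but go beyond what the paper writes out: checking that $\sigma^{mc}$ is a genuine selection strategy, and deriving \cIndG\ and \cRelG\ directly via Propositions~\ref{prop_crep_condkind}, \ref{prop:Cind:for:ocfs} and \ref{prop:Crel:for:ocfs}.
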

\begin{proof}
	The proposition follows immediately from Propositions~\ref{prop_selstrat_csynsplit} and \ref{prop_coreselstrat_ipcspg}.
\end{proof}
}
Thus we have shown that c-core closure is an example of an inference operator based on a single c-representation that satisfies \ipCSPG\ and thus fully complies with our generalized version of conditional syntax splitting. Next, we will look at an inference operator taking not a single, but all c-representations of a \BB\ into account.

}

\subsection{c-Inference}\label{ssec_cinf}
\emph{c-Inference} was introduced in \citep{BeierleEichhornKernIsberner2016FoIKS,BeierleEichhornKernIsbernerKutsch2018AMAI} as the skeptical inference relation obtained by taking all \hbox{c-representations} of a belief base \Rdelta\ into account.

	\begin{definition}[c-inference, \boldmath{$\skCinfwrt{\Rdelta}$}, \citep{BeierleEichhornKernIsberner2016FoIKS}]
		\label{def_consequence_relations}
		Let \(\Rdelta\) be a belief base and let $A$, $B$ be formulas.
		$B$ is \emph{a (skeptical) c-inference from $A$ in the context of \Rdelta}, denoted by $A\skCinfwrt{\Rdelta} B$, iff $A\nmableit_{\!\!\kappa\,} B$ holds for all c-representations \(\kappa\) of \(\Rdelta\), yielding the inductive inference operator
		\[
		\indreasskcinf:  \Delta \mapsto \skCinfwrt{\Rdelta}
		\]
	\end{definition}
	
	\FARBE{Before proving that c-inference satisfies conditional syntax splitting, we 
}
	\FARBE{recall the following: Consider a safe conditional syntax splitting of $\Delta$ into $\Delta_1$ and $\Delta_2$,
	\FARBE{and a c-representation 
		$\induzierteOCF{\kappaiminusVektor}$
		determined by a solution vector $\kappaiminusVektor\in\solutionsRof{\Delta}$  together with its projections
		$\induzierteOCF{\kappaiminusVektorsubj{1}}$
		and
		$\induzierteOCF{\kappaiminusVektorsubj{2}}$
		to $\Delta_1$
		and $\Delta_2$, respectively.
		Then the rank of any formula $F_i$ over the language
		$\cL(\Sigma_i\cup\Sigma_3)$
		of $\Delta_i$
		under the projection
		$\induzierteOCF{\kappaiminusVektorsubj{i}}$
		coincides with the rank of the formula rank determined by
		$\induzierteOCF{\kappaiminusVektor}$ \citep{BeierleSpiegelHaldimannWilhelmHeyninckKernIsberner2024KR}.
		}\FARBE{We extend this result to} generalized safe conditional syntax splittings in the next proposition.}
\begin{propositionrep}%
	\label{lem_split_crep_formula_null_cond}
	For any \FARBE{\genSafeCondSynSplitGen}, for all 
	$\kappaiminusVektor \in \solutionsRof{\Rdelta}$, \FARBE{and for $i \in \{1,2\}$, $F_i \in \cL(\Sigma_i\cup\Sigma_{3})$,}
	we have
	$\induzierteOCF{\kappaiminusVektor}(F_i) = \induzierteOCF{\kappaiminusVektorsubj{i}}(F_i)$.
\end{propositionrep}
\begin{proof}
	\FARBE{The proof is obtained by adapting the corresponding proof for safe splittings \citep[Proposition 29]{BeierleSpiegelHaldimannWilhelmHeyninckKernIsberner2024KR} by observing that the prerequisites of the steps in the proof are also satisfied by generalized safe conditional syntax splittings.}
\end{proof}
\FARBE{A related proposition for safe splittings additionally states that, for $\j\in\{1,2\}$, $i\neq\j$, \FARBE{it holds that} $\induzierteOCF{\kappaiminusVektorsubj{\j}}(F_i)=0$, i.e., formulas defined over the language of one subbase get assigned the rank 0 in models of the other subbase~\citep{BeierleSpiegelHaldimannWilhelmHeyninckKernIsberner2024KR}. However, this result cannot be extended to \genSafe\ splittings because conditionals in $\DeltaThree$ can be falsified by $F_i$ and thus it is possible that $\induzierteOCF{\kappaiminusVektorsubj{\j}}(F_i)>0$. }

\FARBE{%
	\FARBE{Next we can show} that for every \genSafe\ conditional syntax splitting and every solution vector for $\DeltaI$, we can actually find matching solution vectors for $\DeltaJ$ and $\DeltaThree$.}
\FARBE{
	\begin{propositionrep}
		\label{lemm_thereisd3}
		Let $\Delta$ be a 
		\BB\ with \genSafeCondSynSplitGen. Then \FARBE{for $i \in \{1,2\}$, and} for every $\kappaiminusVektorsubj{i}\in\solutionsRof{\DeltaI}$ there are  $\kappaiminusVektorsubj{i'}\in\solutionsRof{\DeltaJ}$ and $\kappaiminusVektorsubj{3}\in\solutionsRof{\DeltaThree}$, such that $\kappaiminusVektorsubj{i}|_{\DeltaThree}=\kappaiminusVektorsubj{i'}|_{\DeltaThree} = \kappaiminusVektorsubj{3}$.
	\end{propositionrep}
}
	\begin{proof}
		Consider some constraint $\kappaiminus{j}^{i}\in\cspRof{\DeltaThree}$:
		\begin{align}
			\label{eq_proof_lemm_12}
			\kappaiminus{j}^{i}  >  
				\min_{\omega \models A_j B_j}
				\sum_{\substack{k \neq j \\ \omega \models A_k \ol{B_k} \\ (B_k|A_k)\in\DeltaI}} \kappaiminus{k} 
			- 
				\min_{\omega \models A_j \notB_j}
				\sum_{\substack{k \neq j \\ \omega \models A_k \ol{B_k} \\ (B_k|A_k)\in\DeltaI}} \kappaiminus{k} 
		\end{align} 
		because $\DeltaThree\in (\cL(\Sigma_3)|\cL(\Sigma_3))$, the only relevant part for the verification of $(B_j|A_j)$ is $\omega^3$ for any $\omega$. With the general safety property this means that there is an extension $\omega^3\omega^i\omegaJ$ such that $\omega^3\omega^i\omegaJ$ falsifies no conditional in $\Delta\setminus \DeltaThree$. Clearly this must then also hold for the world minimizing this expression. This means, that the solution of $\cspRof{\DeltaThree}$ is independent of the conditionals in $\DeltaIWoThree$ or $\DeltaJWoThree$. Therefore we have $\solutionsRof{\DeltaI}_{|\DeltaThree}=\solutionsRof{\DeltaJ}_{|\DeltaThree}=\solutionsRof{\DeltaThree}$. Since $\Delta$ is (strongly) consistent we can then always find $\kappaiminusVektorsubj{i'}\in\solutionsRof{\DeltaJ}$ and $\kappaiminusVektorsubj{3}\in\solutionsRof{\DeltaThree}$, such that $\kappaiminusVektorsubj{i}|_{\DeltaThree}=\kappaiminusVektorsubj{i'}|_{\DeltaThree} = \kappaiminusVektorsubj{3}$ by choosing $\kappaiminusVektorsubj{3}=\kappaiminusVektorsubj{i}|_{\DeltaThree}$ and $\kappaiminusVektorsubj{i'}$ such that$\kappaiminusVektorsubj{i'}|_{\DeltaThree}=\kappaiminusVektorsubj{i}|_{\DeltaThree}$. 
	\end{proof}
With Propositions \ref{prop_solprop}, \ref{lem_split_crep_formula_null_cond}, and \ref{lemm_thereisd3} we can \FARBE{show:}
\FARBE{
	\begin{propositionrep}
		c-Inference satisfies 
		\FARBE{\cRelG\ and  \cIndG\ and thus
		\cSynSplitG}. 
	\end{propositionrep}
}
\begin{proof}
	Let \safeCondSynSplitGen. W.l.o.g. assume $A,B\in \cL(\Sigma_1), D\in\cL(\Sigma_{2}), \dot{B}\in\{B,\notB\}$ and assume $E\in\cL(\Sigma_3)$ is a complete conjunction with \(\FARBE{DE} \not\equiv \bot\). \FARBE{We show that c-inference satisfies both \cRelG\ \textbf{(I)} and \cIndG\ \textbf{(II)}}.
	
	\textbf{(I)} To prove that $\indreasskcinf$ satisfies \cRelG\ we need to show that $AE\skCinfwrt{\Rdelta} B$ iff $AE \skCinfwrt{\Rdelta_1} B$.
	By applying the definition of $\skCinfwrt{\Rdelta}$ we obtain:
	\begin{align}
		\forall \kappaiminusVektor\in\solutionsR:
		\induzierteOCF{\kappaiminusVektor}(ABE) <  \induzierteOCF{\kappaiminusVektor}(A\ol{B}E)\label{eq_cinf_proof_1}\\
		\text{ iff }
		\forall
		\kappaiminusVektor^1\in\solutionsRof{\DeltaOne}:
		\induzierteOCF{\kappaiminusVektorsubj{1}}(ABE) <  \induzierteOCF{\kappaiminusVektorsubj{1}}(A\ol{B}E)\label{eq_cinf_proof_2}
	\end{align}
	
	\textbf{Direction \(\Rightarrow\):}
	\FARBE{Due to Proposition~\ref{prop_solprop} and \Cref{lemm_thereisd3}} we have that every $\kappaiminusVektor^1\in\solutionsRof{\DeltaI}$ has an extension $\kappaiminusVektorsubj{1}$ such that $(\kappaiminusVektorsubj{1},\kappaiminusVektorsubj{2}|_{\DeltaJWoThree}) \in \solutionsR$. Due to \eqref{eq_cinf_proof_1} we know that $\induzierteOCF{\kappaiminusVektor}(ABE) <  \induzierteOCF{\kappaiminusVektor}(A\ol{B}E)$ holds for $\kappaiminusVektor=(\kappaiminusVektorsubj{1},\kappaiminusVektorsubj{2}|_{\DeltaJWoThree})$. We need to show that  $\induzierteOCF{\kappaiminusVektorsubj{1}}(ABE) <  \induzierteOCF{\kappaiminusVektorsubj{1}}(A\ol{B}E)$.
	With Lemma \ref{lem_split_crep_formula_null_cond} this follows directly because $\induzierteOCF{\kappaiminusVektor}(A\dot{B}E)=\induzierteOCF{\kappaiminusVektorsubj{1}}(A\dot{B}E)$ since $A,B\in \cL(\Sigma_{1})$, $E\in \cL(\Sigma_{3})$ and $\DeltaI\subseteq(\cL(\Sigma_{1}\cup\Sigma_{3})|\cL(\Sigma_{1}\cup\Sigma_{3}))$.
	
	\textbf{Direction \(\Leftarrow\):}
	With Proposition~\ref{prop_solprop} we have that every $\kappaiminusVektor\in\solutionsR$ can be split into $(\kappaiminusVektorsubj{1},\kappaiminusVektorsubj{2}|_{\DeltaJWoThree}) \in \solutionsR$. Due to \eqref{eq_cinf_proof_2} we know that $\induzierteOCF{\kappaiminusVektorsubj{1}}(ABE) <  \induzierteOCF{\kappaiminusVektorsubj{1}}(A\ol{B}E)$ holds. Then with Lemma~\ref{lem_split_crep_formula_null_cond} it follows directly that $\induzierteOCF{\kappaiminusVektorsubj{1}}(ABE) <  \induzierteOCF{\kappaiminusVektorsubj{1}}(A\ol{B}E)$ as above.
	
	\textbf{(II)} Next we prove that $\indreasskcinf$ satisfies {\bf \cindep}. We need to show $AE\skCinfwrt{\Rdelta} B$ iff $ADE\skCinfwrt{\Rdelta} B$. By applying the definition of $\skCinfwrt{\Rdelta}$ we obtain:
	\begin{align}
		\forall \kappaiminusVektor\in\solutionsR:
		\induzierteOCF{\kappaiminusVektor}(ABE) <  \induzierteOCF{\kappaiminusVektor}(A\ol{B}E)\label{eq_cinf_proof_3}\\
		\text{ iff }
		\forall
		\kappaiminusVektor\in\solutionsRof{\Delta}:
		\induzierteOCF{\kappaiminusVektor}(ABDE) <  \induzierteOCF{\kappaiminusVektor}(A\ol{B}DE)\label{eq_cinf_proof_4}
	\end{align}
	
	\textbf{Direction \(\Rightarrow\):} Due to Proposition~\ref{prop_crep_condkind} we know that $\Ind{\induzierteOCF{\kappaiminusVektor}}{\Sigma_1}{\Sigma_2}{\Sigma_3}$. With Lemma~\ref{lemma_ocf_condint_formula} we then have $\induzierteOCF{\kappaiminusVektor}(A\dot{B}DE) = \induzierteOCF{\kappaiminusVektor}(A\dot{B}E) + \induzierteOCF{\kappaiminusVektor}(DE) -\induzierteOCF{\kappaiminusVektor}(E)$. With \eqref{eq_cinf_proof_3} it is clear that \eqref{eq_cinf_proof_4} must also hold.
	
	\textbf{Direction \(\Leftarrow\):} Due to Proposition~\ref{prop_crep_condkind} we know that $\Ind{\induzierteOCF{\kappaiminusVektor}}{\Sigma_1}{\Sigma_2}{\Sigma_3}$. With Lemma~\ref{lemma_ocf_condint_formula} we then have $\induzierteOCF{\kappaiminusVektor}(A\dot{B}DE) = \induzierteOCF{\kappaiminusVektor}(A\dot{B}E) + \induzierteOCF{\kappaiminusVektor}(DE) -\induzierteOCF{\kappaiminusVektor}(E)$. This is equivalent to 
	$\induzierteOCF{\kappaiminusVektor}(A\dot{B}E) =
	\induzierteOCF{\kappaiminusVektor}(A\dot{B}DE) - \induzierteOCF{\kappaiminusVektor}(DE) +\induzierteOCF{\kappaiminusVektor}(E)$. With \eqref{eq_cinf_proof_4} it is clear that \eqref{eq_cinf_proof_3} must also hold. 
\end{proof}
\FARBE{Thus also the inference taking all c-representations into account fully complies with \cSynSplitG.}

\section{(CSynSplit\textsuperscript{g}) properly strengthens (CSynSplit)}
\label{sec_csynsplitg_stronger}
\FARBE{While \cSynSplit\ takes into account all safe conditional syntax splittings of a \BB, \cSynSplitG\ takes into account all generalized safe splittings. Because every safe splitting is generalized safe, but not vice versa (cf. Proposition~\ref{prop_relationship_safeties}), this means that \cSynSplitG\ is harder to satisfy than \cSynSplit.
In this section we formalize this observation by providing a proof showing that \cSynSplitG\ implies \cSynSplit\ but not the other way around.}
First, we first introduce the following lemma, stating that System~Z complies with conditional independence when restricted to \FARBE{simple (non-genuine)} and safe conditional syntax splittings. \FARBE{We will afterwards exploit this lemma to show that there are inductive inference operators satisfying \cSynSplit\ but not \cSynSplitG.}
\begin{lemma}
	\label{lemm_sysZ_simplesafe}
	Let ${\bf C^z}$ be the System Z induced inductive inference operator and $\Delta$ a \BB. Then, for every %
	\FARBE{simple,}
	safe conditional syntax splitting \genSafeCondSynSplitGen, ${\bf C^z}$ satisfies, for all $A,B\in \cL(\Sigma_i)$, $D\in\cL(\Sigma_{i'})$, with $i,i'\in\{1,2\}, i\neq i'$, and a full conjunction $E\in\cL(\Sigma_3)$ with $DE\not\equiv\bot$, that
	\[
	AE\nmableitDS{\Delta}{z} B \quad \mbox{ iff }\quad ADE\nmableitDS{\Delta}{z} B.
	\]
\end{lemma}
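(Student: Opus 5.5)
The plan is to exploit the fact that a simple safe splitting is so degenerate that the System~Z ranking function effectively lives on only one of the two sides. Since the splitting is simple, we have $\Delta_1\subseteq\Delta_2$ or $\Delta_2\subseteq\Delta_1$. By symmetry, assume $\Delta_k\subseteq\Delta_{k'}$ with $\{k,k'\}=\{1,2\}$. Then $\Delta_k=\Delta_1\cap\Delta_2=\Delta_3$. Since the splitting is safe, Lemma~\ref{lem_selffulfilling} says that every conditional in $\Delta_3$, and hence in $\Delta_k$, is self-fulfilling. Self-fulfilling conditionals are never falsified, so for every $\omega$ the set of conditionals of $\Delta$ falsified by $\omega$ is contained in $\Delta_{k'}\setminus\Delta_3$. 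These conditionals lie in $(\cL(\Sigma_{k'}\cup\Sigma_3)|\cL(\Sigma_{k'}\cup\Sigma_3))$. By \eqref{eq_omega_reduce} and the definition of $\kappa^z$ via $\OP(\Delta)$, this gives a function $f$ on $\Omega(\Sigma_{k'}\cup\Sigma_3)$ with $\kappa^z(\omega)=f(\omega^{k'}\omega^3)$ for all $\omega$. Strong consistency of $\Delta$ keeps all ranks finite. The statement is symmetric in $i,i'$, so we then split into the two cases $i=k'$ and $i=k$.

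\textbf{Case $i=k'$.} Here $\kappa^z$ depends only on $\omega^i\omega^3$. For $\dot{B}\in\{B,\ol{B}\}$ we have $\kappa^z(A\dot{B}DE)=\min\{f(\omega^i\omega^3)\mid \omega^i\omega^3\models A\dot{B}E,\ \exists\,\omega^{i'}\colon \omega^{i'}\omega^3\models D\}$. The existence condition holds for the unique $\omega^3\models E$ because $DE\not\equiv\bot$ and $D\in\cL(\Sigma_{i'})$. Hence $\kappa^z(A\dot{B}DE)=\kappa^z(A\dot{B}E)$. The same reasoning shows that $A\dot{B}DE\equiv\bot$ iff $A\dot{B}E\equiv\bot$, since the signatures are disjoint. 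Plugging these equalities into \eqref{eq_nmocf} gives $AE\nmableitDS{\Delta}{z}B$ iff $ADE\nmableitDS{\Delta}{z}B$.

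\textbf{Case $i=k$.} Now $\kappa^z$ depends only on $\omega^{i'}\omega^3$, while $A,B$ are over $\Sigma_i$. If $A\dot{B}E\not\equiv\bot$, then $\kappa^z(A\dot{B}E)=\min\{f(\omega^{i'}\omega^3)\mid\omega^3\models E\}$, a value independent of $\dot{B}$. Likewise, if $A\dot{B}DE\not\equiv\bot$, then $\kappa^z(A\dot{B}DE)=\min\{f(\omega^{i'}\omega^3)\mid \omega^{i'}\omega^3\models DE\}$, again independent of $\dot{B}$. So the strict inequality $\kappa^z(ABE)<\kappa^z(A\ol{B}E)$ can never hold when both sides are consistent, and the same is true with $D$ added. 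Therefore $AE\nmableitDS{\Delta}{z}B$ holds iff $AE\ol{B}\equiv\bot$, and $ADE\nmableitDS{\Delta}{z}B$ holds iff $ADE\ol{B}\equiv\bot$; this includes the subcase where the antecedent itself is inconsistent. Because $A\ol{B}\in\cL(\Sigma_i)$, $E$ is a complete conjunction over $\Sigma_3$, $D\in\cL(\Sigma_{i'})$, the three signatures are disjoint, and $DE$ is satisfiable, we get $A\ol{B}E\equiv\bot$ iff $A\ol{B}DE\equiv\bot$. This closes the case.

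The main obstacle is the bookkeeping rather than any deep step. We must handle the degenerate inconsistent cases in \eqref{eq_nmocf} correctly, and we must justify carefully that $\kappa^z(\omega)$ depends only on the falsified conditionals, so that the self-fulfilling $\Delta_3$ contributes nothing. The decisive ingredient is Lemma~\ref{lem_selffulfilling}. It is exactly what fails for \genSafe\ splittings, which is why the lemma is restricted to safe ones.
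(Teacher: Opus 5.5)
Your proof is correct and follows essentially the paper's route: your cases $i=k'$ and $i=k$ are the paper's cases $\Delta_2=\emptyset$ and $\Delta_1=\emptyset$ (for $i=1$). You handle them more carefully than the paper does, since you avoid its unjustified step from safety to $\Delta_1\cap\Delta_2=\emptyset$ (safety only makes $\Delta_3$ self-fulfilling, not empty), and you treat the degenerate cases $AB\equiv\bot$ or $A\ol{B}\equiv\bot$ correctly.

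One correction to your closing remark: Lemma~\ref{lem_selffulfilling} is not needed at all. Since $\Delta=\Delta_{k'}$ already lies in $(\cL(\Sigma_{k'}\cup\Sigma_3)|\cL(\Sigma_{k'}\cup\Sigma_3))$, $\kappa^z(\omega)$ depends only on $\omega^{k'}\omega^3$ for every simple conditional syntax splitting, so safety is not what makes the argument work.
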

\begin{proof}
	W.l.o.g. assume that $i=1,i'=2$, the other case is analogous.
	Because the splitting is not \echtesSplitting, we have that either $\Delta_1\subseteq\Delta_2$ or $\Delta_2\subseteq\Delta_1$. Because the splitting is also safe, we have that $\Delta_1\cap\Delta_2=\emptyset$. Thus, either $\Delta_1=\emptyset$ or $\Delta_2=\emptyset$.
	
	First assume $\Delta_1=\emptyset$. Then $\Delta_2=\Delta$. 
	We deal with the border cases first. Assume either $A\equiv\bot$, or $B\equiv\bot$. If $A\equiv\bot$, then $AE\equiv ADE\equiv \bot$ and the equation holds. If $B\equiv\bot$, then we need to show, that $AE\nmableitDS{\Delta}{z}\bot\text{ iff }AED\nmableitDS{\Delta}{z}\bot.$. Both sides of the iff are false, unless $AE\equiv\bot$ or $AED\equiv\bot$. Neither $E\equiv\bot$ nor $D\equiv\bot$ are allowed as per our assumption and if $A\equiv\bot$ we obtain the first case.
	
	So assume $A\not\equiv\bot$ and $B\not\equiv\bot$. Because $\Delta_1\subseteq\Delta_2$, there exists no signature element in $\Sigma_1$ that appears in any conditional in $\Delta$, because the existence of such an element would mean, that there is some conditional $(B|A)\in\Delta$ with $(B|A)\in\Delta_1$ but $(B|A)\notin\Delta_2$, contrary to our assumption. This means that no formula $F_1\in\cL(\Sigma_1), F_1\not\equiv\bot$ can cause the falsification of any conditional in $\Delta$, i.e., $\kappa_\Delta^z(G)=\kappa_\Delta^z(GF_1)$ for any $G\in\cL(\Sigma)$. With this we have, for $A,B\not\equiv\bot$, $\kappa_\Delta^z(AEB)=\kappa_\Delta^z(AE\ol{B})=\kappa_\Delta^z(E)$ and $\kappa_\Delta^z(AEDB)=\kappa_\Delta^z(AED\ol{B})=\kappa_\Delta^z(ED)$ and thus $AE\nmableitDS{\Delta}{z} B\text{ iff }ADE\nmableitDS{\Delta}{z} B$.
	
	Next assume $\Delta_2=\emptyset$. Then $\Delta_1=\Delta$. The case $D\equiv\bot$ is not possible as per our assumption. Using the same arguments as above, we obtain for any $F_2\in\cL(\Sigma_2)$ and any $G\in\cL(\Sigma)$, that $\kappa_{\Delta}^z(G)=\kappa_{\Delta}^z(GF_2)$. Thus we have $\kappa_{\Delta}^z(AEB)=\kappa_{\Delta}^z(ADEB)$ and $\kappa_{\Delta}^z(AE\ol{B})=\kappa_{\Delta}^z(ADE\ol{B})$ and therefore $AE\nmableitDS{\Delta}{z} B\text{ iff }ADE\nmableitDS{\Delta}{z} B$.
\end{proof}

\FARBE{\FARBE{Now we show that} the postulate \cSynSplitG\ %
	is indeed harder to satisfy than \cSynSplit.}

\begin{propositionrep}
	\label{prop_csynsplitgG_csynsplit}
	The following \FARBE{relationships} hold:
	\begin{enumerate}
		\item \cSynSplitG\ implies \cSynSplit.
		\item \cSynSplit\ does not imply \cSynSplitG.
	\end{enumerate}
\end{propositionrep}
\begin{proof}
	1.: \FARBE{This follows immediately from the fact that every safe conditional syntax splitting is also \genSafe\ (cf. \Cref{prop_relationship_safeties}).}
	
	2.:We construct an inductive inference Operator ${\bf C^{zw}}$ in the following manner:
	\[
	{\bf C^{zw}}(\Delta) =
	\begin{cases}
		\wableitDelta, 
		\text{ if $\Delta$ has a \echtesSplitting\ safe conditional syntax splitting}\\
		\nmableit_{\!\!\Delta}^{\!\!z}, \text{ otherwise}
	\end{cases}
	\]
	
	${\bf C^{zw}}$ \FARBE{satisfies \cRel}, because both ${\bf C^{w}}$ and ${\bf C^{z}}$ satisfy \cRel. Furthermore, \FARBE{whenever ${\bf C^{zw}}(\Delta)={\bf C^{w}}(\Delta)$, then ${\bf C^{zw}}$ satisfies \cInd\ because} ${\bf C^{w}}$ satisfies \cInd. 
	\FARBE{Whenever ${\bf C^{zw}}(\Delta)={\bf C^{z}}(\Delta)$, then all safe splittings of $\Delta$ are simple splittings.}
	Therefore, with \Cref{lemm_sysZ_simplesafe}, \cInd\ is also satisfied \FARBE{whenever ${\bf C^{zw}}(\Delta)={\bf C^{z}}(\Delta)$}, meaning ${\bf C^{zw}}$ satisfies both \cRel\ and \cInd.
	
	We now show that ${\bf C^{zw}}$ violates \cIndG\ by constructing a \BB\ with no \echtesSplitting\ safe conditional syntax splitting, but a \genSafe\ conditional syntax splitting.	
	\FARBE{Consider the signature $\Sigma=\{b,p,f,w,k\}$ meaning that an entity is a \underline{b}ird, is a \underline{p}enguin, \underline{f}lies, has \underline{w}ings, and is a \underline{k}iwi.	
	The \BB\ 
	\[
	\DeltaKiwi=\{(f|b), (\ol{f}|p), (b|p), (\ol{f}|k),(b|k), \FARBE{(w|b), (\ol{w}|k)}\}
	\]
	makes use of this signature.}
\FARBE{The \BB\ $\DeltaKiwi$	
does not have a \echtesSplitting\ safe conditional syntax splitting; 
the list of all conditional syntax splitting of $\DeltaKiwi$ can be found in the appendix. Thus, ${\bf C^{zw}}(\DeltaKiwi)$ coincides with ${\bf C^{z}}(\DeltaKiwi)$.} However, $\DeltaKiwi$ has a \genSafe\ conditional syntax splitting, given by
\begin{align*}
	\begin{split}
		\DeltaKiwi=\{(f|b), (\ol{f}|p), (b|p)\}
		\bigcup_{\{p\},\{k,w\}}^{\sf gs}\{(f|b), (\ol{f}|k),(b|k),(\ol{w}|k),(w|b)\}\mid \{f,b\}.
	\end{split}
\end{align*}
Regarding this splitting however, we have that $bf\nmableitDS{\DeltaKiwi}{z} w$ while $pbf\notnmableitDS{\DeltaKiwi}{z} w$. Since $b,f\in\SigmaThree, p\in\Sigma_1$ and $w\in\Sigma_2$ this is a violation of \cIndG. Thus ${\bf C^{zw}}$ does not satisfy \cSynSplitG.
\end{proof}

\FARBE{Thus, Proposition~\ref{prop_csynsplitgG_csynsplit} shows that, because \cSynSplitG\ covers a broader notion of safety and thus more splittings, \cSynSplitG\ implies \cSynSplit\ but not the other way around.} 
\section{Conclusions and Future Work}
\label{sec_conclusions}
\FARBE{
In this article we 
generalized the notion of safety for conditional syntax splittings, \FARBE{allowing %
the subbases to} share non-trivial conditionals.
\FARBE{This is achieved by introducing a more relaxed notion of safety,
significantly broadening the application scope of the beneficial splitting techniques.}
Moreover, we
identified genuine splittings as the subclass of meaningful %
conditional syntax splittings, separating them from the class of simple splittings which provide no advantage for inductive inference.

\FARBE{Thus, we have made two major steps towards utilizing conditional syntax splitting postulates for inductive inference applications. First, we have significantly broadened the applicability of conditional syntax splitting postulates by adapting them to a more relaxed notion of safety, allowing them to be applied to significantly more splittings. This includes making them applicable to belief bases where previous postulates were not able to be meaningfully applied at all (cf. Example~\ref{exa_contrasafety2}). Second, we have classified splittings \FARBE{beneficial for} inductive inference as genuine splittings, filtering out the large class of simple splittings in the process (cf. Example~\ref{exa_gensafety}). Thus we were able to identify those splittings where conditional syntax splitting postulates can be meaningfully applied as exactly the genuine, generalized safe splittings.}

We introduced adapted conditional syntax splitting postulates to fit with our generalized notion of conditional syntax splitting.
\FARBE{ %
The new postulate \cSynSplitG\
covers more splittings than 
\cSynSplit\
and is thus more \FARBE{relevant}, but harder to satisfy, i.e., there exist inductive inference operators complying with conditional syntax splitting but not with generalized conditional syntax splitting.}
We showed that lexicographic inference, System~W, inductive inference with a single c-representation based on an adequate selection strategy, c-core closure, and c-inference all fully comply with generalized conditional syntax splitting. \FARBE{While System~Z fails to satisfy generalized conditional syntax splitting, we showed that it complies with generalized conditional relevance.}

In future work we will study the exact relationship of our approach to syntactic contextual filtering \citep{dupin2024form} and to propositional forgetting \citep{LangLiberatoreMarquis2003local,SauerwaldKernIsbernerBeckerBeierle2022SUM,SauerwaldBeierleKernIsberner2024FoIKS}. We will exploit the beneficial properties of splitting techniques \FARBE{for} implementations of inductive inference
\citep{BeierleHaldimannSaninSchwarzerSpangSpiegelvonBerg2024SUM,BeierleHaldimannSaninSpangSpiegelvonBerg2025JELIA},
\FARBE{and we will adapt}
the concepts shown here to include also \BBs\ that satisfy a weaker notion of consistency (cf. \citep{HaldimannBeierleKernIsbernerMeyer2023FLAIRSproceedings,HaldimannBeierleKernIsberner2024FoIKS}).}

\clearpage

\section*{Acknowledgments}
\FARBE{

	This work was supported by the Deutsche Forschungsgemeinschaft (DFG, German Research Foundation) - 512363537, grant BE~1700/12-1 awarded to Christoph Beierle. Lars-Phillip Spiegel was supported by this grant.
	
	\FARBE{Jonas Haldimann's work was supported in part by the National Research Foundation of South Africa (REFERENCE NO: SAI240823262612).}

	}

\newcommand{\verzeichnisBibtex}{\string~/BibTeXReferencesSVNlink}

 \bibliographystyle{cas-model2-names} 

\addcontentsline{toc}{section}{References}
\bibliography{references.bib}

\clearpage
\appendix
\section{List of all conditional syntax splitting of belief base $\DeltaRain$ from Example~\ref{exa_simplesplits}.}
All genuine, generalized safe splittings are marked with boxes.
\begin{align*}
		&\safeCondSynSplitMacro{\DeltaRain}{\DeltaRain}{\emptyset}{\Sigma}{\emptyset}{\emptyset}\\
		&\safeCondSynSplitMacro{\DeltaRain}{\DeltaRain}{\emptyset}{\{s,r,o,u\}}{\emptyset}{\{b\}}\\
		&\safeCondSynSplitMacro{\DeltaRain}{\DeltaRain}{\emptyset}{\{b,r,o,u\}}{\emptyset}{\{s\}}\\
		&\safeCondSynSplitMacro{\DeltaRain}{\DeltaRain}{\emptyset}{\{b,s,o,u\}}{\emptyset}{\{r\}}\\
		&\safeCondSynSplitMacro{\DeltaRain}{\DeltaRain}{\emptyset}{\{b,s,r,u\}}{\emptyset}{\{o\}}\\
		&\safeCondSynSplitMacro{\DeltaRain}{\DeltaRain}{\emptyset}{\{b,s,r,o\}}{\emptyset}{\{u\}}\\
		&\safeCondSynSplitMacro{\DeltaRain}{\DeltaRain}{\emptyset}{\{r,o,u\}}{\emptyset}{\{b,s\}}\\
		&\safeCondSynSplitMacro{\DeltaRain}{\DeltaRain}{\emptyset}{\{s,o,u\}}{\emptyset}{\{b,r\}}\\
		&\safeCondSynSplitMacro{\DeltaRain}{\DeltaRain}{\emptyset}{\{s,r,u\}}{\emptyset}{\{b,o\}}\\
		&\safeCondSynSplitMacro{\DeltaRain}{\DeltaRain}{\emptyset}{\{s,r,o\}}{\emptyset}{\{b,u\}}\\
		&\boxed{\genSafeCondSynSplitMacro{\DeltaRain}{\{(\ol{s}|r),(\ol{r}|s),(b|sr)\}}{\{(\ol{s}|r),(\ol{r}|s),(o|s\ol{r}),(\ol{o}|r),(u|ro)\}}{\{b\}}{\{o,u\}}{\{s,r\}}}\\
		&\genSafeCondSynSplitMacro{\DeltaRain}{\DeltaRain}{\{(\ol{s}|r),(\ol{r}|s)\}}{\{b,o,u\}}{\emptyset}{\{s,r\}}\\
		&\CondSynSplitMacro{\DeltaRain}{\DeltaRain}{\emptyset}{\{b,r,u\}}{\emptyset}{\{s,o\}}\\
		&\safeCondSynSplitMacro{\DeltaRain}{\DeltaRain}{\emptyset}{\{b,r,o\}}{\emptyset}{\{s,u\}}\\
		&\boxed{\genSafeCondSynSplitMacro{\DeltaRain}{\{(\ol{s}|r),(\ol{r}|s),(b|sr),(o|s\ol{r}),(\ol{o}|r)\}}{\{(\ol{o}|r),(u|ro)\}}{\{b,s\}}{\{u\}}{\{r,o\}}}\\
		&\genSafeCondSynSplitMacro{\DeltaRain}{\DeltaRain}{\{(\ol{o}|r)\}}{\{b,s,u\}}{\emptyset}{\{r,o\}}\\
		&\safeCondSynSplitMacro{\DeltaRain}{\DeltaRain}{\emptyset}{\{b,s,o\}}{\emptyset}{\{r,u\}}\\
		&\safeCondSynSplitMacro{\DeltaRain}{\DeltaRain}{\{(b|k)\}}{\{b,s,r\}}{\emptyset}{\{o,u\}}\\
		&\genSafeCondSynSplitMacro{\DeltaRain}{\DeltaRain}{\{(\ol{s}|r),(\ol{r}|s),(b|sr)\}}{\{o,u\}}{\emptyset}{\{b,s,r\}}\\
		&\CondSynSplitMacro{\DeltaRain}{\DeltaRain}{\emptyset}{\{r,u\}}{\emptyset}{\{b,s,o\}}\\
		&\safeCondSynSplitMacro{\DeltaRain}{\DeltaRain}{\emptyset}{\{r,o\}}{\emptyset}{\{b,s,u\}}\\
		&\boxed{\genSafeCondSynSplitMacro{\DeltaRain}{\{(\ol{s}|r),(\ol{r}|s),(b|sr),(o|s\ol{r}),(\ol{o}|r)\}}{\{(\ol{o}|r),(u|ro)\}}{\{s\}}{\{u\}}{\{b,r,o\}}}\\
		&\genSafeCondSynSplitMacro{\DeltaRain}{\DeltaRain}{\{(\ol{o}|r)\}}{\{s,u\}}{\emptyset}{\{b,r,o\}}\\
		&\safeCondSynSplitMacro{\DeltaRain}{\DeltaRain}{\emptyset\}}{\{s,o\}}{\emptyset}{\{b,r,u\}}\\
		&\safeCondSynSplitMacro{\DeltaRain}{\DeltaRain}{\emptyset}{\{s,r\}}{\emptyset}{\{b,o,u\}}\\
		&\boxed{\genSafeCondSynSplitMacro{\DeltaRain}{\{(\ol{s}|r),(\ol{r}|s),(b|sr),(o|s\ol{r}),(\ol{o}|r)\}}{\{(\ol{s}|r),(\ol{r}|s),(o|s\ol{r}),(\ol{o}|r),(u|ro)\}}{\{b\}}{\{u\}}{\{s,r,o\}}}\\
		&\genSafeCondSynSplitMacro{\DeltaRain}{\DeltaRain}{\{(\ol{s}|r),(\ol{r}|s),(o|s\ol{r}),(\ol{o}|r)\}}{\{b,u\}}{\emptyset}{\{s,r,o\}}\\
		&\boxed{\genSafeCondSynSplitMacro{\DeltaRain}{\{(\ol{s}|r),(\ol{r}|s),(b|sr)\}}{\{(\ol{s}|r),(\ol{r}|s),(o|s\ol{r}),(\ol{o}|r),(u|ro)\}}{\{b\}}{\{o\}}{\{s,r,u\}}}\\
		&\genSafeCondSynSplitMacro{\DeltaRain}{\DeltaRain}{\{(\ol{s}|r),(\ol{r}|s)\}}{\{b,o\}}{\emptyset}{\{s,r,u\}}\\
		&\CondSynSplitMacro{\DeltaRain}{\DeltaRain}{\emptyset}{\{b,r\}}{\emptyset}{\{s,o,u\}}\\
		&\genSafeCondSynSplitMacro{\DeltaRain}{\DeltaRain}{\{(\ol{o}|r),(u|ro)\}}{\{b,s\}}{\emptyset}{\{r,o,u\}}\\
		&\genSafeCondSynSplitMacro{\DeltaRain}{\DeltaRain}{\{(\ol{s}|r),(\ol{r}|s),(b|sr),(o|s\ol{r}),(\ol{o}|r)\}}{\{u\}}{\emptyset}{\{b,s,r,o\}}\\
		&\genSafeCondSynSplitMacro{\DeltaRain}{\DeltaRain}{\{(\ol{s}|r),(\ol{r}|s),(b|sr)\}}{\{o\}}{\emptyset}{\{b,s,r,u\}}\\
		&\CondSynSplitMacro{\DeltaRain}{\DeltaRain}{\emptyset}{\{r\}}{\emptyset}{\{b,s,o,u\}}\\
		&\genSafeCondSynSplitMacro{\DeltaRain}{\DeltaRain}{\{(\ol{o}|r),(u|or)\}}{\{s\}}{\emptyset}{\{b,r,o,u\}}\\
		&\genSafeCondSynSplitMacro{\DeltaRain}{\DeltaRain}{\{(\ol{s}|r),(\ol{r}|s),(o|s\ol{r}),(\ol{o}|r),(u|ro)\}}{\{b\}}{\emptyset}{\{s,r,o,u\}}\\
		&\genSafeCondSynSplitMacro{\DeltaRain}{\DeltaRain}{\DeltaRain}{\emptyset}{\emptyset}{\Sigma}
\end{align*}

\section{List of all conditional syntax splittings of belief base $\DeltaKiwi$ from the proof of Proposition~\ref{prop_csynsplitgG_csynsplit}.}
All genuine, generalized safe splittings are marked with boxes.		
\begin{align*}
	&\safeCondSynSplitMacro{\DeltaKiwi}{\DeltaKiwi}{\emptyset}{\Sigma}{\emptyset}{\emptyset}\\
	&\genSafeCondSynSplitMacro{\DeltaKiwi}{\DeltaKiwi}{\DeltaKiwi}{\emptyset}{\emptyset}{\Sigma}\\
	&\CondSynSplitMacro{\DeltaKiwi}{\DeltaKiwi}{\emptyset}{\{f,k,b,w\}}{\emptyset}{\{p\}}\\
	&\safeCondSynSplitMacro{\DeltaKiwi}{\DeltaKiwi}{\emptyset}{\{p,f,k,w\}}{\emptyset}{\{b\}}\\
	&\safeCondSynSplitMacro{\DeltaKiwi}{\DeltaKiwi}{\emptyset}{\{p,k,b,w\}}{\emptyset}{\{f\}}\\
	&\CondSynSplitMacro{\DeltaKiwi}{\DeltaKiwi}{\emptyset}{\{p,f,b,w\}}{\emptyset}{\{k\}}\\
	&\safeCondSynSplitMacro{\DeltaKiwi}{\DeltaKiwi}{\emptyset}{\{p,f,k,b\}}{\emptyset}{\{w\}}\\
	&\CondSynSplitMacro{\DeltaKiwi}{\DeltaKiwi}{\{(\ol{f}|p)\}}{\{k,b,w\}}{\emptyset}{\{p,f\}}\\
	&\CondSynSplitMacro{\DeltaKiwi}{\DeltaKiwi}{\emptyset}{\{f,b,w\}}{\emptyset}{\{p,k\}}\\
	&\CondSynSplitMacro{\DeltaKiwi}{\DeltaKiwi}{\{(b|p)\}}{\{f,k,w\}}{\emptyset}{\{p,b\}}\\
	&\CondSynSplitMacro{\DeltaKiwi}{\DeltaKiwi}{\emptyset}{\{f,k,b\}}{\emptyset}{\{p,w\}}\\
	&\CondSynSplitMacro{\DeltaKiwi}{\DeltaKiwi}{\{(\ol{f}|k)\}}{\{p,b,w\}}{\emptyset}{\{f,k\}}\\
	&\CondSynSplitMacro{\DeltaKiwi}{\DeltaKiwi}{\{(\ol{w}|k)\}}{\{p,f,b\}}{\emptyset}{\{k,w\}}\\
	&\genSafeCondSynSplitMacro{\DeltaKiwi}{\DeltaKiwi}{\{(w|b)\}}{\{p,f,k\}}{\emptyset}{\{b,w\}}\\
	&\safeCondSynSplitMacro{\DeltaKiwi}{\DeltaKiwi}{\emptyset}{\{p,k,b\}}{\emptyset}{\{f,w\}}\\
	&\genSafeCondSynSplitMacro{\DeltaKiwi}{\DeltaKiwi}{\{(f|b)\}}{\{p,k,w\}}{\emptyset}{\{f,b\}}\\
	&\CondSynSplitMacro{\DeltaKiwi}{\DeltaKiwi}{\{(b|k)\}}{\{p,f,w\}}{\emptyset}{\{k,b\}}
	\\
	&\boxed{\genSafeCondSynSplitMacro{\DeltaKiwi}{\{(f|b),(\ol{f}|p),(b|p)\}}{\{(f|b),(w|b),(\ol{f}|k),(b|k),(\ol{w}|k)\}}{\{p\}}{\{k,w\}}{\{f,b\}}}\\
	&\CondSynSplitMacro{\DeltaKiwi}{\{(b|k),(f|b),(b|p),(\ol{f}|p),(\ol{f}|k)\}}{\{(b|k),(w|b), (\ol{w}|k)\}}{\{p,f\}}{\{w\}}{\{k,b\}}\\
	&\CondSynSplitMacro{\DeltaKiwi}{\DeltaKiwi}{\{(\ol{f}|k),(\ol{f}|p)\}}{\{b,w\}}{\emptyset}{\{p,f,k\}}\\
	&\genSafeCondSynSplitMacro{\DeltaKiwi}{\DeltaKiwi}{\{(f|b),(b|p),(\ol{f}|p)\}}{\{k,w\}}{\emptyset}{\{p,f,b\}}\\
	&\CondSynSplitMacro{\DeltaKiwi}{\DeltaKiwi}{\{(\ol{f}|p)\}}{\{k,b\}}{\emptyset}{\{p,f,w\}}\\
	&\CondSynSplitMacro{\DeltaKiwi}{\DeltaKiwi}{\{(b|p),(b|k)\}}{\{f,w\}}{\emptyset}{\{p,k,b\}}\\
	&\CondSynSplitMacro{\DeltaKiwi}{\{(b|p),(b|k),(f|b),(\ol{f}|p),(\ol{f}|k)\}}{\{(b|p),(b|k),(w|b),(\ol{w}|k)\}}{\{f\}}{\{w\}}{\{p,k,b\}}\\
	&\CondSynSplitMacro{\DeltaKiwi}{\DeltaKiwi}{\{(\ol{w}|k)\}}{\{f,b\}}{\emptyset}{\{p,k,w\}}\\
	&\CondSynSplitMacro{\DeltaKiwi}{\DeltaKiwi}{\{(b|p),(w|b)\}}{\{f,k\}}{\emptyset}{\{p,b,w\}}\\
	&\CondSynSplitMacro{\DeltaKiwi}{\DeltaKiwi}{\{(\ol{f}|k),(b|k),(f|b)\}}{\{p,w\}}{\emptyset}{\{f,k,b\}}\\
	&\CondSynSplitMacro{\DeltaKiwi}{\{(\ol{f}|k),(b|k),(f|b),(b|p),(\ol{f}|p)\}}{\{(\ol{f}|k),(b|k),(f|b),(w|b),(\ol{w}|k)\}}{\{p\}}{\{w\}}{\{f,k,b\}}\\
	&\CondSynSplitMacro{\DeltaKiwi}{\DeltaKiwi}{\{(\ol{f}|k),(\ol{w}|k)\}}{\{p,b\}}{\emptyset}{\{f,k,w\}}\\
	&\boxed{\genSafeCondSynSplitMacro{\DeltaKiwi}{\{(f|b),(w|b),(\ol{f}|p),(b|p)\}}{\{(f|b),(w|b),(\ol{f}|k),(b|k),(\ol{w}|k)\}}{\{p\}}{\{k\}}{\{f,b,w\}}}\\
	&\CondSynSplitMacro{\DeltaKiwi}{\DeltaKiwi}{\{(\ol{w}|k),(b|k),(w|b)\}}{\{p,f\}}{\emptyset}{\{k,b,w\}}\\
	&\CondSynSplitMacro{\DeltaKiwi}{\DeltaKiwi}{\{(\ol{f}|k),(b|k),(f|b),(b|p),(\ol{f}|p)\}}{\{w\}}{\emptyset}{\{p,f,k,b\}}\\
	&\CondSynSplitMacro{\DeltaKiwi}{\DeltaKiwi}{\{(\ol{f}|k),(\ol{w}|k),(\ol{f}|p)\}}{\{b\}}{\emptyset}{\{p,f,k,w\}}\\
	&\genSafeCondSynSplitMacro{\DeltaKiwi}{\DeltaKiwi}{\{(w|b),(f|b),(b|p),(\ol{f}|p)\}}{\{k\}}{\emptyset}{\{p,f,b,w\}}\\
	&\CondSynSplitMacro{\DeltaKiwi}{\DeltaKiwi}{\{(\ol{w}|k),(b|k),(w|b),(b|p)\}}{\{f\}}{\emptyset}{\{p,k,b,w\}}\\
	&\genSafeCondSynSplitMacro{\DeltaKiwi}{\DeltaKiwi}{\{(\ol{f}|k),(b|k),(f|b),(w|b),(\ol{w}|k)\}}{\{p\}}{\emptyset}{\{f,k,b,w\}}
\end{align*}				

\end{document}